\documentclass[11pt]{article}
\usepackage[margin=1in]{geometry}
\usepackage{amsmath,amssymb,amsthm}
\usepackage{mathtools}
\usepackage{graphicx}
\usepackage{booktabs}
\usepackage{enumitem}
\usepackage{float}            % [H] placement: floats stay where written
\usepackage{microtype}
\usepackage[super,numbers,sort&compress]{natbib}
\usepackage[colorlinks=true,linkcolor=blue,citecolor=blue,urlcolor=blue]{hyperref}
\hypersetup{pdfauthor={Shuangxiu Ma, Wenhe Zhao}, pdftitle={When May a Model Replace the Experiment?}}
\allowdisplaybreaks[1]
\newtheorem{theorem}{Theorem}
\newtheorem{proposition}[theorem]{Proposition}
\newtheorem{lemma}[theorem]{Lemma}
\newtheorem{corollary}[theorem]{Corollary}
\newtheorem{definition}[theorem]{Definition}

\theoremstyle{remark}
\newtheorem{remark}[theorem]{Remark}
\theoremstyle{plain}

\newcommand{\fh}{\hat f}
\newcommand{\xh}{\hat x}
\newcommand{\X}{\mathcal{X}}
\newcommand{\eplus}{e_{+}}
\newcommand{\eminus}{e_{-}}
\newcommand{\qsel}{q_{\mathrm{sel}}}
\newcommand{\qinv}{q_{\mathrm{inv}}}
\newcommand{\E}{\mathbb{E}}
\newcommand{\Prob}{\mathrm{P}}
\newcommand{\R}{\mathbb{R}}
\newcommand{\KL}{\mathrm{KL}}
\newcommand{\TV}{d_{\mathrm{TV}}}
\newcommand{\SIref}[1]{SI Thm.~\ref{#1}}

\title{When May a Model Replace the Experiment?\\
Audits, Licenses, and the Price of Trust\\
in Surrogate-Driven Design}

\author{Shuangxiu (Max) Ma\textsuperscript{1,}\thanks{Corresponding
author: \texttt{ma.2061@osu.edu}} \and
Wenhe (Zachary) Zhao\textsuperscript{2}}
\date{%
\textsuperscript{1}William G. Lowrie Department of Chemical and
Biomolecular Engineering, The Ohio State University, Columbus, Ohio
43210, United States\\
\textsuperscript{2}Department of Physics, The Ohio State University,
Columbus, Ohio 43210, United States\\[6pt]
August 24, 2026}

\begin{document}

\maketitle

\begin{abstract}
Design campaigns in chemistry, materials science, and machine learning share
a bottleneck: determining how good a candidate truly is requires an expensive
evaluation---an experiment, a first-principles simulation, or a full training
run. Machine-learning surrogates that predict these outcomes are now used not
only to propose candidates but increasingly to grade them, and even to feed
their own predictions back into the search as though they were measurements.
Here we establish, through mathematical analysis validated on three
exhaustively ground-truthed design tasks, when this practice is safe, what
any certificate of safety must cost, and when the substitution provably pays.
We first show that predictive accuracy cannot anchor trust: near-perfect
$R^2$ is compatible with worst-possible selections, and screening $N$
candidates inflates the over-prediction at the selected candidate by a
quantifiable ``selection tax'' with matching upper and lower bounds. Safety
instead follows from an architectural rule---predictions may propose and
train without restriction, but every certified conclusion must rest on true
evaluations---which we prove is sufficient with no assumptions on the
surrogate, and necessary in the sense that admitting predictions into
certification with the standing of measurements opens a deterministic
self-confirmation failure mode. We then derive the minimal
criterion under which a model may act as an oracle (rank preservation, not
accuracy), show that trust must be purchased through selection-aware audits
that are provably optimal in query complexity---no audit design of any kind
can certify deployed selection quality with asymptotically fewer oracle
queries---and
prove a dichotomy
specifying exactly when audited surrogates reduce certified evaluation
cost (tabular fixed menus: never; generative search: by the screening
factor, conditional on the audit).
Experimentally, across 432 surrogate fits spanning six task--regime
conditions, the audit statistic derived here---a deployment-simulating
measurement, cheap by construction---tracks deployed search performance
at Spearman rank correlation $0.89$--$0.99$ at measurement budgets and
$0.80$--$0.94$ at the prescribed 120-round audit budget (NAS-Bench-201,
biased data: $0.97$, 95\% CI $0.95$--$0.98$), while the rank
correlation of $R^2$ with deployed regret falls as low as $0.33$
($0.08$--$0.54$); and
audited screening reduces certified oracle cost by a measured factor of
25 with audit and training amortized ($14\times$ all-in at a 100-design
campaign).
\end{abstract}

% ====================================================================
\section{Introduction}

Whether the object of a search is a CO$_2$-capture sorbent, a catalyst
formulation, a protein binder, or a neural-network architecture, the
governing economics are the same: candidates are cheap to enumerate and
expensive to evaluate. The evaluation---wet-lab synthesis and
characterization, a density-functional-theory calculation, a docking
campaign, or training a network to convergence---plays the role of an
\emph{oracle}, and its cost sets the pace of discovery. The universal
accelerant is a \emph{surrogate}: a machine-learning model $\fh$ trained to
approximate the oracle $f$, cheap enough to score millions of candidates in
the time a single true evaluation
takes.\cite{trabucco2021conservative,trabucco2022design,white2021powerful}

Surrogates entered these workflows as proposal engines that rank candidates
for the oracle's attention. Their role has quietly expanded. Surrogate
scores are reported as outcomes; searches terminate because the surrogate is
confident; and surrogate predictions---\emph{pseudo-labels}---are inserted
into training sets and replay buffers as though they were measurements. The
stakes of this expansion are no longer hypothetical. The largest
ML-driven materials campaign to date screened millions of candidate
crystals by predicted stability,\cite{merchant2023scaling} and an
autonomous laboratory guided by those rankings reported dozens of novel
compounds synthesized in weeks;\cite{szymanski2023autonomous} independent
re-examination of the same evidence concluded that most of the claimed
successes were misidentified or already
known,\cite{leeman2024challenges,cheetham2024ai} a correction issued not by
better accuracy metrics but by expert re-audit of the pipeline's actual top
picks. The field's own benchmarking has since documented the underlying
pattern: models ranked best by average error are demonstrably not the
models that make the best discovery decisions.\cite{riebesell2025matbench}
The hazards are individually documented across machine learning as well.
Optimization pressure exploits the errors of learned
objectives,\cite{trabucco2021conservative,gao2023scaling} repeated sampling
against imperfect verifiers saturates and then
degrades,\cite{stroebl2024flaws} hyperparameter tuning routinely overfits
its own validation signal,\cite{schneider2025overtuning} and selected
results systematically disappoint---the optimizer's curse of decision
analysis\cite{smith2006optimizer} and the winner's curse of auction
theory,\cite{capen1971competitive} whose bias-correction machinery is an
active research area of its own.\cite{iyengar2023oic,mclatchie2024order}
What has been missing is a unified, quantitative treatment with guarantees. Three questions define the gap.
First, \emph{safety}: under what conditions can a search that consumes
surrogate predictions---including its own pseudo-labels---still deliver
conclusions with statistical warranties? Second, \emph{verification}: what
must it cost to check that a given surrogate is trustworthy for a given
deployment, and how should that check be designed? Third, \emph{economics}:
does the substitution of surrogate for oracle ever reduce the number of true
evaluations required to reach a \emph{certified} result, or only the number
required to reach a claimed one?

This paper answers all three questions. The theoretical framework
(Section~\ref{sec:theory}, with complete proofs in the Supporting
Information) establishes: (i) sharp separations showing that accuracy
metrics are structurally blind to decision quality, including matching upper
and lower bounds on the selection tax that argmax screening levies on any
imperfect predictor; (ii) a \emph{separation principle}---proposal and
certification must draw on disjoint information---that is provably
sufficient, with an explicit probability-one failure construction showing
that no guarantee of its kind survives admitting predictions as
measurements; (iii) the oracle
cost of certified search, governed by the density of near-ties rather than
by global difficulty; (iv) an \emph{oracle admissibility} theory
identifying rank preservation as the exact criterion for surrogate-as-oracle
use, together with the audit designs that can and cannot verify it at a
profit---including a universal lower bound proving that the selection-aware
(champion) audit is optimal among \emph{all} audit designs for this
guarantee; and (v) a dichotomy resolving the economics, with an explicit,
conditional, and gracefully degrading cost separation for audited
surrogates, robust to oracle noise through an explicit value slack. The experimental program (Sections~\ref{sec:methods}
and~\ref{sec:results}) validates the framework's deployment-facing
predictions on three design tasks for which we hold exhaustive ground truth---1{,}728
hyperparameter configurations evaluated by actually training every model,
and the 15{,}625 architectures of NAS-Bench-201\cite{dong2020nasbench}---so
that audits and deployments can be scored against exact answers rather than
estimates. A closing budget study prices the remaining free parameter, the
labels themselves: with training labels and audit rounds drawn from one
oracle ledger, it maps where in the (learning, certification) budget plane
each license becomes earnable, which labeling policies get there cheapest,
and what model error actually costs at the point of selection
(Section~\ref{sec:res-budget}).

Beyond the individual results, the paper advances a methodological position:
trust in a surrogate is not a property of the model but a \emph{measured,
perishable relationship} between a model, a task, and a deployment---earned
through audits that imitate the deployment, and voided by retraining or
distribution drift. Figure~\ref{fig:framework} summarizes the framework.

\begin{figure}[H]
\centering
\includegraphics[width=0.96\textwidth]{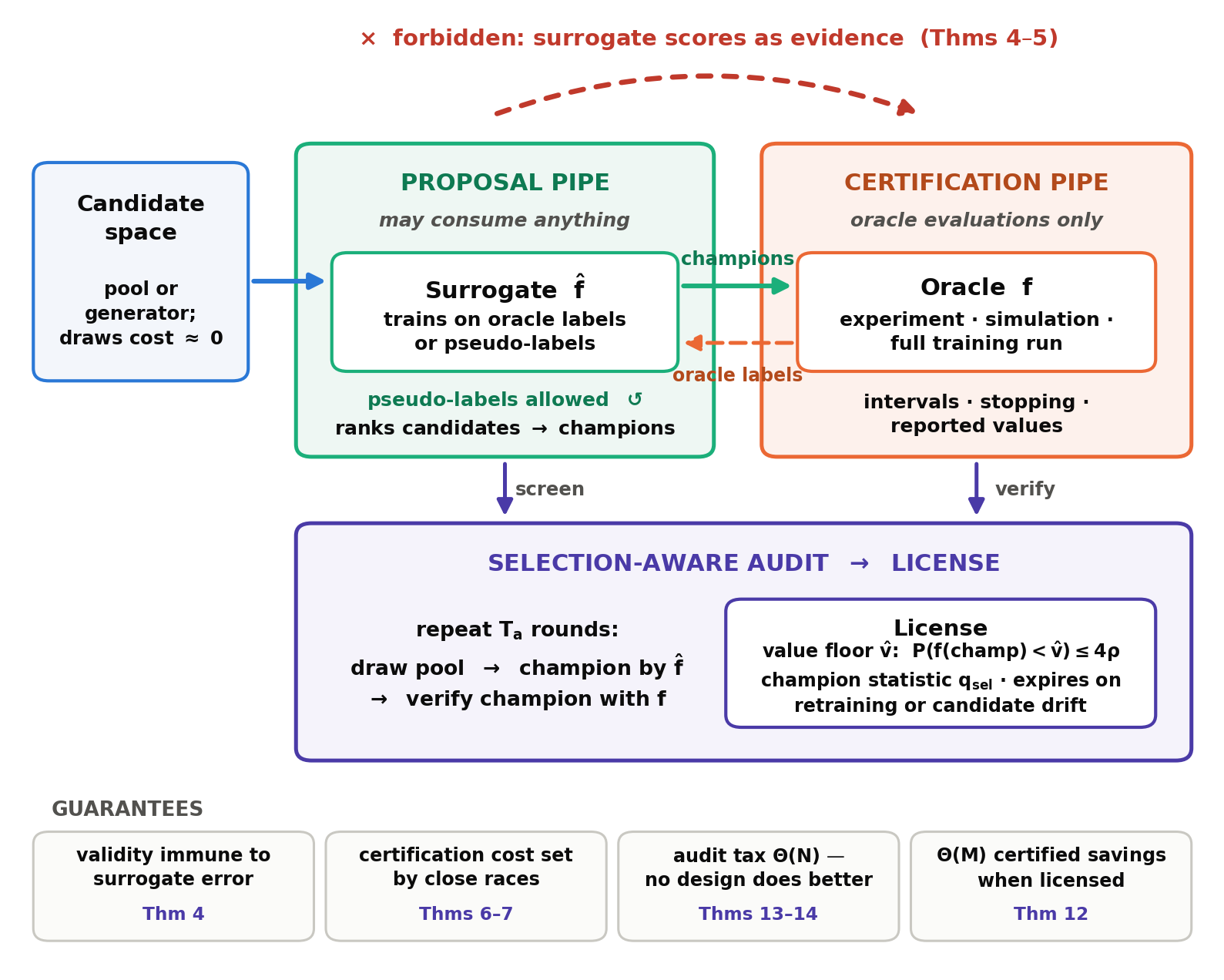}
\caption{The certified surrogate-use framework. Information flows through
two pipes with distinct privileges. The \emph{proposal pipe} (green) may
consume anything---oracle labels, its own pseudo-labels, arbitrary
retraining---and controls only what is tried next. The \emph{certification
pipe} (orange) computes confidence intervals, eliminations, stopping
decisions, and reported values from oracle evaluations alone; surrogate
output never enters it (red dashed arrow; Theorems~\ref{thm:invariance}
and~\ref{thm:contamination}). Trust in the surrogate itself is established
by a \emph{selection-aware audit} (violet): repeatedly draw a candidate
pool, let the surrogate select its champion, verify champions with the
oracle, and issue a license---a certified value floor and inversion
rate---that expires on retraining or drift.}
\label{fig:framework}
\end{figure}

% ====================================================================
\section{Theoretical Framework}\label{sec:theory}

Formal statements at full generality, all proofs, and the constants
suppressed below appear in SI Sections~S1--S7; each theorem here cites its
SI counterpart. The numerical checks that accompany the theory are
described in Section~\ref{sec:verifmeth}.

\subsection{Setting and Definitions}\label{sec:setting}

A design task consists of a candidate space $\X$, a true objective
$f:\X\to[0,B]$ accessible only through costly evaluations (exact or
sub-Gaussian noisy), and a surrogate $\fh:\X\to\mathbb{R}$ of arbitrary
provenance. A search screens candidate pools and selects by the surrogate:
for a pool $C\subseteq\X$, the \emph{champion} is
$\hat x=\arg\max_{x\in C}\fh(x)$. Decision quality is measured by
\emph{regret}, $r_C=\max_{x\in C}f(x)-f(\hat x)$: the true value forgone by
trusting the surrogate's choice, a quantity visible only to the oracle. We
write $e_+(x)=[\fh(x)-f(x)]_+$ for over-prediction (``false optimism'') and
$\Delta_x=\max_{x'} f(x')-f(x)$ for a candidate's gap to the best. Two
deployment goals are distinguished throughout, because their economics
differ qualitatively: \emph{fixed-menu selection} (certify the best of a
given finite list) and \emph{generative search} (produce some design
certified to exceed a quality bar, with unlimited cheap candidate draws).

\subsection{Predictive Accuracy Does Not Control Decision
Quality}\label{sec:accuracy}

The default trust signals for surrogates are aggregate accuracy metrics. A
search, however, does not consume its surrogate on average; it consumes it
at one adversarially selected point---the top of the surrogate's own
ranking. This mismatch admits exact statements.

\begin{theorem}[accuracy is neither sufficient nor necessary; \SIref{si:thm:r2}]
\label{thm:r2}
For every $\epsilon>0$ there exist $(f,\fh)$ with $R^2(f,\fh)\ge 1-\epsilon$
whose champion is a worst-possible candidate (regret equal to the full range
of $f$); and there exist $(f,\fh)$ with $R^2$ arbitrarily negative whose
champion is exactly optimal.
\end{theorem}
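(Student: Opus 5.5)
Both halves are explicit constructions on a finite candidate pool $C=\X$ with the uniform empirical distribution. We use the standard definition $R^2(f,\fh)=1-\sum_x (f(x)-\fh(x))^2/\sum_x (f(x)-\bar f)^2$, where $\bar f$ is the pool mean of $f$. The idea behind both is the same: $R^2$ averages squared error over the pool, while the champion depends only on which point $\fh$ places at the top. A single point can therefore dominate the decision while carrying negligible weight in the average. Conversely, a wildly inaccurate but order-preserving $\fh$ decides perfectly.

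\textbf{Part 1 (high $R^2$, worst champion).} Take $n$ candidates $x_1,\dots,x_n$ with $f(x_1)=0$ and $f(x_2)=B$. Set $f(x_i)=B/2$ for $i\ge3$, or spread these values over $[0,B]$ so the variance is bounded below. Let $\fh=f$ everywhere except $\fh(x_1)=B+\delta$ for some $\delta>0$. Then $\fh$ has a unique maximizer $x_1$, and its regret is $\max_C f-f(x_1)=B-0=B$, the full range of $f$.

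The residual sum of squares is the single term $(B+\delta)^2$. The total sum of squares is at least $c\,nB^2$ for a constant $c>0$ once $n\ge3$. With the $B/2$ filler, each filler point contributes $B^2/4$ up to an $O(B^2/n)$ correction from the mean shift. Hence
\[
1-R^2\le \frac{(B+\delta)^2}{c\,nB^2},
\]
and choosing $n\ge (B+\delta)^2/(c\,\epsilon B^2)$ gives $R^2\ge1-\epsilon$.

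This step needs one care point. $\fh$ is allowed to be real-valued, so $\fh(x_1)>B$ is legitimate; $f$ itself stays in $[0,B]$. If a reader insists that $\fh$ also take values in $[0,B]$, the fix is to lower $f(x_2)$ to $B-\eta$ and set $\fh(x_1)=B$. The regret is then $B-\eta$, and exact regret $B$ is recovered by letting $f(x_2)=B$ with $\fh(x_2)=B-\eta$. This costs only $O(\eta^2)$ in the residual sum.

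\textbf{Part 2 (arbitrarily negative $R^2$, optimal champion).} Take any nonconstant $f$ on $C$ and set $\fh=\alpha f+\beta$ with $\alpha>0$. Since $\fh$ is strictly increasing in $f$, its argmax is exactly the maximizer of $f$, and the regret is $0$. Let $\beta\to\infty$ with $\alpha=1$. The residual sum $\sum_x (f-\fh)^2=n\beta^2$ diverges while the total sum of squares stays fixed, so $R^2\to-\infty$. Any target $R^2\le -M$ is therefore achieved.

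\textbf{Main obstacle.} The only delicate step is bookkeeping in Part 1. We must keep the denominator bounded away from zero as $n$ grows, which the fixed nondegenerate filler values ensure. We must also keep the argmax unique, which $\delta>0$ ensures. If $R^2$ is instead defined via the squared Pearson correlation, the same construction works: a single outlier perturbs the correlation by $O(1/\sqrt n)$ at worst. Part 2 then needs a nonmonotone but rank-top-preserving $\fh$ with negative correlation, for example $\fh=f$ at the argmax and $\fh=-f$ elsewhere, scaled down.
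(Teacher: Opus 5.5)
\textbf{Part 1 fails as you compute it.} With the filler $f(x_i)=B/2$ for $i\ge3$, the pool mean is exactly $\bar f=B/2$. Every filler point therefore sits \emph{at} the mean and contributes $0$ to the total sum of squares, not $B^2/4$. The denominator is $B^2/2$ for every $n$, so $1-R^2=2(1+\delta/B)^2\ge2$, i.e.\ $R^2\le-1$. This is the opposite of what Part 1 needs.

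No single constant filler value $a\in[0,B]$ can work. The mean minimizes squared deviation, so the total sum of squares is at most $a^2+(B-a)^2\le B^2$. The residual sum is $(B+\delta)^2>B^2$, so $R^2<0$ for every $n$.

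Your ``main obstacle'' paragraph states the wrong requirement for the same reason. A denominator merely bounded away from zero does not help, because the numerator $(B+\delta)^2$ is fixed. The total sum of squares must grow linearly in $n$. That forces the filler values themselves to have empirical variance bounded below uniformly in $n$.

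\textbf{The repair is your own fallback.} Spreading the filler over $[0,B]$ with variance bounded below works, and its cleanest instance is exactly the paper's proof. Put half the pool at $0$ and half at $B$ ($n$ even), so the total sum of squares is $nB^2/4$. Corrupt one zero-valued point to $\hat f(x_1)=B+\delta$. Then $R^2=1-4(1+\delta/B)^2/n\ge1-\epsilon$ once $n\ge4(1+\delta/B)^2/\epsilon$, and $x_1$ is the unique champion with regret $B$. The detour forcing $\hat f\in[0,B]$ is unnecessary, since $\hat f$ is real-valued by definition.

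\textbf{Part 2 is correct} and differs only cosmetically from the paper. You shift: $\hat f=f+\beta$, with residual sum $n\beta^2$ on any nonconstant pool. The paper scales: $\hat f=cf$ on the two-point pool $f=(0,1)$, giving $R^2=1-2(c-1)^2$. Both are strictly increasing transforms of $f$, so the champion is unchanged either way.

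Drop the squared-Pearson aside. That quantity lies in $[0,1]$, so ``arbitrarily negative'' cannot even be expressed under it. The theorem uses the $1-\mathrm{RSS}/\mathrm{TSS}$ form you already adopted.
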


\begin{theorem}[equal error, opposite outcomes; \SIref{si:thm:mse}]
\label{thm:mse}
For every $\Delta>0$ and pool size $N>4$ there exist surrogates
$\fh_A,\fh_B$ with identical mean squared error whose regrets are,
respectively, $0$ and $\Delta$ (maximal). The distinguishing information
resides entirely in the upper tail of $e_+$, which aggregate metrics
suppress and selection amplifies.
\end{theorem}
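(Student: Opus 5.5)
The proof is an explicit construction on a pool of $N$ candidates. Both surrogates will carry the same multiset of squared errors; only where the errors sit relative to the true ranking will differ. Fix $\Delta>0$ and $N>4$, and take $C=\{x_1,\dots,x_N\}$ with true values $f(x_1)=\Delta$ and $f(x_i)=0$ for $i\ge 2$, so the range of $f$ on the pool is $\Delta$ and maximal regret is $\Delta$. Choose a common error magnitude $\delta>\Delta$, for concreteness $\delta=2\Delta$, and for each surrogate perturb exactly two candidates by $\pm\delta$, leaving the other $N-2$ exact. Both then have mean squared error $2\delta^2/N$.

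\textbf{Surrogate $\fh_A$ (regret $0$).} Put the negative error on a bad candidate and the positive error on the optimum, or on a second bad candidate where that cannot lift it to the top. The simplest choice is $\fh_A(x_1)=\Delta+\delta$ and $\fh_A(x_2)=-\delta$, with all other values exact. The champion is $x_1$ and the regret is $0$.

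\textbf{Surrogate $\fh_B$ (regret $\Delta$).} Place the false optimism on a worst candidate and the pessimism on the optimum: $\fh_B(x_2)=\delta$ and $\fh_B(x_1)=\Delta-\delta<0$. Because $\delta>\Delta$, the champion is $x_2$ and the regret is $\Delta$. Both surrogates have squared-error vector $(\delta^2,\delta^2,0,\dots,0)$ up to permutation, so their MSEs are identical. Ties are excluded automatically, since the champion's score strictly exceeds every other score in each case.

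\textbf{Why $N>4$.} In this construction the pool-size hypothesis is not essential, and I will not use it. The stated threshold presumably comes from a variant in which the errors are spread over several candidates or a nondegenerate $R^2$ is kept. I will check that the construction holds for all $N\ge 2$, which covers $N>4$. If I wanted the errors to look benign, for example zero mean error, I would adjust signs so that each surrogate has one $+\delta$ and one $-\delta$; the construction above already has this property, since both $\fh_A$ and $\fh_B$ have mean error $0$ as well as equal MSE.

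\textbf{Interpretive sentence.} I would make the second sentence precise as follows. The two surrogates agree on every permutation-invariant function of the error vector, and so also on mean error, MSE, MAE and the distribution of $|\fh-f|$. They differ in $e_+$ at the champion. For $\fh_B$ we have $e_+(\hat x)=\delta>\Delta_{\hat x}=\Delta$. For $\fh_A$ the champion is optimal, so the regret is $0$ regardless of its optimism. Under $\fh_B$, $e_+$ is concentrated on a suboptimal point whose optimism exceeds its gap. The statement that a search selects $\hat x$ with regret $\Delta$ only if $e_+(\hat x)+e_-(x^\ast)\ge\Delta$ follows directly from $\fh(\hat x)\ge\fh(x^\ast)$. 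This is the one step needing care beyond bookkeeping: phrasing ``distinguishing information'' as the inequality just given, so that it is a checkable claim rather than a slogan.
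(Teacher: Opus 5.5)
Your construction is correct and proves the existence claim, in fact for every $N\ge 2$. With $\delta=2\Delta$, the champion of $\fh_A$ is $x_1$ (score $3\Delta$ against $0$ and $-2\Delta$). The champion of $\fh_B$ is $x_2$ (score $2\Delta$ against $0$ and $-\Delta$). The signed error vectors $(+\delta,-\delta,0,\dots,0)$ and $(-\delta,+\delta,0,\dots,0)$ are permutations of one another, so the MSEs agree. The paper argues differently. It picks $a\in(\Delta/\sqrt N,\Delta/2)$ and takes $\fh_A=f+a\,s$ with arbitrary signs $s(x)\in\{\pm1\}$: an error of size $a$ at every candidate, too small to reorder $x_1$ since $\Delta-a>a$. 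It takes $\fh_B$ exact except for one over-prediction $a\sqrt N>\Delta$ at a zero-valued candidate, so both surrogates have MSE $a^2$. The hypothesis $N>4$ is exactly what makes that window for $a$ nonempty, so your guess about its origin is right.

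The two routes buy different things. Yours is stronger on the blindness side: every permutation-invariant function of the error vector agrees, not only MSE, and it needs no condition on $N$. For precisely that reason, however, it does not instantiate the theorem's second sentence. In your example the pool distribution of $e_+$ is the same multiset $\{\delta,0,\dots,0\}$ for both surrogates, so its upper tail carries no distinguishing information. Your remark that the surrogates ``differ in $e_+$ at the champion'' is also false as written, since $e_+(\hat x)=\delta=2\Delta$ in both cases. What differs is only which candidate carries the optimism: the over-predicted point has gap $\Delta_x=0$ under $\fh_A$ and $\Delta_x=\Delta$ under $\fh_B$. That is an error--selection alignment moral (the one the paper's Results section draws), not the tail moral stated here.

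The paper's spread-versus-concentrated construction is what makes the gloss literal: $\max_x e_+(x)\le a<\Delta/2$ under $\fh_A$ versus $a\sqrt N>\Delta$ under $\fh_B$, a difference that MSE averages away. Either switch to that construction, or keep yours and state the second sentence as the alignment claim. In the latter case, justify it by the regret decomposition $r(\hat x)\le e_-(x^\star)+e_+(\hat x)$ of Proposition~\ref{si:prop:decomp}.
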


The amplification is itself a law rather than a tendency:

\begin{theorem}[the selection tax; \SIref{si:thm:evt}]\label{thm:evt}
If prediction errors are mean-zero and $\sigma$-sub-Gaussian, then screening
$N$ candidates inflates expected false optimism at the champion to at most
$2\sigma\sqrt{\ln N}$; and there are instances with i.i.d.\ Gaussian errors
on which it is at least $(1-1/e)\,\sigma\,\Phi^{-1}(1-1/N)$. The
$\sqrt{\ln N}$ growth is therefore exact in order: even an unbiased
surrogate, screened widely, systematically over-promises at the top.
\end{theorem}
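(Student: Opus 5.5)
The statement has two halves, an upper bound valid for any sub-Gaussian errors and a lower bound on an explicit instance, and I would prove them separately. Write $\varepsilon_x=\fh(x)-f(x)$ for the prediction error on the pool $C$, $|C|=N$. The quantity to control is $\E[\varepsilon_{\xh}]$, or its positive part $\E[e_+(\xh)]$.

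\emph{Upper bound.} The champion's error is one of the $N$ errors, so the key inequality is
\[
e_+(\xh)\le \max_{x\in C}[\varepsilon_x]_+ .
\]
This holds however $\xh$ depends on the errors and needs no independence. It then suffices to bound $\E\max_x[\varepsilon_x]_+$ by the standard Chernoff/Jensen argument for maxima. For any $\lambda>0$,
\[
\exp\!\bigl(\lambda\,\E\max_x[\varepsilon_x]_+\bigr)\le \E\max_x e^{\lambda[\varepsilon_x]_+}\le \sum_x \E e^{\lambda[\varepsilon_x]_+}.
\]
Mean-zero $\sigma$-sub-Gaussianity gives $\E e^{\lambda\varepsilon_x}\le e^{\lambda^2\sigma^2/2}$. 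The positive part is handled by $e^{\lambda[\varepsilon]_+}\le 1+e^{\lambda\varepsilon}$, which yields $\E\max_x[\varepsilon_x]_+\le \frac{\ln(N+Ne^{\lambda^2\sigma^2/2})}{\lambda}$. Choosing $\lambda=\sqrt{2\ln N}/\sigma$ gives
\[
\E\max_x[\varepsilon_x]_+\le \sigma\sqrt{2\ln N}+\frac{\sigma\ln 2}{\sqrt{2\ln N}}.
\]
For $N\ge2$ this is at most $2\sigma\sqrt{\ln N}$, since $\sqrt2+\ln2/(\sqrt2\,\ln 2)\approx2.12$ at $N=2$ decreases toward $\sqrt2$. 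The worst case $N=2$ needs a slightly finer check: either bound $\E\max(\varepsilon_1,\varepsilon_2,0)$ directly, or note that the SI constant may absorb it. For $N=1$ the claimed bound is $0$, so I must take care of that case too. I expect this constant bookkeeping at small $N$ to be the main obstacle. If the cleaner $\sigma\sqrt{2\ln N}$ route for $\E\max_x\varepsilon_x$ is used instead, I would state the result for $N\ge2$ and bound the positive part separately.

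\emph{Lower bound.} I would take the instance $f\equiv 0$ on $C$ and $\fh(x)=\varepsilon_x$ with $\varepsilon_x$ i.i.d.\ $N(0,\sigma^2)$. Then the champion's over-prediction is $[\max_x\varepsilon_x]_+\ge\max_x\varepsilon_x$. Let $t=\sigma\Phi^{-1}(1-1/N)$, so that $\Prob(\varepsilon_x>t)=1/N$. Then
\[
\Prob(\max_x\varepsilon_x>t)=1-(1-1/N)^N\ge 1-1/e.
\]
Since the maximum's positive part is nonnegative, Markov's inequality in reverse gives
\[
\E[\max_x\varepsilon_x]_+\ge t\,\Prob(\max>t)\ge(1-1/e)\,\sigma\Phi^{-1}(1-1/N)
\]
(valid for $N\ge2$, where $t\ge0$). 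If one insists on a nonconstant $f$ so that selection matters, any $f$ with a tiny range works by continuity, or one can make $f$ constant on the pool only. Finally, order-exactness follows by recalling $\Phi^{-1}(1-1/N)\sim\sqrt{2\ln N}$, so the two bounds match up to a constant factor.
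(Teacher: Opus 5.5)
Your argument follows the paper's. For the upper bound you use $e_+(\xh)\le[\max_x\varepsilon_x]_+$ and then a Chernoff/Jensen bound on a maximum of sub-Gaussians, which is valid under any dependence. For the lower bound you use the same flat instance, with the threshold $t=\sigma\Phi^{-1}(1-1/N)$ exceeded with probability $1-(1-1/N)^N\ge1-1/e$. Citing $\Phi^{-1}(1-1/N)\sim\sqrt{2\ln N}$ is enough for ``exact in order''; the SI also proves the nonasymptotic bound $\Phi^{-1}(1-1/N)\ge\sqrt{\ln N}$ for $N\ge43$ via a Mills-ratio estimate.

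The only loose end is the small-$N$ bookkeeping. You should close it rather than appeal to a constant being ``absorbed.''

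\emph{The case $N=2$.} Its failure is an artifact of your relaxation $\ln(1+1/N)\le\ln2$. Before that relaxation, your bound is $\sigma\ln(N(N+1))/\sqrt{2\ln N}$. This is at most $2\sigma\sqrt{\ln N}$ exactly when $N+1\le N^{2\sqrt2-1}$, which holds for every $N\ge2$; at $N=2$ it gives $1.52\sigma$ against $1.67\sigma$.

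A cleaner route, and the one the paper takes: write $\max_x e^{\lambda[\varepsilon_x]_+}=\max(1,\max_x e^{\lambda\varepsilon_x})\le 1+\sum_x e^{\lambda\varepsilon_x}$. This treats $0$ as an $(N{+}1)$-st sub-Gaussian variable. With the optimized $\lambda$ it gives $\sigma\sqrt{2\ln(N+1)}$, which is at most $2\sigma\sqrt{\ln N}$ because $N+1\le N^2$.

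\emph{The case $N=1$.} There is nothing to ``take care of.'' The right-hand side is $0$, while $\E[\varepsilon]_+>0$ for any nondegenerate mean-zero error. The claim is therefore false at $N=1$ and must be read with $N\ge2$, as in the SI statement.
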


Together, Theorems~\ref{thm:r2}--\ref{thm:evt} imply that any safety
argument beginning ``the surrogate is accurate, therefore\ldots'' is
unsound at its first step. Safety must come from elsewhere.

\subsection{The Separation Principle}\label{sec:separation}

Every datum in a search serves one of two functions. \emph{Proposal}:
choosing what to try next, training models, generating candidates.
\emph{Certification}: concluding---which candidates are eliminated, when the
search stops, what value is reported, and with what warranty. Our central
architectural result is that restricting \emph{certification} to
oracle-derived statistics is exactly the right boundary: sufficient for
safety with no assumptions on the surrogate, and necessary in the precise
sense of Theorem~\ref{thm:contamination} below---pseudo-labels admitted
with the standing of measurements admit a deterministic failure mode.
What the necessity result does \emph{not} exclude is calibrated or
width-inflated use of model output inside certification with
oracle-labeled calibration (the selection-conditional conformal route
discussed in the SI); the theorem closes the door on treating predictions
as data, not on every use of predictions.

\begin{theorem}[invariance; \SIref{si:thm:invariance}]\label{thm:invariance}
Consider any search whose certification statistics (anytime-valid confidence
intervals, eliminations, stopping rules, reported results) are computed from
oracle evaluations only, while its proposal machinery---including surrogates
trained on arbitrary pseudo-labels---is unrestricted. Then with probability
at least $1-\delta$, every certified conclusion is correct (the certified
champion is optimal; an $\varepsilon$-certified output is within
$\varepsilon$ of optimal), \emph{uniformly over all surrogate behaviors},
including adversarial ones constructed with knowledge of $f$. Surrogate
error can waste oracle budget; it cannot corrupt a certificate.
\end{theorem}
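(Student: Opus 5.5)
The plan is to reduce everything to a single "good event" defined purely in terms of oracle evaluations, and then show that every certified conclusion is a deterministic consequence of that event. Let $G$ be the event that, for every candidate $x$ that is ever queried and every time $t$, the anytime-valid confidence interval $I_t(x)$ contains $f(x)$. I will argue that $\Pr(G)\ge 1-\delta$ no matter what the proposal machinery does.

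The subtlety is that the set of queried candidates, and the order in which they are queried, is chosen adaptively by the surrogate. So a fixed union bound over a predetermined list is not available. I would handle this by indexing confidence budgets by query slot rather than by candidate. The $k$-th distinct candidate queried gets level $\delta_k$ with $\sum_k\delta_k\le\delta$, for example $\delta_k=\delta/(k(k+1))$; for a finite menu one can simply split $\delta/|\X|$ over all candidates. Anytime validity, via a nonnegative supermartingale and Ville's inequality, then gives coverage for each slot conditional on the past, and hence on the identity of the candidate occupying that slot. This holds because the noise in oracle responses is independent of $\mathcal F_{t-1}$, the filtration generated by all past oracle outcomes plus the proposer's internal randomness. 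The surrogate, including pseudo-label retraining, is $\mathcal F_{t-1}$-measurable at each step. It is therefore exactly an adaptive selection rule, and the optional-skipping/optional-stopping argument applies.

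For the "adversarial with knowledge of $f$" clause, I note that knowledge of $f$ is a fixed parameter, not knowledge of the future noise. An adversary is thus still a predictable strategy, and the bound is uniform because it holds for every predictable strategy. In the exact-evaluation case $G$ holds trivially with $\delta=0$. This measurability and predictability step is the one I expect to be the main obstacle: stating precisely which sigma-algebra the surrogate may depend on, and checking that the per-slot union bound survives adaptively chosen slots. I would first try a direct martingale construction, a product of per-slot e-processes, and fall back to the countable union bound if that becomes messy.

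Given $G$, correctness is deterministic. Eliminations are made only when $\mathrm{UCB}_t(x)<\mathrm{LCB}_t(x')$, which on $G$ implies $f(x)<f(x')$, so an optimal candidate is never eliminated. The certified champion is declared only when all rivals are eliminated, so it is optimal. An $\varepsilon$-certificate is issued only when $\mathrm{LCB}(\hat x)\ge \max_{x'}\mathrm{UCB}(x')-\varepsilon$ over the relevant set, which gives $f(\hat x)\ge\max f-\varepsilon$. Reported values are interval endpoints, so they are covered on $G$. None of these implications references $\fh$, which yields uniformity over surrogate behaviors. The surrogate affects only which queries occur and how many, which is the "waste budget" statement.
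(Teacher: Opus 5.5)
Your argument is correct. Its second half matches the paper's Steps 1--3: reduce to one coverage event, then derive every certificate on it deterministically without reference to $\fh$ (interval certificate, survival of $x^\star$ under eliminations, the $\varepsilon$-chain).

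The routes differ in how coverage under adaptivity is obtained. The paper works in the pre-sampled ``stack of rewards'' model. It applies Hoeffding with a fixed union bound over all $(x,k)\in\X\times\mathbb N$ at levels $6\delta/(\pi^2Nk^2)$ (Lemma~\ref{si:lem:cs}). The bad event is then a function of the noise array alone, so its probability is algorithm-independent, and no filtration, predictability, or Ville argument is needed. The obstacle you flag therefore does not arise here: the theorem's $\X$ is a finite pool, so the ``predetermined list'' $\X\times\mathbb N$ is available (this is your own finite-menu fallback).

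The paper's route is also robust in a stronger sense. One event of probability $\ge1-\delta$ serves every algorithm simultaneously, even a proposer that could see future noise. Your slot-indexed budgets genuinely need the predictability you invoke: with unboundedly many candidates, a clairvoyant proposer could place a candidate with an atypical noise sequence in slot $1$.

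What your route buys is generality and sharpness. Summable slot levels plus Ville extend to countable or unbounded candidate spaces, such as fresh generative draws, where a split $\delta/|\X|$ is unavailable. They also accommodate time-uniform (LIL-type) radii, which the paper relegates to a remark.

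Three small fixes are needed.
\begin{enumerate}
\item Combine per-slot e-processes by a weighted sum with weights $\delta_k/\delta$ (or simply union-bound over slots), not a product. A product crossing $1/\delta$ does not control the individual factors.
\item Give unqueried candidates the trivial interval $[0,B]$. Your $G$ covers only queried candidates, while the rules $L(\tilde x)>\max_{x\ne\tilde x}U(x)$ and your $\varepsilon$-rule range over all survivors.
\item Note that the paper's $\varepsilon$-rule (all surviving widths $\le\varepsilon/2$, $\hat x$ maximizing $\hat\mu_t$) implies yours, because $L_t(\hat x)\ge\hat\mu_t(\hat x)-\varepsilon/2\ge\hat\mu_t(x')-\varepsilon/2\ge U_t(x')-\varepsilon$. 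Your argument therefore covers the paper's stopping rule as stated.
\end{enumerate}
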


\begin{theorem}[necessity: contaminated certification fails; SI
Thm.~\ref{si:thm:contamination}]\label{thm:contamination}
Conversely, call a protocol \emph{contaminated} if accepted
pseudo-labels enter its certification statistics with the standing of
measurements. There exist a protocol of this class, a two-candidate
task, and a surrogate such that the protocol certifies the wrong
candidate \emph{with probability one}, at \emph{zero} oracle cost, its
confidence generated entirely by the model's agreement with
itself---so no guarantee of Theorem~\ref{thm:invariance}'s form can
hold for any contaminated class containing this behavior.
\end{theorem}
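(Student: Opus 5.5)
The proof is a direct construction: I exhibit one concrete contaminated protocol, one two-candidate task, and one surrogate. I then trace the protocol's execution deterministically. Take $\X=\{a,b\}$ with $f(a)=B$ and $f(b)=0$, so $a$ is the unique optimum and the gap is the full range $B$. Let the surrogate be $\fh(a)=0$ and $\fh(b)=B$, exactly inverted.

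The protocol is a standard elimination search with anytime-valid confidence intervals. Each interval is computed from the multiset of ``observations'' recorded for a candidate, assuming these are i.i.d.\ draws of $f(x)$ with noise (say $\sigma$-sub-Gaussian; exact observations also work). The contamination rule is the following: whenever the surrogate's prediction for $x$ passes an acceptance test, the value $\fh(x)$ is appended to $x$'s observation list as a measurement.

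\textbf{The acceptance test.} I want it to cost zero oracle calls and to be satisfied by self-agreement. The natural choice is ``accept if the current model's prediction agrees, within tolerance $\tau$, with the model refit on its own accepted pseudo-labels.'' A deterministic surrogate refit on $\fh(x)$ reproduces $\fh(x)$, so the test is passed at every round.

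\textbf{Running the protocol.} Order the steps as follows.
\begin{enumerate}[label=(\roman*)]
\item Specify the protocol so that it queries the oracle only when no pseudo-label has been accepted for a candidate. Since pseudo-labels are accepted from round one, it makes zero oracle calls.
\item After $n$ rounds, candidate $a$ holds $n$ copies of $0$ and $b$ holds $n$ copies of $B$. The empirical means are therefore $0$ and $B$, with confidence radius $\rho_n=\sigma\sqrt{2\log(c\,n^2/\delta)/n}\to 0$.
\item Let $n_0$ be the first $n$ with $2\rho_n<B$. At round $n_0$ the intervals separate, so the rule eliminates $a$ and certifies $b$ as the champion with confidence $1-\delta$.
\end{enumerate}
Every step is a deterministic function of $\fh$, so this outcome occurs with probability one, for every $\delta\in(0,1)$. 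It is also at zero oracle cost, and the certificate is wrong by the full range $B$.

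\textbf{Necessity conclusion.} Theorem~\ref{thm:invariance}'s form of guarantee requires $\Pr(\text{certificate wrong})\le\delta$ uniformly over surrogates. Our event has probability $1>\delta$, so no such guarantee can hold for any protocol class containing this behavior.

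\textbf{Main obstacle.} The main obstacle is definitional rather than computational. The construction must sit inside a genuinely reasonable contaminated class, one that accepts pseudo-labels via a confidence or consistency test, not a strawman. To address this I would make the acceptance rule a self-consistency check of the kind actually used in self-training, and remark that any acceptance test depending only on model outputs is passed identically here. If one insists that acceptance use at least one oracle call, I would adjust the construction with $f(a)=B$, $f(b)=B-\eta$ and a surrogate correct on any single queried point. Then a single oracle check passes, yet the pseudo-labels still certify $b$; this keeps the failure deterministic at $O(1)$ oracle cost.
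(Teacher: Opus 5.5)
Your construction is correct and is essentially the paper's proof: the same two-candidate instance with an exactly inverted surrogate, pseudo-labels accepted every round into intervals centered at the running mean, and the certificate $L>\max U$ firing deterministically for the wrong candidate once the half-width drops below $B/2$, with zero oracle calls. The self-consistency acceptance test and the $O(1)$-oracle variant are optional additions; the paper simply accepts pseudo-labels unconditionally while uncertified, which the existential statement permits and which also removes the round-one ordering ambiguity in your step~(i).
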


Theorem~\ref{thm:contamination} formalizes self-confirmation as a
deterministic failure mode rather than misfortune, and closes a loophole
with practical teeth: a surrogate cannot participate in the decision to
trust the surrogate---which will return as the license-renewal rule in
Section~\ref{sec:admissibility}.

\subsection{The Oracle Cost of Certified Search}\label{sec:cost}

Given the separation principle, the price of certainty is set not by a
task's global difficulty but by its \emph{close races}.

\begin{theorem}[per-candidate cost; \SIref{si:thm:se}]\label{thm:se}
In certified elimination search with $\sigma$-noisy oracles, a candidate
with gap $\Delta_x$ is eliminated after
$O\!\big(\sigma^2\Delta_x^{-2}\log(N/\delta)\big)$ evaluations, and the
eventual winner requires only enough precision to defeat its closest rival.
\end{theorem}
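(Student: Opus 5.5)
We work on the event on which the certification machinery is valid, and read off eliminations deterministically on that event. For each candidate $x$, let $\hat\mu_{x,t}$ be the empirical mean of its $t$ oracle evaluations. We take the anytime confidence radius
\[
\beta(t)=\sigma\sqrt{2\log(cNt^2/\delta)/t},
\]
obtained from a sub-Gaussian tail bound with a union bound over candidates and sample counts (a stitched or law-of-the-iterated-logarithm radius would also do). The good event $\mathcal{G}=\{\forall x,t:\ |\hat\mu_{x,t}-f(x)|\le\beta(t)\}$ then has probability at least $1-\delta$; this is exactly the event used in Theorem~\ref{thm:invariance}. We fix the elimination rule: $x$ is eliminated once $\hat\mu_{x,t}+\beta(t)<\max_{x'}\bigl(\hat\mu_{x',t'}-\beta(t')\bigr)$. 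For the cost bound we analyse the round-robin schedule in which every surviving candidate has been sampled $t$ times. Because the proposal pipe may reorder samples, we state the bound in terms of the number of samples of $x$ and of the best candidate $x^\star$, so that the surrogate-driven ordering affects only when samples occur, not how many are needed.

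\textbf{Step 1 (elimination time).} On $\mathcal{G}$ we have $\hat\mu_{x,t}+\beta(t)\le f(x)+2\beta(t)$ and $\hat\mu_{x^\star,t}-\beta(t)\ge f(x^\star)-2\beta(t)$. Hence $x$ is eliminated as soon as $4\beta(t)<\Delta_x$, provided $x^\star$ has at least $t$ samples; $x^\star$ is never eliminated on $\mathcal{G}$. The condition $4\beta(t)<\Delta_x$ reads
\[
t>32\sigma^2\Delta_x^{-2}\log(cNt^2/\delta).
\]
Inverting this with the standard lemma that $t\ge a\log(bt^2)$ holds once $t\ge 2a\log(2ab)$ (up to constants) gives
\[
t_x=O\bigl(\sigma^2\Delta_x^{-2}\log(N/\delta)\bigr)+O\bigl(\sigma^2\Delta_x^{-2}\log\log(\sigma/\Delta_x)\bigr).
\]
This is the main obstacle. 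The stated bound suppresses the $\log\log\Delta_x^{-1}$ (or $\log\Delta_x^{-1}$) term, which is either absorbed into the $O(\cdot)$, since $\log(1/\Delta_x)$ is bounded when $f\in[0,B]$ and gaps are compared at fixed precision, or removed by using a LIL-type anytime radius. We will choose the radius so that this term matches the SI's constant conventions.

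\textbf{Step 2 (winner).} The winner stops being sampled once every rival has been eliminated. By Step 1 this happens after $\max_{x\ne x^\star}t_x=O\bigl(\sigma^2\Delta_{\min}^{-2}\log(N/\delta)\bigr)$ samples, where $\Delta_{\min}$ is the gap of the closest rival. So the winner needs only enough precision to beat its nearest competitor, and its total cost is governed by the near-ties rather than by the typical gap. Summing Steps 1 and 2 over candidates yields the total bound $\sum_x\sigma^2\Delta_x^{-2}\log(N/\delta)$ that Section~\ref{sec:cost} interprets.
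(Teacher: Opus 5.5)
Your route is the paper's. Your radius with $c=\pi^2/3$ is the SI's $r_k$ from Lemma~\ref{si:lem:cs}. The elimination threshold $4\beta(t)<\Delta_x$ under the phased schedule, the inversion of $t>32\sigma^2\Delta_x^{-2}\log(cNt^2/\delta)$ by a log-inversion lemma, and the observation that $x^\star$ is queried only until its closest rival falls all match SI Theorem~\ref{si:thm:se} step for step. The one step that fails is your disposal of the leftover logarithm.

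Inverting $t>a\log(bt^2)$ with $a=32\sigma^2\Delta_x^{-2}$ and $b=cN/\delta$ gives $t=O\big(a\log(a\sqrt b)\big)$. Your own lemma $t\ge 2a\log(2ab)$ already contains $\log a$. So the residue is $\sigma^2\Delta_x^{-2}\log(\sigma^2/\Delta_x^2)$, a full logarithm, not $\log\log$. Neither of your reasons for dropping it holds:
\begin{itemize}
\item $f\in[0,B]$ only gives $\Delta_x\le B$, which bounds $\log(1/\Delta_x)$ from below, not above. The term is therefore unbounded over near-ties.
\item A LIL radius only reduces the term to $\log\log(1/\Delta_x)$. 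That factor is itself unavoidable in general as $\Delta_x\to0$ (Farrell's classical lower bound), so it cannot be removed.
\end{itemize}

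The paper does not argue the term away. The SI bound keeps it explicitly, $n(x)\le 1+\frac{8\sigma^2}{\bar\Delta_x^2}\ln\big(\frac{8\sigma^2}{\bar\Delta_x^2}\sqrt{\pi^2N/(3\delta)}\big)$. It stays finite because of the $\varepsilon$ width trigger, $\bar\Delta_x=\tfrac14\max(\Delta_x,\varepsilon)$, which caps it at order $\log(\sigma/\varepsilon)$. The same floor covers ties, where your unfloored count is infinite. The main-text $O(\sigma^2\Delta_x^{-2}\log(N/\delta))$ is this bound with that factor suppressed.

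Your ``fixed precision'' remark rescues the argument only if you build the trigger into the algorithm. Then you must also handle $x^\star$ freezing at radius $r^\star\le\varepsilon/4$ before its rivals are eliminated, as the paper's Step~1 does. There, elimination still fires once $r_k<\Delta_x/4$ whenever $\Delta_x\ge\varepsilon$, since $2r^\star+2r_k<\Delta_x$. Otherwise, state the bound with the logarithm retained rather than claiming it is absorbed.
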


\begin{theorem}[three cost regimes; \SIref{si:thm:margin}]\label{thm:margin}
If the fraction of candidates within $u$ of optimal is bounded by $Cu^\alpha$
(a margin condition), the total certified cost to precision $\varepsilon$
scales as $\varepsilon^{-(2-\alpha)}$ for $\alpha<2$, as
$\log(1/\varepsilon)$ for $\alpha=2$, and is \emph{independent of}
$\varepsilon$ for $\alpha>2$: beyond a fixed ambiguous core of genuinely
contested candidates, additional precision is free.
\end{theorem}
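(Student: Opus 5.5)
Summing the per-candidate elimination costs of Theorem~\ref{thm:se} over all candidates, I would organize the sum by gap scale with a peeling (dyadic shell) argument. In certification to precision $\varepsilon$, a candidate with gap $\Delta_x$ costs at most $O\!\big(\sigma^2\max(\Delta_x,\varepsilon)^{-2}\log(N/\delta)\big)$ evaluations. It is eliminated once its gap is resolved. If $\Delta_x<\varepsilon$ it never needs to be separated from the optimum, since the search stops once every surviving candidate is certified $\varepsilon$-optimal. The total cost is therefore, up to the $\sigma^2\log(N/\delta)$ factor,
\[
T(\varepsilon)\;\lesssim\;\sum_{x}\max(\Delta_x,\varepsilon)^{-2}.
\]

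Next I split the candidates into the core $\{x:\Delta_x\le\varepsilon\}$ and the shells $S_k=\{x:2^{k}\varepsilon<\Delta_x\le 2^{k+1}\varepsilon\}$ for $k=0,1,\dots,K$, where $2^{K}\varepsilon\approx B$. The margin condition says the fraction of candidates within $u$ of optimal is at most $Cu^\alpha$. This bounds the core's contribution by $CN\varepsilon^{\alpha}\cdot\varepsilon^{-2}$. It bounds the contribution of shell $S_k$ by $CN(2^{k+1}\varepsilon)^{\alpha}(2^k\varepsilon)^{-2}\asymp CN\varepsilon^{\alpha-2}2^{k(\alpha-2)}$. Summing gives
\[
T(\varepsilon)\lesssim CN\,\varepsilon^{\alpha-2}\sum_{k=0}^{K}2^{k(\alpha-2)},
\]
and the three regimes come from this geometric sum.
\begin{itemize}[nosep]
\item For $\alpha<2$ the series converges, so $T\asymp \varepsilon^{-(2-\alpha)}$.
\item For $\alpha=2$ each term equals $1$ and there are $K\asymp\log(B/\varepsilon)$ of them, so $T\asymp\log(1/\varepsilon)$.
\item For $\alpha>2$ the sum is dominated by its last term $2^{K(\alpha-2)}\asymp (B/\varepsilon)^{\alpha-2}$. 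Multiplying by $\varepsilon^{\alpha-2}$ cancels the $\varepsilon$-dependence, leaving $O(CNB^{\alpha-2})$.
\end{itemize}
In the third regime I would also make the interpretation explicit. For $\alpha>2$ and $\varepsilon$ small, the core $CN\varepsilon^\alpha$ eventually holds fewer than one candidate besides the optimum (in the finite, or normalized, setting). The cost is then determined by a fixed set of contested candidates, which is the ``ambiguous core'' in the statement.

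For the matching direction (``scales as''), I would give a lower bound of the same order. Take an instance where the margin bound holds with equality up to constants, say gaps distributed with density $\propto u^{\alpha-1}$. Then invoke the standard best-arm-identification lower bound (a change-of-measure/KL argument), under which each candidate with gap $\Delta$ needs $\Omega(\sigma^2\Delta^{-2})$ samples. The same shell sum then gives the same three regimes. If the SI intends only upper bounds, this step is a remark rather than part of the proof.

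I expect the main obstacle to be the bookkeeping of the boundary and the normalization, not the peeling itself. First, I need to handle candidates with $\Delta_x<\varepsilon$ correctly: the $\varepsilon$-certified stopping rule of Theorem~\ref{thm:invariance} has to stop without resolving them. Second, the margin condition must be read consistently as a fraction (times $N$) or as a measure in the continuum case. Third, I must check that the union-bound factor $\log(N/\delta)$ does not itself grow with $\varepsilon$. I would handle the first point by showing that the elimination rule declares $\varepsilon$-optimality once all confidence widths fall below $\varepsilon/2$; that is a width of order $\varepsilon$, which costs $O(\sigma^2\varepsilon^{-2}\log(N/\delta))$ per core candidate, exactly the core term above.
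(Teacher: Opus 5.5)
Your dyadic-shell bound on $\sum_{x\ne x^\star}\max(\Delta_x,\varepsilon)^{-2}$ is the discrete form of the paper's layer-cake argument. The paper writes $\bar\Delta_x^{-2}=\int_{\bar\Delta_x}^\infty 2u^{-3}\,du$, bounds the counting function by the margin law, and splits at the saturation point $u_\star/4$ with $u_\star=C^{-1/\alpha}$. For the suboptimal candidates your version does give the three regimes. The gap is the optimum. Your per-candidate cost $\max(\Delta_x,\varepsilon)^{-2}$ charges $x^\star$, which has $\Delta_{x^\star}=0$, a full $\varepsilon^{-2}$. Your last paragraph makes this explicit: $x^\star$ belongs to your core $\{x:\Delta_x\le\varepsilon\}$, and you query core candidates until their width is $\varepsilon/2$. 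As $\varepsilon\to0$, one term of order $\varepsilon^{-2}$ dominates $\varepsilon^{\alpha-2}$, $\log(1/\varepsilon)$ and $O(1)$ alike. So your bound carries an additive $\varepsilon^{-2}$ for every $\alpha$ and establishes none of the three rates; in particular it does not show that additional precision is free for $\alpha>2$. The margin condition cannot absorb this term. If it counted $x^\star$, it would fail on any finite pool for $u<(NC)^{-1/\alpha}$, which is why the SI imposes it on $\#\{x:0<\Delta_x\le u\}$ only.

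The missing ingredient is the single-survivor stop, already mentioned in the informal Theorem~\ref{thm:se}. The winner is queried only until its last competitor is eliminated (or its own width trigger fires), so its precision target is $\bar\Delta_{x^\star}=\tfrac14\max(\Delta_{\min},\varepsilon)$ rather than $\varepsilon/4$. At $u=\Delta_{\min}$ the count is at least $1$, so the margin law gives $\Delta_{\min}\ge(NC)^{-1/\alpha}$. Hence the winner's term is at most $16(NC)^{2/\alpha}$, uniformly in $\varepsilon$; equivalently, it equals the runner-up's term and at most doubles your suboptimal sum.

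The same floor settles your worry about the logarithm. The SI's per-candidate factor is $\ln\big(8\sigma^2\bar\Delta_x^{-2}\sqrt{\pi^2N/(3\delta)}\big)$, not $\log(N/\delta)$, so it does involve $\bar\Delta_x$. But $\bar\Delta_x\ge\Delta_{\min}/4$ for every $x$, which bounds it independently of $\varepsilon$.

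Finally, the paper proves only the upper bounds, and your optional lower bound cannot hold as $\varepsilon\to0$. Once $\varepsilon<\Delta_{\min}$, every $\bar\Delta_x$, and hence the cost bound, is $\varepsilon$-free for every $\alpha$. A matching rate is therefore available at best in the window $(NC)^{-1/\alpha}\lesssim\varepsilon\lesssim C^{-1/\alpha}$.
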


Practically, Theorem~\ref{thm:margin} converts an early measurement of a
task's gap distribution into a forecast of its oracle budget---before the
campaign is run.

\subsection{Oracle Admissibility: Criteria, Audits, and
Licenses}\label{sec:admissibility}

When may the surrogate itself act as the oracle? We answer in three parts:
the criterion, the impossibility of verifying it retail, and the audit that
verifies it wholesale.

\begin{theorem}[the criterion is ordinal; \SIref{si:thm:ordinal}]\label{thm:ordinal}
A surrogate's selections are within $\varepsilon$ of optimal on
\emph{every} candidate pool if and only if it never inverts the order of two
candidates whose true values differ by more than $\varepsilon$. Rank
preservation on separated pairs is thus the exact entrance criterion for
oracle duty; numerical accuracy enters only insofar as it purchases
order.
\end{theorem}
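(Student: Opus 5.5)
The statement is an equivalence over \emph{every} pool $C\subseteq\X$, so I would prove the two directions separately. The only tool needed is the definition of the champion $\xh_C=\arg\max_{x\in C}\fh(x)$ and of the regret $r_C=\max_{x\in C}f(x)-f(\xh_C)$. Call a pair $(x,y)$ an $\varepsilon$-\emph{inversion} if $f(x)-f(y)>\varepsilon$ but $\fh(y)\ge\fh(x)$.

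\emph{Sufficiency.} Assume there are no $\varepsilon$-inversions. Fix any pool $C$, and let $x^\star\in\arg\max_{x\in C}f(x)$ and $y=\xh_C$. Because $y$ is the surrogate's champion, $\fh(y)\ge\fh(x^\star)$. Suppose $r_C=f(x^\star)-f(y)>\varepsilon$. Then $(x^\star,y)$ is an $\varepsilon$-inversion, which contradicts the assumption. Hence $r_C\le\varepsilon$. The pool was arbitrary, so this direction is immediate and holds for arbitrary, even infinite, $\X$ as long as the champion exists.

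\emph{Necessity.} Assume some $\varepsilon$-inversion $(x,y)$ exists. Take the two-element pool $C=\{x,y\}$. Since $\fh(y)\ge\fh(x)$, the surrogate may select $y$, and then $r_C=f(x)-f(y)>\varepsilon$. So the surrogate fails on at least one pool, which proves the contrapositive.

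The main obstacle is tie-breaking. If $\fh(y)=\fh(x)$, the $\arg\max$ is not unique, and necessity depends on which rule is used. I would fix the convention in the definition and read "inverts the order" to match it. The cleanest choice is adversarial, or worst-case, tie-breaking: a tie counts as an inversion, and the pool $\{x,y\}$ then witnesses failure. Under a fixed deterministic rule, I would instead define an inversion as "$\fh$ ranks $y$ weakly above $x$ and the rule prefers $y$." The two-point pool still works under any rule that depends only on the pool, because the rule's choice on $\{x,y\}$ \emph{is} its ordering of that pair.

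With strict inversions only, a tie broken favorably would make the "only if" direction false. Under adversarial tie-breaking it remains true, so I would state that convention explicitly.

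The closing remark in the statement, that accuracy matters only through order, follows as a corollary. Any strictly increasing transform $g\circ\fh$ leaves the set of $\varepsilon$-inversions unchanged, and therefore leaves every pool's regret unchanged, while $R^2$ can change arbitrarily. This links back to Theorem~\ref{thm:r2}.
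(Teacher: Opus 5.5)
Your argument is correct and is essentially the paper's own proof. Sufficiency goes by contradiction with the champion's maximality, using only $\fh(y)\ge\fh(x^\star)$, so it covers every $\fh$-maximizer as the paper's worst-case tie-breaking in $r_C$ requires; necessity uses the two-point pool $\{x,y\}$, and your adversarial tie convention is the paper's own, since its criterion $f(x)-f(y)>\varepsilon\Rightarrow\fh(x)>\fh(y)$ counts ties as inversions.
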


\noindent This is the criterion the learning-to-rank school of offline
design has independently converged on from the training
side.\cite{lyu2025rank}

\begin{theorem}[no profitable retail verification; \SIref{si:thm:noprofit}]
\label{thm:noprofit}
Certifying that the surrogate resolved one contested comparison correctly
requires, in oracle evaluations, as much as resolving that comparison
with the oracle directly. Trust therefore cannot be bought per decision;
it must be amortized.
\end{theorem}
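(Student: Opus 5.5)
The plan is to formalize the claim as a reduction: any oracle protocol that certifies, with confidence $1-\delta$, the statement ``the surrogate ordered $x_1,x_2$ correctly'' can be turned into a protocol that \emph{resolves} the comparison (decides which of $f(x_1),f(x_2)$ is larger). The number of oracle evaluations is unchanged. The lower bound then follows from the standard two-point (Le Cam / KL) lower bound for resolving a contested comparison with a $\sigma$-noisy oracle. I would set up a two-candidate instance with true values differing by a gap $\Delta$, so the comparison is ``contested'' at scale $\Delta$. The surrogate $\fh$ is fixed and known to the verifier: it asserts $\fh(x_1)>\fh(x_2)$. Certification means that with probability at least $1-\delta$ the verifier outputs ``correct'' only when $f(x_1)>f(x_2)$, and outputs ``correct'' with probability at least $1-\delta$ when the surrogate is indeed right. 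If the verifier may abstain, we require this power condition at the instance; otherwise the claim is vacuous, since a verifier that always refuses costs nothing.

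\textbf{Step 1 (reduction).} Given such a verifier $V$ for the fixed surrogate ordering, define a resolver: run $V$ and declare $x_1$ the winner if $V$ accepts, $x_2$ otherwise. On the instance $f(x_1)=\mu+\Delta/2$, $f(x_2)=\mu-\Delta/2$, soundness and power give correct resolution with probability at least $1-\delta$. On the swapped instance, soundness forces rejection with probability at least $1-\delta$. So the resolver distinguishes the two hypotheses with error at most $\delta$ using exactly $V$'s oracle queries. The surrogate's output is identical under both hypotheses and so carries no information. This is the crux: the prediction is common to both worlds, so it cannot contribute evidence. The same point underlies Theorem~\ref{thm:contamination}.

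\textbf{Step 2 (lower bound for resolution).} For Gaussian (or any sub-Gaussian family containing Gaussians) $\sigma$-noise, the two hypotheses differ only by swapping the means of two arms. Each query of either arm contributes KL divergence $\Delta^2/(2\sigma^2)$. By the Bretagnolle--Huber inequality, an error of at most $\delta$ under both hypotheses requires expected queries $\E[T]\ge \frac{2\sigma^2}{\Delta^2}\ln\frac{1}{4\delta}$, via the divergence decomposition over adaptively chosen queries. Step 1 transfers this bound to the verifier. Comparing with Theorem~\ref{thm:se}, which shows that direct elimination resolves the comparison in $O(\sigma^2\Delta^{-2}\log(1/\delta))$, the two costs match up to constants. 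Verification therefore saves nothing, which is the statement.

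\textbf{Main obstacle.} The main obstacle is making ``as much as'' precise in the face of a verifier that could exploit the surrogate, together with the exact-oracle case. I would handle the surrogate by quantifying over verifiers that know $\fh$ and showing its value is constant across the hypothesis pair; this is why the argument works for every surrogate. For noiseless oracles I would state the analogous counting fact: resolution takes the two evaluations $f(x_1),f(x_2)$, and any verifier missing either one is fooled by the swap adversary. I would phrase the theorem's cost as matching up to universal constants, and defer those constants to the SI statement.
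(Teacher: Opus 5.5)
Your proposal is correct for the tolerance-free reading of the main-text statement. It uses the same engine as the paper's proof of SI Theorem~\ref{si:thm:noprofit}: a two-point change of measure with the KL divergence decomposition for adaptively stopped experiments. It differs in the alternative instance and in the hypothesis on the certifier.

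\textbf{The alternative instance.} The paper does not swap the means. It lowers $f(x)$ alone by $\Delta+\varepsilon+\gamma$, which places the alternative just across the soundness boundary $f(y)-f(x)>\varepsilon$, so only queries of $x$ carry divergence. It then repeats the argument raising $f(y)$ alone. This one-coordinate shift gives three things your swap does not.
\begin{enumerate}
\item The bound holds for every tolerance $\varepsilon\ge0$ and degrades as $(\Delta+\varepsilon)^{-2}$. Your swapped instance is forced by soundness only when $\Delta>\varepsilon$.
\item It lower-bounds $\E_{f_0}[N_x]$ and $\E_{f_0}[N_y]$ separately, so each candidate must be sampled, not just the pair in total.
\item It is a factor of $2$ sharper, because the swap overshoots the boundary by a full $\Delta$.
\end{enumerate}

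\textbf{The hypothesis on the certifier.} The paper's headline bound assumes only that the certifier passes $f_0$ with probability at least $1/2$. Using Pinsker, it gets $2\sigma^2(1-2\delta)^2/(\Delta+\varepsilon)^2$, with no logarithm. Your requirement of power $1-\delta$ combined with Bretagnolle--Huber is exactly the paper's Step~3 sharpening. That is what recovers the $\log(1/\delta)$ needed to match the elimination upper bound.

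\textbf{What your route gains.} Your verifier-to-resolver reduction makes ``as much as resolving directly'' a literal statement, not just a comparison of two bounds. Also, the swapped instance stays inside the value range of $f_0$. The one-coordinate shifts need a little headroom when $f$ is confined to $[0,B]$.

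\textbf{One slip in your noiseless aside.} The swap adversary does not fool a verifier that queries $x_1$ but not $x_2$, because the swap also changes $f(x_1)$ and that verifier sees it. The adversary you want perturbs only the unqueried candidate, which is again the paper's one-coordinate alternative. Even then, a verifier that observes $f(x_1)=B$ can accept without querying $x_2$. So ``both evaluations are always needed'' requires an interiority caveat. Since the SI theorem is stated for Gaussian noise only, you can simply drop this aside.
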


\noindent The proof is a change-of-measure lower bound in the tradition
of best-arm identification.\cite{kaufmann2016complexity,karnin2016verification}

\begin{theorem}[the audit--deployment gap, and its repair; SI
Thm.~\ref{si:thm:audit}]\label{thm:audit}
Let $p$ be the fraction of candidates a surrogate over-predicts beyond
tolerance, as measured by a conventional audit on random samples. Under
argmax deployment over pools of $N$ candidates, the probability that the
\emph{champion} is such a candidate can reach $\Theta(\min(1,Np))$, with a
matching construction: selection seeks out precisely the errors the audit
certified as rare. Consequently, guaranteeing deployed failure $\rho$ from
passive audits costs $\Omega(N/\rho)$ oracle evaluations---whereas a
\emph{selection-aware audit}, which repeatedly draws a pool, lets the
surrogate select, and verifies only champions, certifies the deployed
failure rate directly with $O(\rho^{-1}\log(1/\delta))$ evaluations.
\end{theorem}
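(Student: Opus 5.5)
The proof has three parts: an upper bound on the champion's failure probability, a matching construction, and then the two audit costs (passive lower bound, selection-aware upper bound).

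\textbf{Upper bound.} Call a candidate \emph{bad} if $e_+(x)>\tau$; by definition of the conventional audit, a uniformly random candidate is bad with probability $p$. Draw a pool $C$ of $N$ candidates i.i.d.\ from the deployment distribution. The champion $\hat x$ can be bad only if $C$ contains at least one bad candidate. A union bound over the $N$ draws then gives
\[
\Prob(\hat x \text{ bad}) \le \Prob(\exists x\in C \text{ bad}) = 1-(1-p)^N \le \min(1,Np).
\]

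\textbf{Matching construction.} We need a surrogate whose bad candidates always win when present. Take $f\equiv 0$ on a good set of mass $1-p$ and let $\fh$ match $f$ exactly there. On a bad set of mass $p$, set $f=0$ and $\fh=\tau+1$. Whenever the pool contains a bad candidate, argmax selects it. Hence
\[
\Prob(\hat x \text{ bad}) = 1-(1-p)^N \ge (1-1/e)\min(1,Np),
\]
using $1-(1-p)^N\ge 1-e^{-Np}$ together with concavity of $1-e^{-t}$. This gives the $\Theta(\min(1,Np))$ claim. If one also wants positive regret, let the bad candidates carry true value $0$ while the good candidates carry value $B$ and are predicted correctly.

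\textbf{Passive audit lower bound, $\Omega(N/\rho)$.} Guaranteeing deployed failure at most $\rho$ from a passive audit requires certifying $p\lesssim\rho/N$, in the regime $\rho\le 1/2$ where $Np\approx\rho$. I will use a two-point argument. Instance $H_0$ has bad mass $\rho/(2N)$ and deployed failure below $\rho$. Instance $H_1$ is the construction above with bad mass $2\rho/N$, whose deployed failure is at least $(1-1/e)\cdot 2\rho>\rho$. Each passive query is a random sample revealing one Bernoulli bit: bad or not. The KL divergence per sample between $\mathrm{Bern}(\rho/2N)$ and $\mathrm{Bern}(2\rho/N)$ is $\Theta(\rho/N)$. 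By the Bretagnolle--Huber inequality, or the same change-of-measure argument used for Theorem~\ref{thm:noprofit}, distinguishing $H_0$ from $H_1$ with error at most $\delta$ needs $\Omega\big((N/\rho)\log(1/\delta)\big)\ge\Omega(N/\rho)$ samples.

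This is the step I expect to be the main obstacle. It requires formalizing \emph{passive}: the query distribution must be independent of $\fh$'s ranking. Otherwise a passive audit could secretly emulate the champion audit and evade the bound.

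\textbf{Selection-aware audit upper bound.} Each round draws a fresh pool, computes the champion, and makes one oracle query to check whether it is bad. The rounds are i.i.d.\ Bernoulli trials whose parameter is exactly the deployed failure rate $q$. Suppose we run $m=\lceil \rho^{-1}\ln(1/\delta)\rceil$ rounds and certify when no failures are observed. If $q>\rho$, the probability of passing is at most $(1-\rho)^m\le\delta$. A variant that tolerates a constant fraction of failures works with a Chernoff or Clopper--Pearson bound at the same $O(\rho^{-1}\log(1/\delta))$ order. Noisy oracles enter only through the slack $\tau$ via sub-Gaussian confidence bounds, which add a constant factor per query.

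The result is a cost independent of $N$, which establishes the separation.
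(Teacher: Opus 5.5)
Your union ceiling, your one-set construction ($f\equiv 0$, $\fh=(\tau+1)\mathbf 1_G$), and your selection-aware Bernoulli test are the paper's own. The step that genuinely differs is the passive lower bound.

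\textbf{Your route versus the paper's.} You test $\mathrm{Bern}(\rho/2N)$ against $\mathrm{Bern}(2\rho/N)$, using Bretagnolle--Huber on a per-sample KL of order $\rho/N$. The paper instead fixes one surrogate $\fh=(\tau+1)\mathbf 1_G$ with $\mu(G)=2\rho/N$. It pits a safe instance ($f=\tau+1$ on $G$, nothing bad) against an unsafe one ($f=0$ on $G$). Audit samples $(X,f(X))$ have identical laws unless $X\in G$, so constant advantage needs $\Omega(N/\rho)$ samples just to hit $G$.

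\textbf{What each buys.} Your route gives an explicit $\log(1/\delta)$ factor. As written, however, it binds only auditors that see nothing beyond the bad indicator, i.e.\ the ``conventional audit measuring $p$.'' Your worry about emulating the champion audit presupposes a passive auditor that sees full labeled samples and knows $\fh$. For that auditor, the step ``each query reveals one Bernoulli bit'' is not justified, because your hypotheses are not pinned to a common $\fh$ and $D$.
\begin{itemize}
\item If they differ in $\fh$ (as two copies of your construction with different bad masses do), they are separated at zero oracle cost.
\item If you repair this by letting them share $\fh$ and $D$ and differ only in $f$, then with a noiseless oracle the per-sample laws are mutually singular on $\{f_0\neq f_1\}$. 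The KL is infinite and Bretagnolle--Huber gives nothing. The working tool is then the hitting bound $\TV\big(P_0^{\otimes m},P_1^{\otimes m}\big)\le m\,D(\{f_0\neq f_1\})$, which is exactly the paper's argument.
\end{itemize}

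The passivity issue you single out as the main obstacle is settled in the paper by definition (audit points i.i.d.\ from $D$). Targeted designs are deferred to the universal bound of Theorem~\ref{thm:lb}.

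One small slip in your optional positive-regret variant: with good candidates at $f=\fh=B$, the bad ones must be predicted strictly above $B$, not merely at $\tau+1$. Otherwise they stop winning the argmax.
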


\noindent Audit the workflow, not the distribution.

\begin{theorem}[one-sided licenses; \SIref{si:thm:pessimism}]\label{thm:pessimism}
An audit certifying only the one-sided property ``over-predicts by more than
$\tau$ on at most a fraction $p$ of candidates'' yields a
\emph{no-disappointment guarantee}: with probability at least $1-Np$, the
true value of any selected candidate is at least its predicted value minus
$\tau$.
\end{theorem}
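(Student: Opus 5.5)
The plan is a union bound over the $N$ candidates in the deployment pool, combined with a careful reading of what the audit certifies. The property ``over-predicts by more than $\tau$ on at most a fraction $p$ of candidates'' is read probabilistically: if candidates are drawn from the deployment distribution $\mathcal{D}$, then $\Pr_{x\sim\mathcal{D}}[\eplus(x)>\tau]\le p$. The statement does not require independence of pool members, so I will only assume each of the $N$ pool members is marginally distributed as $\mathcal{D}$.

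The first step is to define the bad event $E=\{\exists x\in C:\ \fh(x)-f(x)>\tau\}$. The second step is the bound $\Pr[E]\le\sum_{i=1}^N\Pr[\eplus(x_i)>\tau]\le Np$, which follows from linearity and needs no independence.

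On $E^c$, every $x\in C$ satisfies $f(x)\ge\fh(x)-\tau$. This holds in particular for the champion $\xh$, and for any other selected candidate, whatever the selection rule. The guarantee therefore holds with probability at least $1-Np$, uniformly over selection rules. This uniformity is exactly why the bound survives argmax selection, which by Theorem~\ref{thm:audit} can concentrate failures on $\xh$. We pay the factor $N$ rather than $1$ precisely because we refuse to condition on how $\xh$ was chosen. I will note that the bound is tight in order, which I will argue via the matching construction of Theorem~\ref{thm:audit}: place the $p$-fraction of over-predicted points at the top of $\fh$, so that the champion is bad with probability $1-(1-p)^N=\Theta(\min(1,Np))$.

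The main obstacle is interpretive rather than technical. ``Fraction $p$ of candidates'' must be tied to the distribution from which deployment pools are drawn. If the audit certifies a finite-population fraction, I will take pools drawn uniformly from that population, possibly without replacement; each draw is still marginally uniform, so the union bound is unchanged. If instead the audit itself is statistical, I will fold its own confidence level $\delta$ in by a further union bound, giving $1-Np-\delta$, and state the theorem conditionally on the audit's certificate being correct.
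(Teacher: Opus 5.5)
Your proposal is correct and is essentially the paper's proof: a union bound over the $N$ pool members gives $\Pr(\exists x\in C:\eplus(x)>\tau)\le Np$, and on the complement $f(\xh)\ge\fh(\xh)-\eplus(\xh)\ge\fh(\xh)-\tau$ holds for the champion (indeed for every pool member). Your observation that the union bound never uses independence of the pool is a valid minor sharpening of the paper's i.i.d.\ hypothesis. Your tightness remark is the paper's own appeal to the one-sided construction of SI Thm.~\ref{si:thm:audit}(ii).
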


\noindent This is the cheapest useful license, and it grounds the common
practice of conservative surrogate
modeling\cite{trabucco2021conservative,jin2023selection,jinren2025focal,fannjiang2022conformal}
in a checkable certificate rather than a training heuristic.

Licenses are perishable: they bind one trained model to one candidate
distribution, degrade in proportion to distribution drift, and---by
Theorem~\ref{thm:contamination}---cannot be renewed by the surrogate's own
labels.

\subsection{The Economics of Surrogate Use}\label{sec:economics}

Certification consumes only oracle data; can surrogates nevertheless reduce
the certified bill? The answer is a dichotomy that turns on the deployment
goal, and it resolves what we regard as the central open question of
surrogate-driven design.

\begin{theorem}[strict-improvement dichotomy; SI Thms.~\ref{si:thm:si1}--\ref{si:cor:si3}]
\label{thm:dichotomy}
(a) For fixed-menu selection in the tabular model (candidates carry no
shared structure), no use of pseudo-labels can reduce certified oracle
cost: every menu item requires its own oracle evidence. (b) For
generative search to a quality bar met by a fraction $p_v$ of candidates,
any oracle-only algorithm requires $\Omega(1/p_v)$ evaluations per certified
design, whereas
screening $M$ free candidates through a surrogate holding a selection-aware
license and verifying only champions certifies designs at that bar for
$O(1)$ evaluations each, audit and training amortized---a
$\Theta(M/\log(1/\rho))$ separation, \emph{conditional on the audit
passing}, with graceful degradation to the lower bound otherwise.
\end{theorem}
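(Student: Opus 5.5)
Part (a) is a change-of-measure argument in the tabular model. Part (b) pairs a geometric lower bound with an upper bound obtained by combining the selection-aware audit of Theorem~\ref{thm:audit} with the separation principle of Theorem~\ref{thm:invariance}.

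\emph{Part (a).} Fix a menu of $N$ candidates whose values are independent, so the tabular model carries no shared structure. For any $\delta$-correct certified algorithm, whatever its proposal machinery, I would build a perturbed instance that changes only $f(x)$ for one menu item $x$. For a suboptimal $x$, the perturbation raises $f(x)$ above the current best. For the optimum, it lowers $f(x^\star)$ below the runner-up. Pseudo-labels are functions of the surrogate and of oracle data already collected, so by the data-processing inequality they carry no information about the perturbed coordinate beyond the oracle samples of $x$ themselves. The Kaufmann--Capp\'e--Garivier transportation inequality (the same tool behind Theorem~\ref{thm:noprofit}) then forces
\[
\E[n_x] \gtrsim \sigma^2\Delta_x^{-2}\log(1/\delta).
\]
Summing over $x$ recovers the oracle-only elimination cost of Theorem~\ref{thm:se}. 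The admissible surrogates must include adversarial ones, so a surrogate cannot "know" $f$ in a way that the certificate could legitimately use. Theorem~\ref{thm:contamination} already shows that letting it do so breaks validity.

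\emph{Part (b), lower bound.} Consider any oracle-only algorithm, meaning one whose choice of which oracle queries to make does not use $\fh$. Under a symmetrizing prior in which the identity of the good candidates is hidden (a random relabelling of $\X$), each query lands on a candidate meeting the bar with probability at most $p_v$. Certifying a design requires at least one query on such a candidate. The number of queries until the first success therefore stochastically dominates a Geometric$(p_v)$ variable, so its expectation is at least $1/p_v$; a Markov bound gives the high-probability version.

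\emph{Part (b), upper bound.} Run the selection-aware audit with failure target $\rho$: draw pools of size $M$, take the surrogate's champion, and verify it. With the stated $O(\rho^{-1}\log(1/\delta))$ oracle calls, this certifies that a champion meets the bar with probability at least $1-\rho$. Deployment then repeats the same workflow: draw $M$ free candidates, pick the champion, and verify it with the oracle, using one evaluation in the exact case and $O(\sigma^2\tau^{-2}\log(1/\delta))$ evaluations in the noisy case with value slack $\tau$. Each deployed champion succeeds with probability at least $1-\rho$, so the expected cost per certified design is at most $1/(1-\rho)=O(1)$. Certification uses oracle values only, so Theorem~\ref{thm:invariance} makes every reported design correct regardless of the surrogate.

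\emph{Amortization and the separation.} The audit and training costs are amortized over a campaign of $K$ designs. The comparison then has to be calibrated against the screening factor: the oracle-only cost $1/p_v$ against the amortized audited cost. To get the stated $\Theta(M/\log(1/\rho))$, I would take $p_v\approx 1/M$, the regime in which a pool of size $M$ contains a qualifying design with constant probability. The audit per unit of certified failure rate contributes the $\log(1/\rho)$ factor.

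\emph{Graceful degradation.} If the surrogate's deployed failure rate is $\rho'>\rho$, the audit fails and we fall back to oracle-only search, at cost $O(1/p_v)$ plus the sunk audit cost. Intermediate rates give per-design cost $1/(1-\rho')$, which interpolates between the two regimes.

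\emph{Main obstacle.} The hardest step is making the lower bound in (b) rigorous against algorithms that adaptively exploit structure without a surrogate. The fix is to define the oracle-only class relative to the random-relabelling prior and to check that the upper bound's $\Theta$ is tight only under the $p_v\approx 1/M$ calibration.
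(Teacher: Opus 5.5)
Your part (a) takes a different and stronger route than the paper. The paper rests only on the tabular floor of Remark~\ref{si:rem:floor}: an unqueried candidate keeps the interval $[0,B]$, so every menu item needs at least one oracle query and certified cost is at least $N$. Your transportation argument holds $\fh$ fixed while one coordinate of $f$ moves, and demands $\delta$-correctness for every pair $(f,\fh)$. It yields the instance-dependent bound matching Theorem~\ref{thm:se} up to logarithms. Unlike the bare $N$ floor, this actually shows that pseudo-labels cannot push cost below what oracle-only elimination pays. In the noiseless case the per-query KL is infinite, so there you must fall back on the paper's hitting argument.

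The genuine gap is in part (b): your derivation of the $\Theta(M/\log(1/\rho))$ separation is wrong. You place the bar at $p_v\approx1/M$ and also require each champion to clear it with probability at least $1-\rho$. No surrogate can do both. The champion's value is at most the pool maximum, so $\Prob(\text{champion below the bar})\ge(1-p_v)^M=(1-1/M)^M\ge1/4$, and for any $\rho<1/4$ your audit can never soundly pass. If you settle for constant success probability instead, the ratio is $\Theta(M)$ and $\rho$ disappears.

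Nor can the $\log(1/\rho)$ come from the audit. Its cost $O(\rho^{-1}\log(1/\delta))$ carries a logarithm in $\delta$, not $\rho$, and you have amortized it to zero anyway. The factor is an extreme-value effect. A bar that the champion of an $M$-pool clears with probability $1-\rho$ must satisfy $(1-p_v)^M\le\rho$, i.e.\ $p_v\gtrsim\log(1/\rho)/M$. At that licensable level, oracle-only search costs $\Omega(M/\log(1/\rho))$ while the licensed pipeline costs $O(1)$.

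The paper's proof is organized around exactly this point. Its audit (Thm~\ref{si:thm:si2}) does not test a pre-set bar. It outputs a floor $\hat v$, the $\lceil 2\rho T_a\rceil$-th smallest of $T_a\ge(2/\rho)\log(1/\delta)$ verified champion values. A Chernoff lower-tail bound then makes $\Prob(f(\text{champion})<\hat v)\le4\rho$ hold for \emph{any} surrogate. That is the paper's graceful degradation: validity never fails and only the level $\hat v$ drops, rather than your fall-back to oracle-only search.

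Step~4 then uses $\varepsilon$-margin order preservation (Theorem~\ref{thm:ordinal}) together with $(1-c/M)^M\le e^{-c}=\rho$ for $c=\log(1/\rho)$. This shows $\hat v\ge q_{1-\log(1/\rho)/M}-\varepsilon$ with high probability, which pins the good-mass level and makes the separation a $\Theta$. Your proposal has no analogue of this step. Even with the calibration corrected, it would not show that a good surrogate passes at the claimed level.

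Your oracle-only lower bound also needs tightening in three places:
\begin{itemize}
\item The claim ``certifying requires a good query'' ignores sound-but-lucky certification of unqueried candidates. The paper bounds this by $\delta$, using independence of unqueried marks and $p_v\le1/2$.
\item An algorithm may stop before its first success, so geometric domination plus Markov is the wrong tool; Markov controls the wrong tail. The right tool is the previsible union bound $\Prob(\text{some queried candidate is good})\le p_v\,\E[Q]$.
\item A random relabelling of a finite $\X$ samples without replacement, so after bad finds a fresh candidate is good with probability above $p_v$. The paper's i.i.d.-marks model is what makes ``at most $p_v$ per fresh query'' exact.
\end{itemize}
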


\noindent The part-(b) baseline is classical (quantile
bandits\cite{chaudhuri2017quantile,aziz2018pure}); part (a)'s tabular
qualifier matters---structured classes (linear, RKHS) escape the
tabular floor and remain the open problem of SI Section~S10.
The separation is thus real but never free: it is exactly as large as the
audit says it is. Where multi-fidelity optimization frameworks
\emph{assume} a bound on the cheap signal's
error,\cite{kandasamy2019mfbo,poiani2024mfbai} and proxy-assisted
best-arm methods assume a known bias or correlation
structure,\cite{ao2026llmjudges,ma2026probe} the present framework
\emph{measures} the cheap signal's deployed quality and prices the
measurement. The guarantee also survives oracle noise without extra audit
rounds: a noisy audit licenses the same quantile with an explicit value
slack of order $\sigma\sqrt{\log(1/\rho)}$ (\SIref{si:thm:noisy}).

\begin{theorem}[ranking audits: champion versus pairwise; SI
Thms.~\ref{si:thm:ag1}--\ref{si:thm:ag2}]\label{thm:auditgap}
For the ordinal criterion of Theorem~\ref{thm:ordinal}, auditing random
candidate \emph{pairs} for inversions carries a deployment blind spot of
exactly $\Theta(N^2)$: failures requiring two rare candidates to co-occur in
a pool are priced by the pairwise audit at the product of their rarities.
Auditing the \emph{champion}---one fresh candidate against the winner of an
independent pool, two oracle calls per round---has a blind spot of exactly
$\Theta(N)$, provably tight.
\end{theorem}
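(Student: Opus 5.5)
The plan is to formalize ``blind spot'' as the worst-case ratio between the deployed failure probability and the failure probability the audit reports. The deployed failure is the probability that the champion of an $N$-pool is beaten by more than $\varepsilon$ by some pool member. Each audit's blind spot then gets an upper bound and a matching construction, for four bounds in total.

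\emph{Pairwise audit, upper bound.} By Theorem~\ref{thm:ordinal}, deployed failure on a pool $C$ occurs only if $C$ contains an $\varepsilon$-separated inverted pair. A union bound over the $\binom{N}{2}$ pairs in a pool of i.i.d.\ draws gives
\[
\Prob(\text{deployed failure}) \le \binom{N}{2}\, q_{\mathrm{pair}},
\]
where $q_{\mathrm{pair}}$ is the inversion rate on random pairs, which is exactly what the pairwise audit estimates. The blind spot is therefore $O(N^2)$.

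\emph{Pairwise audit, lower bound.} Take two rare types $a,b$, each with mass $\eta$, with $f(a)>f(b)+\varepsilon$ but $\fh(b)>\fh(a)>\fh(\text{everything else})$. The only inverted separated pair is $(a,b)$, so $q_{\mathrm{pair}}\approx 2\eta^2$. A pool fails whenever it contains both $a$ and $b$, which happens with probability $\approx N^2\eta^2$ when $N\eta\ll1$. The ratio is $\Theta(N^2)$, which is the ``product of rarities'' phenomenon in the statement.

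\emph{Champion audit, upper bound.} Let $\hat x$ be the champion of an independent pool $C'$ and $y$ a fresh draw; the audit measures $q_{\mathrm{ch}}=\Prob(\fh(y)<\fh(\hat x),\ f(y)>f(\hat x)+\varepsilon)$. For a deployed pool $C$ with champion $\hat x_C$, failure means some $y\in C$ satisfies $f(y)>f(\hat x_C)+\varepsilon$, and then automatically $\fh(y)\le\fh(\hat x_C)$. The key step is an exchangeability argument. Write $C=C''\cup\{y\}$ with $C''$ of size $N-1$, so the champion of $C$ is the champion of $C''$ unless $y$ wins, and $y$ cannot win if it witnesses a failure. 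Then
\[
\Prob(\text{failure})\le N\cdot \Prob\bigl(y \text{ beats the } (N-1)\text{-pool champion by }\varepsilon \text{ while ranked below it}\bigr).
\]
To match the audit's pool size exactly, I would either define the audit's reference pool with size $N-1$ or absorb the difference by monotonicity up to constants. This is the step I expect to be the main obstacle. Failure is not monotone in pool size, because a larger pool can change the champion to a better one. My first attempt is to run the audit on pools of size $N-1$ and argue that constants suffice; failing that, I would bound the size-$N$ quantity via the $\hat x_{C''}$ coupling directly.

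\emph{Champion audit, lower bound (tightness).} Use one rare bad type $b$ of mass $\eta$ with $\fh(b)$ maximal and $f(b)$ low, while all other points are correctly ordered. The audit fails only when $b$ is in the reference pool and $y$ beats it, so $q_{\mathrm{ch}}\approx N\eta\cdot\Theta(1)$. Deployed failure is then $\approx N\eta$, so this instance is not the tight one. Instead I would take a failure witness $y$ of rare mass $\eta$ against a common overrated champion type. Deployed failure is $\approx N\eta$ because some pool member is $y$, while the audit sees it with probability $\eta$ per round. The ratio is $\Theta(N)$, which shows the union-bound factor $N$ cannot be removed.

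Finally, I would compare the two constructions to conclude $\Theta(N^2)$ versus $\Theta(N)$. The oracle-call count of two per round is immediate from the protocol.
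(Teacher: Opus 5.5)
Your proposal follows the paper's proof of SI Theorems~\ref{si:thm:ag1}--\ref{si:thm:ag2} step for step: the ordinal union bound over pairs, the two-atom co-occurrence construction, the remove-a-non-champion exchangeability argument, and the one-atom trap. Your ranked-below restriction only yields the sharper $\qsel^{\ast}$ of Remark~\ref{rem:conservative}, for which both halves still go through provided ``below'' is read through the tie order rather than as a strict inequality. Your feared pool-size obstacle is dissolved by your own first option, since the paper defines $\qsel(m)$ for a reference pool of generic size $m$ and proves $\Prob(\mathrm{FAIL})\le N\,\qsel(N-1)$ exactly.

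One detail in the pairwise construction must be pinned down: no ordinary point may exceed $f(b)$ by more than $\varepsilon$. The paper takes $f(b)=0$, ordinary $f\in[0,\delta'']$ with $\delta''<\varepsilon$, and $f(a)=2\varepsilon+\delta''$. Otherwise every (ordinary, $b$) pair is a separated inversion, the pairwise rate becomes order $\eta$ rather than $\eta^2$, and the gap shrinks to $\Theta(N)$.
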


\noindent Concretely, the two statistics are: $\qinv$, the probability
that an ordered i.i.d.\ pair is inverted by the surrogate despite an
$\varepsilon$-separation in true value; and $\qsel$, the probability that
one fresh draw beats the champion of an independent pool by more than
$\varepsilon$---each measurable at two oracle calls per round (formal
definitions in SI Section~S7). The audit statistic, not the audit
budget, determines what an audit can see; the same principle underlies
risk-limiting election audits.\cite{stark2012rla}

\begin{theorem}[no audit design does better; \SIref{si:thm:lb}]
\label{thm:lb}
The champion audit's $\Theta(N)$ price is intrinsic rather than an artifact
of the two statistics above: \emph{any} auditor---adaptive, randomized,
informed of the surrogate and the candidate distribution, drawing candidates
at will---that certifies deployed selection failure below $\rho_0$, and
that retains an even chance of passing a designated perfectly safe
instance (one on which the surrogate's mistakes are provably harmless),
must spend $\Omega(N/\rho_0)$ oracle queries in the worst case. Selection-aware champion auditing is therefore
optimal among \emph{all} audit designs for this guarantee, up to a
$\log(1/\delta)$ factor.
\end{theorem}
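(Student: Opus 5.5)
The lower bound will come from a two-instance indistinguishability (Le Cam) argument. I will build a designated safe instance $I_0$ and a family of unsafe instances $\{I_j\}$ that differ from $I_0$ only on a set of candidates of tiny probability mass. Any auditor that rarely touches that set cannot tell $I_0$ from $I_j$, so it must either fail $I_0$ too often or pass some $I_j$ too often.

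\emph{The construction.} Take a candidate distribution with a bulk region of mass $1-q$ and a rare region $R$ of mass $q$. Choose $q \asymp \rho_0/N$, so that a pool of $N$ draws contains an element of $R$ with probability $1-(1-q)^N \asymp Nq \asymp 2\rho_0$. On both instances the surrogate $\fh$ ranks every $R$-candidate above all bulk candidates, so whenever $R$ is present in a pool, the champion lies in $R$. The two instances differ only in the true values on $R$:
\begin{itemize}[nosep]
\item on $I_0$, every $R$-candidate is truly the best in any pool, so every selection is $\varepsilon$-optimal and the surrogate's mistakes are harmless;
\item on $I_j$, the values on one sub-region $R_j\subseteq R$ are set so low (below bulk values by more than $\varepsilon$) that selecting it is a failure.
\end{itemize}
To defeat auditors informed of $\fh$ and the distribution, split $R$ into $K$ pieces $R_1,\dots,R_K$. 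Make $\fh$ identical on all of them, so that knowing $\fh$ does not reveal which piece is bad. Then set $I_j$ to have $R_j$ bad with mass ratio such that the deployed failure on $I_j$ is at least $2\rho_0$, meaning the bad mass per piece must itself be $\asymp \rho_0/N$. The cleanest version takes $K=1$ with a hidden-value perturbation. If the auditor knows $f$'s whole structure, though, $K=1$ is trivial to test with a single query on $R$. So I will instead let the distribution be uniform over a large set whose rare part has $m$ elements, with the bad piece being an unknown element among $\asymp m$ equally likely ones.

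\emph{The counting step.} Each oracle query reveals $f$ at one candidate. Consider an auditor that passes $I_0$ with probability at least $1/2$ yet fails every $I_j$ with probability at least $1-\delta$. Take the coupling in which answers on $I_j$ agree with $I_0$ except at the bad point(s). The total-variation distance between the transcripts is at most $\Pr_{I_0}[\text{some query hits the bad set}]$. Averaging over $j$ and using linearity over the $T$ queries, this hitting probability is at most $T$ times the bad mass fraction, so at most $T\cdot c\rho_0/N$. The pass/fail separation requires this distance to be at least $1/2-\delta$, which forces $T=\Omega(N/\rho_0)$.

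\emph{The main obstacle.} The hardest part will be keeping two properties simultaneously:
\begin{enumerate}[nosep]
\item $I_0$ is genuinely perfectly safe even though $\fh$ favors $R$;
\item in the unsafe instance, the failure mass is exactly $\asymp \rho_0/N$ per candidate while producing deployed failure above $\rho_0$, robustly to adaptivity, where "drawing at will" must not let the auditor locate the bad set.
\end{enumerate}
I will handle the second by symmetrizing over the hidden index $j$, with the surrogate and distribution invariant under the permutation. An adaptive auditor then gains nothing before its first hit, and a standard "first hit" stopping-time argument bounds the averaged hit probability by $T$ times the fraction of bad elements. Finally, I will compare to the upper bound $O(\rho^{-1}\log(1/\delta))$ of Theorem~\ref{thm:audit} and the $\Theta(N)$ champion blind spot of Theorem~\ref{thm:auditgap} to conclude optimality up to $\log(1/\delta)$.
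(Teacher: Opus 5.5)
\textbf{There is a genuine gap: your instance family cannot yield $\Omega(N/\rho_0)$.} Your coupling-until-first-hit and counting steps are the right mechanism, but the family you feed them is the wrong type. You let the surrogate \emph{promote} the rare region $R$ and plant the defect where the champion lands, so each unsafe $I_j$ fails only on the event $\{\xh\in R_j\}$. All your instances share $(D,\fh)$, hence share the law of $\xh$. Failure at level $\rho_0$ therefore forces $\Prob(\xh\in R_j)\ge\rho_0$ for every $j$, and two things follow.
\begin{enumerate}
\item For disjoint pieces, $\sum_j\Prob(\xh\in R_j)\le 1$. So there are at most $1/\rho_0$ unsafe instances, and your count returns at best $\Omega(1/\rho_0)$. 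The clash between $D(R)\asymp\rho_0/N$ and $D(R_j)\asymp\rho_0/N$ with $K>1$ pieces is a symptom of this.
\item More decisively, consider the auditor that draws pools for free, computes their champions, queries $\lceil\rho_0^{-1}\ln(1/\delta)\rceil$ of them, and rejects on any value differing from $I_0$'s. It passes $I_0$ surely and rejects every $I_j$ with probability at least $1-\delta$. Your argument invokes soundness only on the $I_j$, so any bound it proves must hold for this auditor too.
\end{enumerate}
The slip is the estimate ``hitting probability at most $T$ times the bad mass.'' An informed auditor does not query $D$-distributed points. It filters free draws through $\fh$, which by your design exposes $R$, or it queries champions. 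Each query then lands in $R_j$ at rate $D(R_j)/D(R)$, or at rate at least $\rho_0$, not $D(R_j)$. Your family realizes a champion-side (over-prediction) failure. That is exactly what the selection-aware audit of Theorem~\ref{si:thm:audit}(iii) certifies at $O(\rho^{-1}\log(1/\delta))$, so your planned final comparison against that bound should have flagged the contradiction.

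\textbf{The missing idea is to reverse the roles.} The trap must be a rare \emph{good} candidate that the surrogate demotes: an unselected winner that champion sampling never visits and that $\fh$ cannot localize. The paper's construction is:
\begin{itemize}
\item $D$ uniform on $[0,1]$;
\item $\fh=-1$ on $[0,\tfrac12]$ and $\fh(x)=x$ on $(\tfrac12,1]$;
\item $[0,\tfrac12]$ cut into $K=\lfloor N/(4\rho_0)\rfloor$ cells $C_j$, each of mass $2\rho_0/N$;
\item the trivially safe $f_0\equiv 0$ against $f_j=B\,\mathbf 1\{C_j\}$.
\end{itemize}
The champion enters the demoted half only with probability $2^{-N}\le\rho_0/2$. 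So under $f_j$, failure is the \emph{co-occurrence} event that the pool hits $C_j$, which has probability at least $2\rho_0(1-\rho_0)-\rho_0/2\ge\rho_0$. Because these are not champion events, $\sum_j\Prob_{f_j}(\mathrm{FAIL})$ may be of order $N$, and $K$ may be of order $N/\rho_0$.

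Your coupling and counting then go through verbatim. Transcripts under $f_0$ and $f_j$ agree until some query lands in $C_j$. Disjointness of the cells gives $\sum_j\mathbf 1\{T_j\}\le Q$, and so $\E_{f_0}[Q]\ge K\big(\Prob_{f_0}(\textsc{pass})-\delta\big)$. Finally, the upper bound this matches up to $\log(1/\delta)$ is the champion-inversion audit's $O((N/\rho_0)\log(1/\delta))$ rounds (Corollary~\ref{si:cor:ag3}), not the $O(\rho^{-1}\log(1/\delta))$ over-prediction audit.
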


\begin{remark}[what $\qsel$ does and does not measure]
\label{rem:conservative}
Theorem~\ref{thm:lb} is a statement about \emph{query complexity}: no
audit design certifies deployed selection quality with asymptotically
fewer oracle calls. It is not a claim that
$N\,\overline{q}_\delta(\qsel)$ is the tightest attainable
\emph{numerical} certificate, and the distinction matters in practice.
By construction $\qsel$ counts a fresh draw that beats the champion
whether or not the surrogate would itself have selected that draw had it
been in the pool. Restricting the count to draws the surrogate ranks
below the champion---the only ones that can witness a regret
failure---yields a smaller statistic $\qsel^{\ast} \le \qsel$, so the certificate
reported here is conservative. Two consequences deserve emphasis. First,
the certificate does not vanish for a perfect selector: a surrogate whose
scores are the true values has regret identically zero at every $N$ and
every $\varepsilon$, yet its $\qsel$ is strictly positive whenever the
value distribution has mass above the champion. The certified bound is
therefore a property of the audit statistic together with the landscape,
and must not be read as a property of the landscape alone---in
particular, a bound exceeding $\rho_0$ for an idealized surrogate does
\emph{not} establish that the guarantee is unattainable on that
population. Second, whenever ground truth is exhaustively available, the
deployed failure probability is directly estimable at no additional
oracle cost, and that direct estimate should be reported alongside the
certificate; on the benchmark landscapes we have examined the certificate
runs several-fold above it. Live campaigns, where the pool-best is never
observed, are precisely the regime in which the conservative certificate
is the only available instrument, and are the regime
Theorem~\ref{thm:lb} addresses. This remark governs the reading of
every empirical saturation reported below, in particular
Figure~\ref{fig:budgetmaps}(c): where the certificate does not reach a
license level at any budget, the correct conclusion is that this
statistic does not earn it at these settings, never that no selector
could.
\end{remark}

\noindent Value licenses (Theorem~\ref{thm:dichotomy}b) escape the
$N$-floor only because they certify a different, weaker object---which is
precisely why the framework recommends them.

% ====================================================================
\section{Methods}\label{sec:methods}

\subsection{Ground-Truthed Design Tasks}\label{sec:tasks}

Testing statements about certified search requires tasks whose ground truth
is known exhaustively, so that deployed failure, regret, and audit validity
can be scored exactly rather than estimated. We constructed or adopted
three. \emph{Task 1 (HPO-GBM):} all 1{,}152 configurations of a
gradient-boosting regressor (learning rate, tree count, depth, subsampling,
leaf size, feature fraction) evaluated by 3-fold cross-validation on the
diabetes clinical dataset; the oracle for each configuration is an actual
training run. \emph{Task 2 (HPO-SVM):} all 576 configurations of a
support-vector classifier (kernel family and degree, $C$, $\gamma$,
$\mathrm{coef}_0$, class weighting) evaluated identically on the
breast-cancer dataset. \emph{Task 3 (NAS-Bench-201; NB201):} the complete benchmark
of 15{,}625 convolutional cells with trained accuracies on
CIFAR-10;\cite{dong2020nasbench} validation accuracy serves as the search
objective and held-out test accuracy is reported for selections, following
standard NAS practice. Across tasks the oracle tables comprise 17{,}353 real
training outcomes. Decision tolerances $\varepsilon=\tau$ were fixed at 5\%
of the objective range (Tasks 1--2) and 2 accuracy points (Task 3) before
experiments were run.

\subsection{Surrogate Models and Training Regimes}\label{sec:zoo}

Surrogate zoos span the model classes used as performance predictors in
practice: random forests, extremely randomized trees, gradient boosting,
$k$-nearest neighbors, ridge regression, and a conservative
(quantile-$0.2$) gradient-boosting variant, each fit at multiple training
sizes (40--160 labeled candidates on Tasks 1--2, 100--400 on Task 3) and
seeds; configurations are featurized
by log-scaled numeric encodings (Tasks 1--2) or one-hot cell operations
(Task 3). Each zoo is trained under two regimes: \emph{representative}
(training candidates sampled uniformly) and \emph{biased} (training
candidates restricted to the below-median half of the objective), the
latter reproducing the data condition of offline model-based
design,\cite{trabucco2022design} where surrogates must extrapolate toward
the optimum.

\subsection{Audit and Deployment Protocols}\label{sec:protocols}

Deployment is simulated as argmax selection over pools of $N$ i.i.d.\
candidates, $N\in\{3,\allowbreak 10,\allowbreak 30,\allowbreak
100,\allowbreak 300,\allowbreak 1000\}$, with 3{,}000--6{,}000
replicate pools per condition. Passive audits estimate the marginal
over-prediction rate $p$ on uniform samples; selection-aware audits estimate the deployed
failure rate on champions, and the champion-inversion statistic $\qsel$ is
estimated from independent (pool, fresh-draw) pairs at two oracle calls per
round, exactly as licensed by Theorem~\ref{thm:auditgap}. Verified-search
experiments run recommendation loops with a shared warm-up (40 oracle
evaluations) under four proposal policies (random; representative-trained
RF; biased-trained ridge; and no-verification trust), 20 seeds each: at
every budget the recommendation is the best \emph{empirical mean among
oracle-evaluated candidates}---the separation principle's constraint on
where conclusions come from---and we report its true regret. (This
exercises the invariance of recommendation validity to proposal quality;
it does not run the full certified-elimination protocol of
Theorem~\ref{thm:se}, whose stopping rules and certificates are
exercised in the numerical verification suite of
Section~\ref{sec:verifmeth}.) Screening experiments measure oracle
evaluations per certified-good design at the top-1\% quality bar
($M=100$), comparing oracle-only search against licensed screening with
single-champion verification.

\subsection{Numerical Verification and
Reproducibility}\label{sec:verifmeth}

Every theorem with checkable content is exercised by scripted numerical
tests (more than 50 checks across four suites): exact counterexample
algebra for Theorems~\ref{thm:r2}--\ref{thm:evt}; empirical anytime
coverage under adversarial querying; the deterministic self-confirmation
trace of Theorem~\ref{thm:contamination}; exhaustive verification of the
ordinal equivalence over all pools of small spaces; Monte Carlo agreement
of the selection-tax construction to three decimals; the audit
lower-bound construction of Theorem~\ref{thm:lb}; and the noisy-license
slack of \SIref{si:thm:noisy}, including the asymmetric-noise
counterexample that makes the slack necessary. Every figure is generated
by scripted code with fixed seeds, and all reported statistics were
reproduced from scratch in a fresh computational environment to the
third decimal.

% ====================================================================
\section{Results and Discussion}\label{sec:results}

\subsection{The Selection Tax Is Real, Bounded, and a Property of the
Individual Fit}\label{sec:res-tax}

Figure~\ref{fig:tax} shows the deployed failure rate---the probability that
the champion of an $N$-wide screen is over-predicted beyond
tolerance---across surrogates, pool sizes, training regimes, and ten
independent fits per condition on Task 1. (Terminology, fixed
throughout this section: \emph{deployed failure} $q$ is this champion
over-prediction probability; the \emph{champion-inversion rate} $\qsel$
of Theorem~\ref{thm:auditgap} is the $\varepsilon$-regret audit
statistic, a fresh draw beating the champion; a \emph{license
violation} is a breach of a certified value floor.
Table~\ref{si:tbl:notation} in the SI collects all symbols.) Three observations. First, the
worst-case law of Theorem~\ref{thm:audit} was never violated in any of the
several hundred (fit, regime, $N$) cells measured across the three tasks.
Second, the audited marginal rate says remarkably little about deployed
risk, \emph{in either direction}: interpolating surrogates (random forest,
$k$-NN) audited at $p$ up to $0.33$ deploy at failure rates that fall to
zero as $N$ grows---the audit overstates risk by an order of
magnitude---while the linear ridge surrogate splits, fit by fit, between
the same benign decay and near-certain deployed failure
(Figure~\ref{fig:tax}c,d). Twin ridge fits on Task 2---same recipe, same
regime, audited at $p=0.247$ and $p=0.252$---deployed at failure $0.000$
and $1.000$ respectively at $N=1000$. Third, therefore, the error--selection
alignment that decides whether the tax bites is not a property of the model
class, the training regime, or the audited rate, but of the individual
fit on the individual task: the same ridge recipe is bimodal on Tasks 1--2
yet deterministically catastrophic on Task 3, reaching $95\%$ deployed
failure at $N=100$ and $100\%$ at $N=1000$ in \emph{both} training regimes,
its linear extrapolation systematically optimistic in the combinatorial
architecture space. Reputation does not transfer; recipes do not transfer;
only an audit of the deployed fit discriminates---which is the empirical
face of the per-fit licensing that
Theorems~\ref{thm:audit} and~\ref{thm:pessimism} prescribe.

\begin{figure}[H]
\centering
\includegraphics[width=0.49\textwidth]{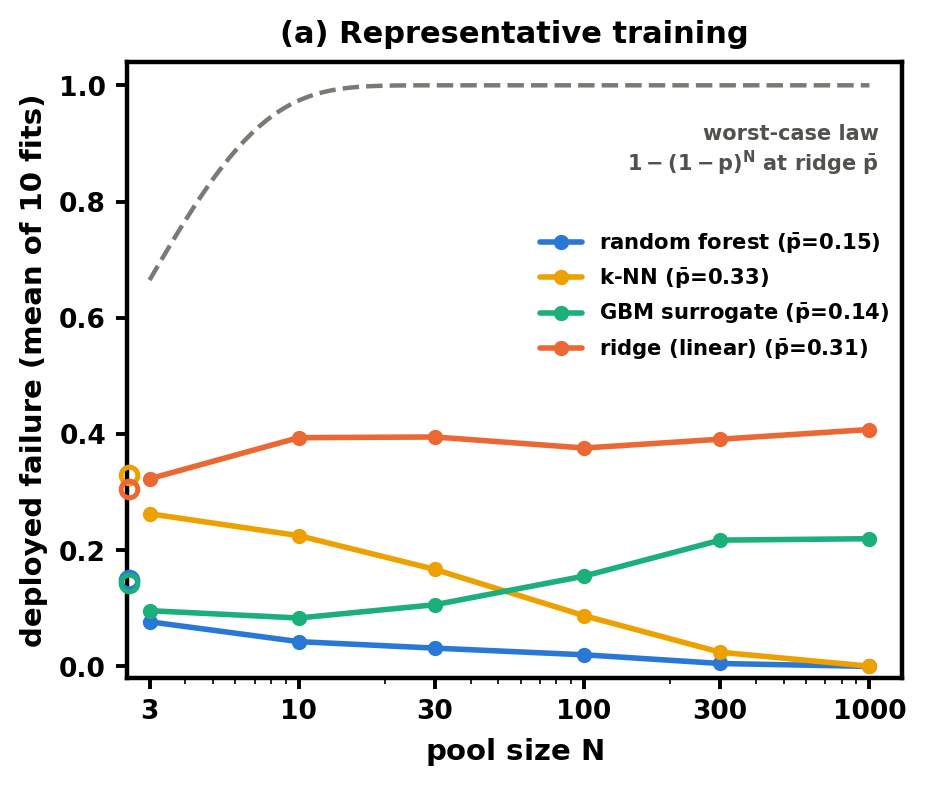}\hfill
\includegraphics[width=0.49\textwidth]{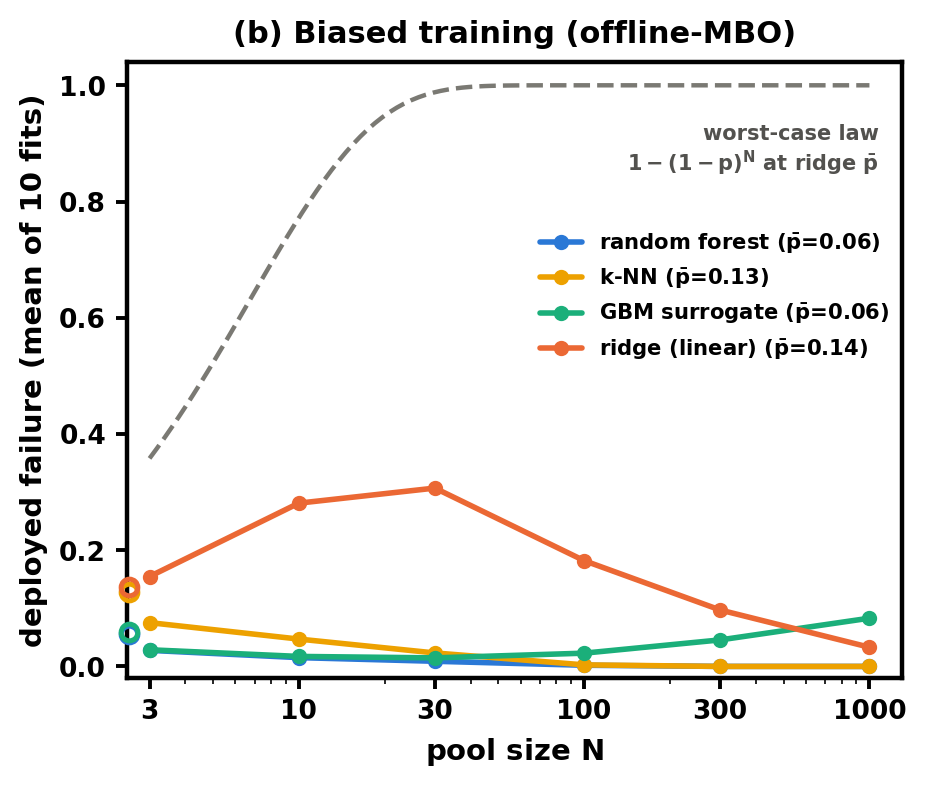}\\[3pt]
\includegraphics[width=0.49\textwidth]{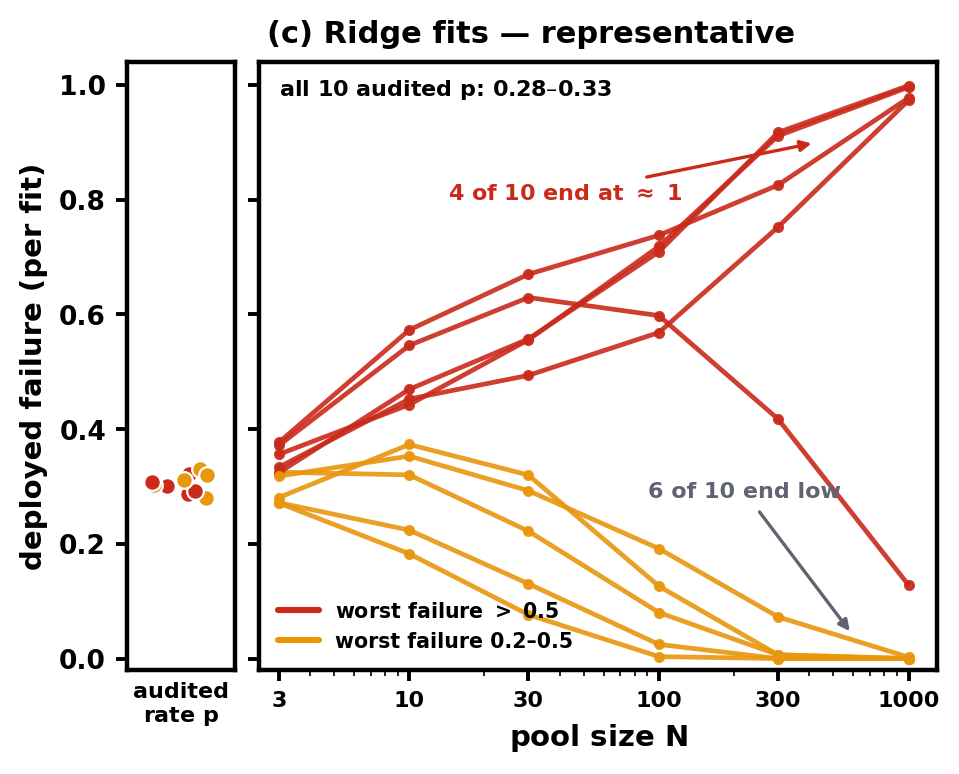}\hfill
\includegraphics[width=0.49\textwidth]{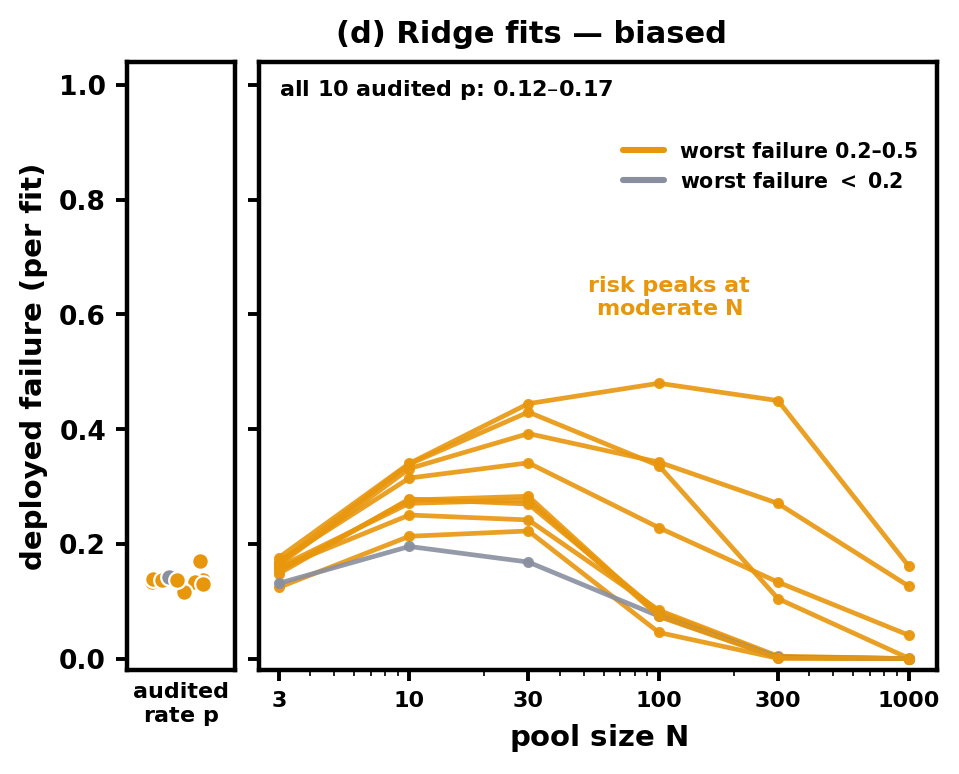}
\caption{The selection tax measured on Task 1 (hyperparameter optimization;
oracle = actual model training; ten fits per condition, train size 120).
(a,b) Mean deployed failure at the selected configuration versus screen
width $N$, under representative and below-median (offline
model-based-optimization\cite{trabucco2022design}) training data. Hollow
circles at the left edge: mean audited marginal rate $p$; dashed curve: the
worst-case law $1-(1-p)^N$ at the ridge surrogate's audited $p$, never
violated. (c,d) The same data disaggregated for the ridge surrogate: each
line is one fit, colored by the worst deployed failure it ever reaches
(gray: stays below $0.2$; amber: peaks between $0.2$ and $0.5$; red:
exceeds $0.5$). The narrow strip to the left of each panel plots every
fit's audited marginal rate $p$ on the \emph{same} vertical scale, colored
by the same outcome: under representative data (c) all ten audited rates
sit in one tight band ($0.28$--$0.33$) while four fits climb to deployed
failure ${\approx}\,1$, five fall below $0.01$, and one settles at
$0.13$; under biased data
(d) the audited band is $0.12$--$0.17$ while every fit's risk peaks at
moderate $N$ (up to $0.48$, a $3.5\times$ understatement) before decaying.
The audit cannot tell the fits apart; deployment can---the
audit--deployment gap of Theorem~\ref{thm:audit} realized fit by fit. A
license must therefore attach to the deployed fit, not to the recipe, and
must be selection-aware to see the difference.}
\label{fig:tax}
\end{figure}

\subsection{Audit Design Determines What an Audit Can
See}\label{sec:res-audit}

The two-sided failure of passive auditing predicted by
Theorem~\ref{thm:audit} appears cleanly in practice. On Task 1, certifying
a deployed failure target of $\rho=1\%$ at $N=100$ by passive means
requires certifying $p\le10^{-4}$---impossible for a conservative surrogate
whose true marginal rate was $p=0.048$ even though its deployment was
measured \emph{perfectly safe} (deployed failure $0.000$); the passive
audit thus rejects a safe deployment. Conversely, the marginal rate does not even determine the
\emph{direction} of deployed risk: the Task-2 ridge twins of
Section~\ref{sec:res-tax} audit at $0.247$ versus $0.252$ yet deploy at
failure $0.000$ versus $1.000$, and on Task 1 the benign $k$-NN carries a
\emph{higher} mean audited rate ($0.33$) than the dangerous ridge
($0.31$). What distinguishes safe from malign lives in \emph{where} the
errors sit relative to the selection, which uniform sampling never
probes; the champion audit reads it off, and resolved every case
correctly, at two oracle calls per round. (Cost accounting: certifying $q\le\rho=1\%$ at $\delta=0.05$
needs $O(\rho^{-1}\log(1/\delta))\approx300$ audit rounds by
Theorem~\ref{thm:audit}; the 15{,}000 rounds our experiment spent is the
\emph{measurement} budget used to estimate deployed rates to three
decimals for validation, not the license cost.) The
practical rule is the one Theorems~\ref{thm:audit} and~\ref{thm:lb} prove:
an audit must sample the deployment workflow---pools, selection,
champions---rather than the data distribution, a prescription that
parallels the ballot-comparison versus ballot-polling distinction in
election auditing.\cite{stark2012rla} The rule also separates the audit
license from its nearest deployed alternative: a one-sided marginal
split-conformal floor,\cite{jin2023selection} given the same 120 oracle
calls and the same nominal $0.20$ level, is honest on fresh random
candidates but violated on up to $99\%$ of deployments \emph{at the
champion} (16 configurations, Tasks 1 and 3; audit license: at most
$15\%$ in every one)---marginal calibration prices errors where
selection does not look (SI Section~\ref{si:sec:emp-conformal}).

\subsection{Which Diagnostic Predicts Deployed
Performance}\label{sec:res-diag}

Across an expanded zoo of 432 fits covering all three tasks (six model
families $\times$ three training sizes $\times$ four seeds, per regime per
task), we correlated three candidate diagnostics with deployed regret at
$N=100$: pool $R^2$, the pairwise $\varepsilon$-margin inversion rate
$\qinv$ (the audit target implied by Theorem~\ref{thm:ordinal}), and the
champion-inversion rate $\qsel$ (the statistic licensed by
Theorem~\ref{thm:auditgap}). All uncertainty intervals are bootstrap 95\%
CIs over fits. The deployment-calibrated statistic dominates in all six
task--regime conditions, with Spearman correlations of $0.89$--$0.99$
(Task 1: $0.96$ representative, $0.99$ biased; NAS-Bench-201: $0.96$
representative, $0.97$ [$0.95$--$0.98$] biased; Task 2: $0.89$ in both
regimes). $R^2$ remains serviceable on the smooth, low-dimensional
hyperparameter tasks ($0.59$--$0.80$) but collapses on the combinatorial
benchmark in \emph{both} regimes ($0.38$ [$0.13$--$0.59$] representative;
$0.33$ [$0.08$--$0.54$] biased)---confidence intervals disjoint from
$\qsel$'s. The pairwise rate $\qinv$ sits between the two
($0.56$--$0.86$), consistent with its $\Theta(N^2)$-versus-$\Theta(N)$
disadvantage (Theorem~\ref{thm:auditgap}). Two qualifications temper this claim. First, at the measurement
budgets above, $\qsel$ is itself a Monte Carlo probe of the deployment
law it tracks---high correlation is calibration, not clairvoyance; the
substantive finding is that a statistic \emph{measurable as an audit}
tracks deployment while the field's accuracy diagnostics do not. Second,
at the \emph{practical} audit budget the framework actually prescribes
($T_a=120$ rounds, as in the drift study), the correlation attenuates to
$0.80$--$0.94$ across the six conditions yet still dominates $R^2$ in
every one---most sharply on the combinatorial benchmark ($0.80$
[$0.70$--$0.86$] versus $0.33$ [$0.08$--$0.54$], CIs disjoint; SI
Section~\ref{si:sec:emp-div}). The one attenuation in the
full-budget table---Task 2, where every diagnostic weakens and $\qsel$
falls to $0.89$---has a structural explanation the theory anticipates: $24.8\%$ of that pool lies
within $\varepsilon$ of the optimum (versus $8.4\%$ on Task 1 and $2.5\%$
on Task 3), so deployed regret is compressed toward zero and every
ranking-based correlation attenuates; in margin-condition terms
(Theorem~\ref{thm:margin}), heavily tied tasks are exactly the ones where
selection matters least. Where surrogates are deployed on the search
spaces and data conditions they actually face, the field's default
diagnostic ranges from serviceable to nearly uninformative---and one
cannot tell which from the diagnostic itself---while the
deployment-calibrated statistic, measurable at two oracle calls per audit
round, is consistently the closest thing to sufficient that we measured.

\subsection{Licenses Hold, and Must Be Earned Per
Fit}\label{sec:res-license}

The no-disappointment license of Theorem~\ref{thm:pessimism} held in every
deployment of a certified-conservative surrogate: on Task 1, two
160-label quantile fits audited at one-sided rates $0.075$ and $0.130$
violated their floors on at most $0.027$ and $0.081$ of deployments at
$N=3$, below each fit's own audited rate at every $N$ and decaying to
$0.000$ by $N=1000$ (Figure~\ref{si:fig:license}). The instructive failure is the license that was \emph{not}
earned: on Task 2, one of two fits of the same quantile-$0.2$
recipe---identical architecture, hyperparameters, and training
protocol---audited at a one-sided rate of $p=0.116$ and violated its
would-be guarantee on $30\%$ of deployments at $N=100$, while its sibling
fit ($p=0.090$) deployed cleanly. Conservatism is a training heuristic;
the license is a measurement, and it attaches to the fit, not the recipe.

Licenses also survive contact with real drift---by expiring and being
cheaply re-earned, exactly as the theory prescribes. We audited
CIFAR-10-trained surrogates on NAS-Bench-201 under the quantile license of
Theorem~\ref{thm:dichotomy}b ($M=100$, $\rho=\delta=0.05$: $T_a=120$
oracle calls per license), then deployed the same fits against two genuinely drifted
objectives, the CIFAR-100 and ImageNet16-120 validation accuracies of the
same architectures. Every one of 12 home licenses held (violation rates
$0.03$--$0.10$ against the $4\rho=0.20$ ceiling), and every one of 24
licenses \emph{renewed} against the drifted truth at the same $120$-call
cost also held ($0.06$--$0.13$)---renewal costs no more than the original
audit and amortizes over the campaign it re-licenses. Shifting the candidate generator instead of the objective tells the
same story at the distribution margin: with drift of exactly known
total variation toward weak architectures, the worst-case $M\beta$
transfer bound is already vacuous at $\beta=0.03$ while measured
stale-license violations first breach the ceiling only at
$\beta\approx0.2$---and renewal under the drifted generator restored
validity in 48 of 48 cases (SI Section~\ref{si:sec:emp-ddrift}).
Licenses void on drift not because they fail gracefully but because
$\beta$ is unobservable; renewal is the measurement that replaces the
unmeasurable bound. The objective drift
itself delivers one more verdict against global diagnostics: the
surrogate's global rank correlation with the drifted objectives,
$0.61$--$0.69$, is indistinguishable from the $0.59$--$0.70$ it scores
against the audited objective itself---the global diagnostic does not
even register the change of truth---yet its champions remain
${\sim}87$th-percentile architectures under those objectives, and the
champion-inversion statistic $\qsel$, re-measured at two oracle calls
per round, correctly reports the preserved top-of-ranking quality
(${\approx}0.08$ under all three truths, SI Section~\ref{si:sec:emp-drift}). A
global correlation cannot see the top of the ranking, which is the only
thing selection deploys; the deployment-calibrated audit prices exactly
that, in either direction.

\subsection{The Measured Economics of Audited Surrogate
Use}\label{sec:res-econ}

Both halves of the dichotomy (Theorem~\ref{thm:dichotomy}) are visible in
the data. In the verified search loop (Figure~\ref{fig:loop}),
surrogate-guided proposals reached the target recommendation quality in
94 oracle evaluations against 204 for random search---a $2.2\times$
saving---while the deliberately biased surrogate cost efficiency
throughout most of the budget (its mean recommendation regret stays
above random's until ${\approx}320$ of the 340 calls) but never the
validity of the oracle-grounded recommendation, exactly the damage
containment Theorem~\ref{thm:invariance} promises. (Running the full
certified-elimination protocol on the same task prices the difference
between recommendations and certificates: $\varepsilon$-certificates at
$5\%$ of the range cost a mean of $480{,}555$ oracle calls---correct in
10 of 10 runs---against 340 calls for the uncertified recommendation
loop; SI Section~\ref{si:sec:emp-certcost}.) Removing verification
inverted the picture: the trusted biased surrogate claimed an objective of
$0.43$ and delivered $0.34$---a claim--delivery gap of $22\%$ of the
objective range, on a selection whose true regret was $38\%$ of that
range---at the moment the verified pipelines were delivering certified
values at par. In the screening experiments, licensed screening certified
top-1\% designs on NAS-Bench-201 for $25\times$ fewer oracle evaluations
than random-draw oracle-only search ($33\times$ on Task 1), with
champions averaging $87.1\%$ test accuracy against a global optimum of
$89.2\%$ at one verification each---the conditional separation of
Theorem~\ref{thm:dichotomy}b realized at $M=100$, with the audit itself
certifying the condition. Two accounting caveats belong beside these
factors. First, they amortize the audit ($T_a=120$ calls) and the 200
training labels to zero; all-in, a Task-3 campaign certifying $T$
designs pays $1/0.253+320/T$ oracle calls per certified-good design---%
$6\times$ cheaper than the baseline at $T=25$, $14\times$ at $T=100$,
$23\times$ at $T=1000$, approaching $25\times$ asymptotically---with the
crossover already at $T\approx4$ designs. Second, the oracle-only
baseline is the $\Omega(1/p_v)$ random-draw bound, minimax in the
i.i.d.-marks model but blind to exploitable structure in $f$;
structure-exploiting oracle-only searches (Bayesian optimization,
successive halving) could close part of the gap, so the factors quantify
licensed screening against unassisted draws, not against the best
conceivable oracle-only practice.

\begin{figure}[H]
\centering
\includegraphics[width=0.55\textwidth]{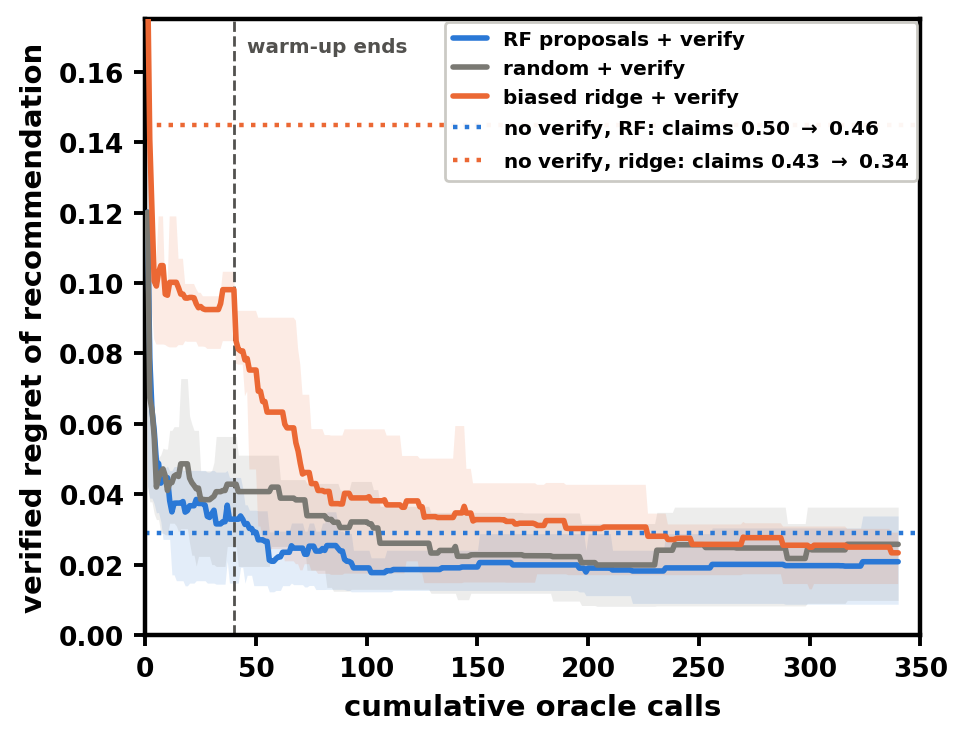}
\caption{Oracle-grounded recommendation quality versus cumulative oracle
cost (Task 1; 20 replicates; band, interquartile range). Surrogate
proposals halve the oracle cost of reaching target recommendation
quality; a biased surrogate's proposals are initially worse than random,
yet the validity of the oracle-grounded recommendation is untouched
(Theorem~\ref{thm:invariance}). Trusting either surrogate without
verification (dotted) delivers less than claimed. (The certified
protocol proper is priced in SI Section~\ref{si:sec:emp-certcost}.)}
\label{fig:loop}
\end{figure}

\subsection{Budgeting Labels between Learning and
Certification}\label{sec:res-budget}

The preceding subsections price certified search \emph{given} a
surrogate. This one opens the ledger one step earlier. Every oracle call
in a campaign buys one of two different goods: \emph{training labels}
buy a better selector (they lower the deployed inversion rate $\qsel$ the
audit will measure), while \emph{audit rounds} buy certainty about the
selector (they improve nothing, but without them nothing is certified).
Writing $m$ for training labels and $T_a$ for audit rounds at two calls
each, the total cost of a license at deployed failure $\rho$ decomposes
as
\begin{equation}\label{eq:frontier}
B(\rho)\;=\;m^{*}(\pi,\rho)\;+\;2\,T_{\min}(N,\rho,\delta),
\qquad
T_{\min}=\min\{\,T:\;N\,\bar q_{\delta}(0,T)\le\rho\,\}
\;\approx\;\tfrac{N}{\rho}\ln\tfrac1\delta,
\end{equation}
where $\bar q_{\delta}(k,T)$ is the one-sided binomial
(Clopper--Pearson) upper confidence limit and $m^{*}(\pi,\rho)$ is the
number of labels the labeling policy $\pi$ needs before the certificate
becomes earnable. The second term is a floor: it is identical for every
policy and every model, cannot be reduced by any audit design
(Theorem~\ref{thm:lb}), and spending beyond it buys nothing (SI
Section~\ref{si:sec:emp-budget}, Proposition~\ref{si:prop:floor}). All
strategy therefore lives in the first term, and the empirical question
is what that term looks like.

We measured the full plane on ground-truthed synthetic landscapes of
three difficulties (a fat-plateau optimum with deceptive
mid-value decoys; 12{,}000 candidates; $N=100$, $\rho=0.10$,
$\delta=0.05$, $\varepsilon$ at $5\%$ of range; protocols and landscape
constructions in SI Section~\ref{si:sec:emp-budget}). At every grid
point $(m,T_a)$---32 training checkpoints $\times$ 20 audit
depths---the map in Figure~\ref{fig:budgetmaps} reports the
\emph{strongest certificate earnable there},
$N\bar q_{\delta}(\text{events},T_a)$, with each license level a
contour. The map's shape classifies the campaign's regime at a glance.
On an easy landscape (two descriptors) the contours are flat: the
learning curve saturates within a few hundred labels and only the
promise costs anything---the campaign is \emph{audit-limited}
everywhere. Lifting the same landscape to eight dimensions (six
nuisance descriptors) bends each contour into an L: a vertical training
wall meets the horizontal audit floor, the corner is that license's
cheapest budget split ($600$ labels $+\,6{,}742$ audit calls
$=7{,}342$ total at $\rho=0.10$ under uniform labeling), and both
costs bind. And on a landscape with near-tied peaks---the crowded-race
geometry of Section~\ref{sec:cost}---the target contours never enter
the plane at any budget: the field saturates at $\approx0.73$, the
value the champion-audit statistic returns there even for a surrogate
scoring at the truth, so no amount of either labeling or auditing moves
it. Remark~\ref{rem:conservative} governs how that saturation may be
read. It is a joint property of the statistic and the landscape and
\emph{not} a demonstration that the guarantee is unattainable on this
population; a perfect selector has zero regret here as everywhere, and
restricting the count to draws the surrogate would have passed over
sends the saturation level toward zero at no additional oracle cost.
What the panel does establish is operational: at these settings, with
this statistic, no budget split earns the
license, so a campaign in this regime should reach for a tighter
estimator where ground truth allows one, and otherwise respecify
($\varepsilon$, $N$, or $\rho$) rather than spend.

\begin{figure}[H]
\centering
\includegraphics[width=0.325\textwidth]{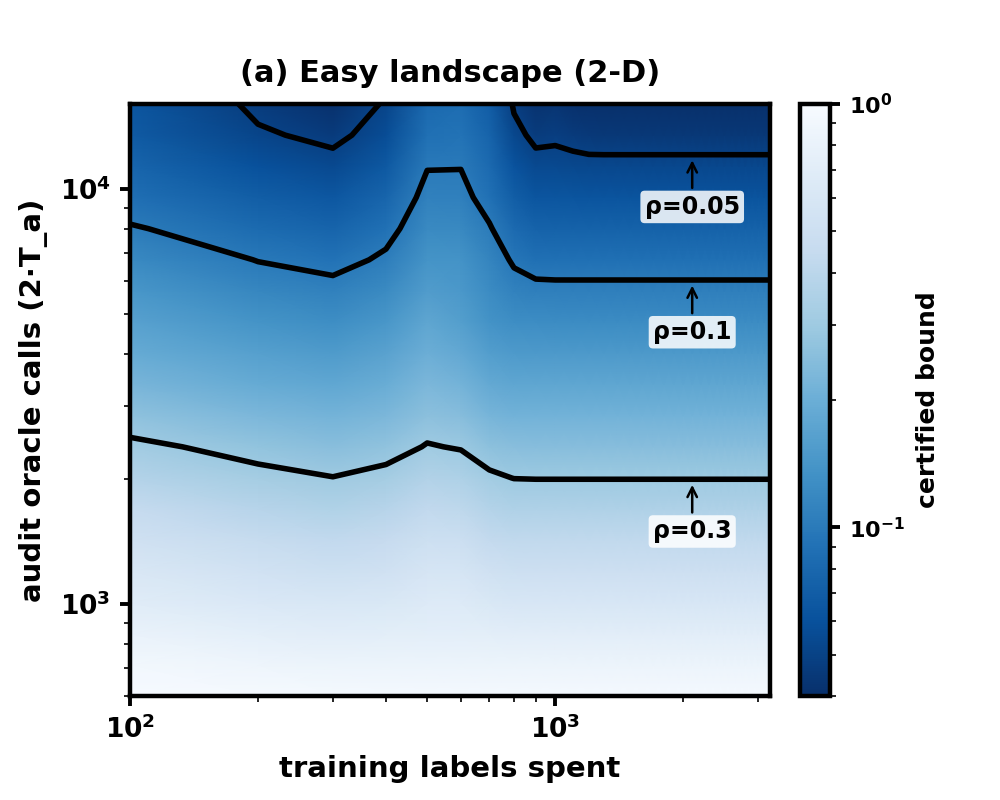}\hfill
\includegraphics[width=0.325\textwidth]{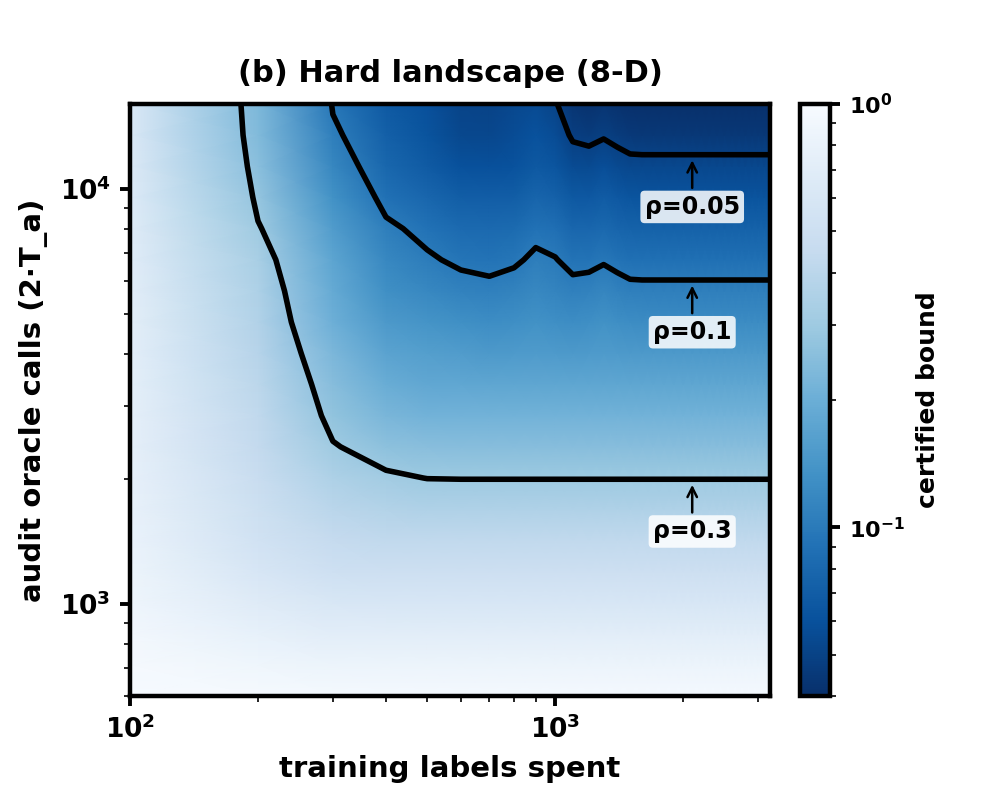}\hfill
\includegraphics[width=0.325\textwidth]{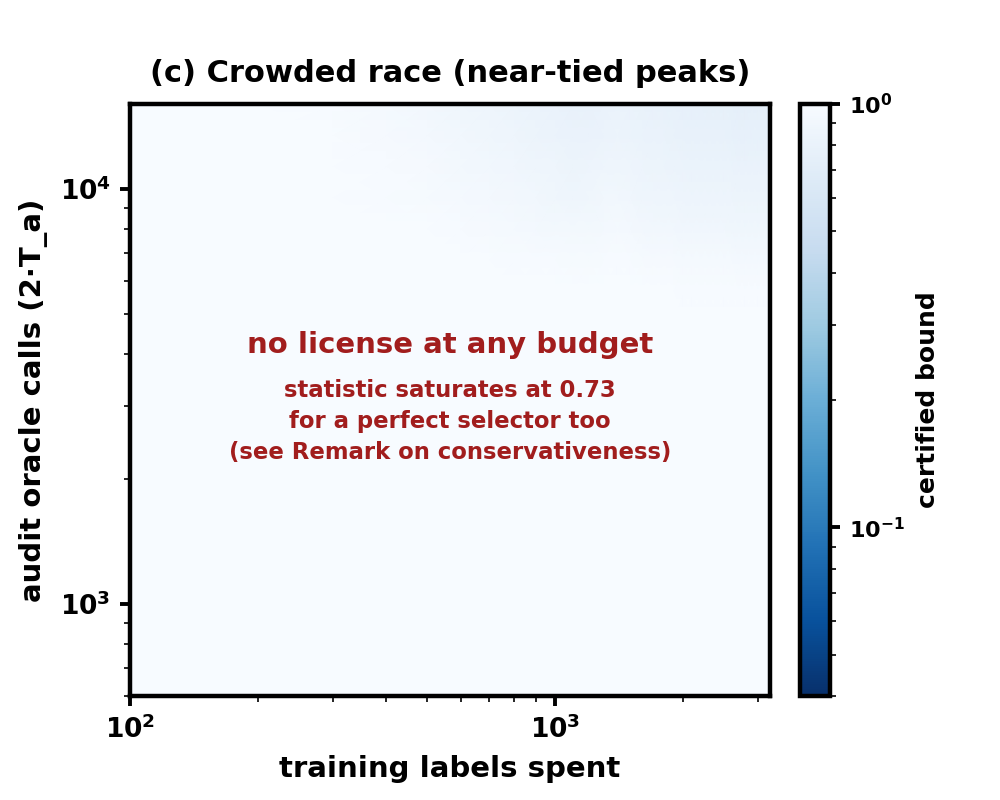}
\caption{The budget plane: strongest earnable certificate over
(training labels, audit calls), license levels as contours; one color
scale. (a) Easy landscape: flat contours---audit-limited everywhere;
only the promise costs. (b) Hard landscape (six nuisance descriptors):
L-shaped contours---learning wall meets audit floor; the corner is the
cheapest license. (c) Crowded race (near-tied peaks): no contour at any budget---the
certificate saturates above every license level, so no budget split
earns one. Panel (c) reports the champion-audit statistic of
Section~\ref{sec:protocols}, which does not vanish for a perfect selector
(Remark~\ref{rem:conservative}); the saturation is therefore a
property of that statistic together with the landscape and is not
evidence that the guarantee is unattainable on this population.}
\label{fig:budgetmaps}
\end{figure}

Where both walls bind, the labeling policy moves the vertical one---and
it moves it in a direction that inverts optimization intuition
(Figure~\ref{fig:policyrace}a). Racing four policies at equal budgets,
the pure exploitation rule---label where the model predicts the
top, the batch analogue of standard Bayesian-optimization
acquisition---was the \emph{most expensive} route to a license on every
landscape tested: $3\times$ the labels of uniform sampling at the
$\rho=0.10$ corner (1{,}800 vs.\ 600), with the same audit depth, and on
the two-descriptor multimodal landscape its certified bound moved
\emph{backward} twice as labels accumulated (decoy lock-in: runs that
latched onto a deceptive region densified it while the true plateau
stayed unmodeled). Uniform and uncertainty sampling certified cheapest
throughout. The mechanism is the paper's central asymmetry one level
up: the license prices the global ordering at the top of the
landscape---including the \emph{suppression of every rival
region}---and exploitation never purchases refutation. Acquisition
rules built to find the maximum are not built to certify the selector.
The three licenses also part company here
(Figure~\ref{fig:policyrace}b): the honesty and value-floor licenses
are earned within ${\sim}250$ audit calls at every training level,
while the selection license cannot be earned below its $5{,}990$-call
audit floor and is unearnable below $400$ labels at \emph{any} audit
depth---a ${\sim}24\times$
spread that makes the selection tax of Section~\ref{sec:cost} legible
directly in budget coordinates, and shows that only the selection
promise couples to the labeling policy at all.

\begin{figure}[H]
\centering
\includegraphics[width=0.49\textwidth]{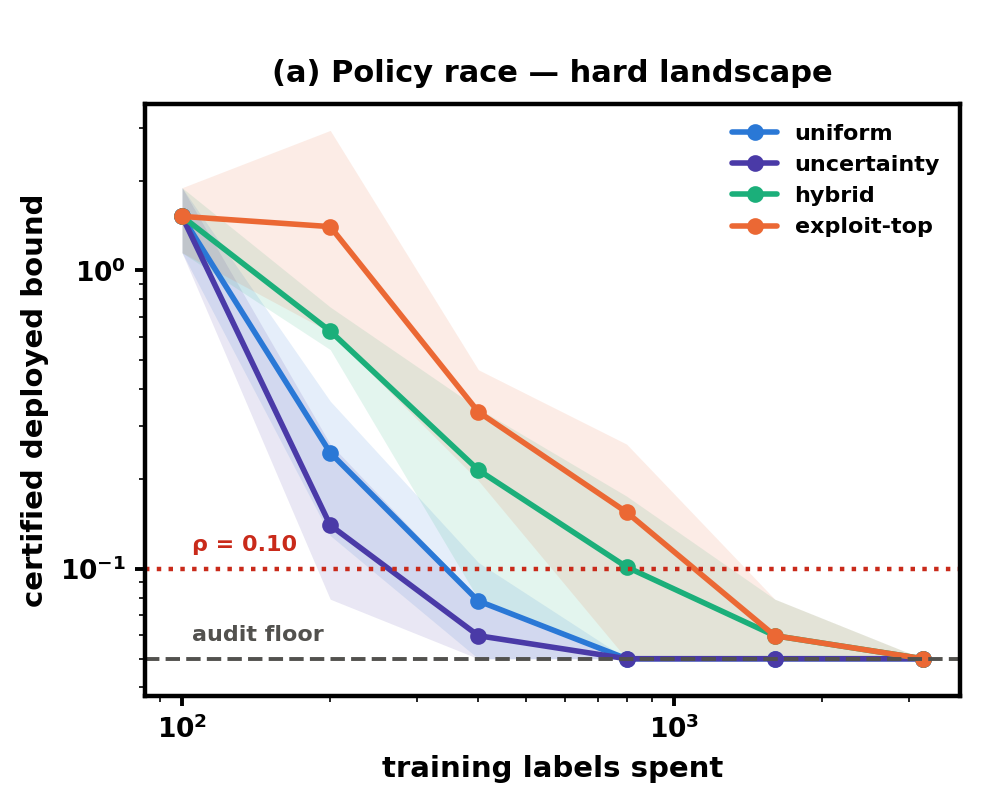}\hfill
\includegraphics[width=0.49\textwidth]{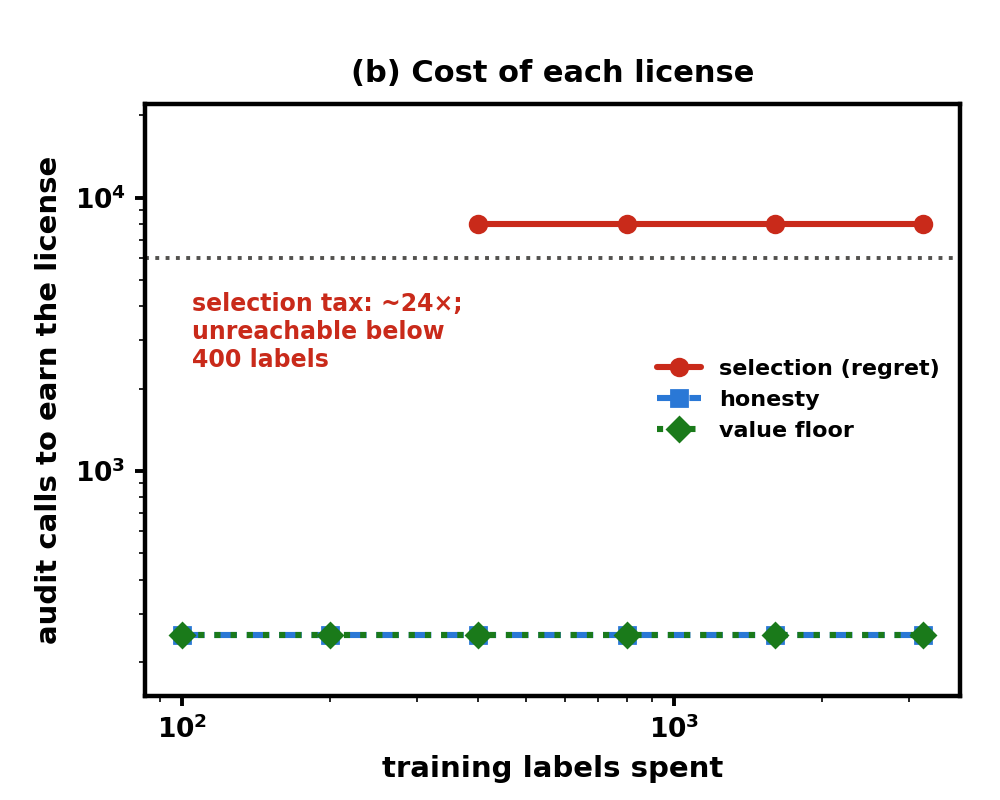}
\caption{(a) Certified deployed bound versus training labels for four
labeling policies (hard landscape; three seeds, min--max bands).
Exploitation-style acquisition is the most expensive license-earner;
uniform and uncertainty sampling reach the audit-resolution floor
first. (b) Audit calls required to earn each license versus training
labels (uniform policy): honesty and value-floor licenses cost
${\sim}250$ calls; the selection license requires at least its
$5{,}990$-call audit floor and ${\ge}400$ labels at any audit depth.}
\label{fig:policyrace}
\end{figure}

Finally, we asked what model accuracy is actually worth by
manufacturing it: perfect predictions corrupted by overestimating a
fraction $f$ of candidates by $s$ (overestimation being the failure
mode selection amplifies), gridded over $(f,s)$
(Figure~\ref{fig:damage}a). The license's accuracy currency is
\emph{not} the fraction of labels that are right: sub-$\varepsilon$
overestimates were free at every rate tested (up to $20\%$ of all
candidates), while the $\rho=0.10$ boundary sits at
$s\approx1.2$--$2.3\,\varepsilon$ across two decades of $f$---a nearly
horizontal line whose message is that error \emph{size at the point of
selection}, in units of $\varepsilon$, is what a license buys and
loses. A model wrong by $\varepsilon$ on $5\%$ of its candidates is
licensed as-is (certified bound $0.051$). When a model does fail, the
repair bill depends almost entirely on \emph{where} one repairs
(Figure~\ref{fig:damage}b): correcting predictions in order of
predicted value---verifying one's champions and writing down the
truth---restored the license after $50$--$530$ corrections, while
random correction failed to restore any tested case within $6{,}400$;
a ${\ge}12$--$130\times$ asymmetry. Champion verification, the
framework's audit primitive, reappears as its cheapest repair policy.

\begin{figure}[H]
\centering
\includegraphics[width=0.49\textwidth]{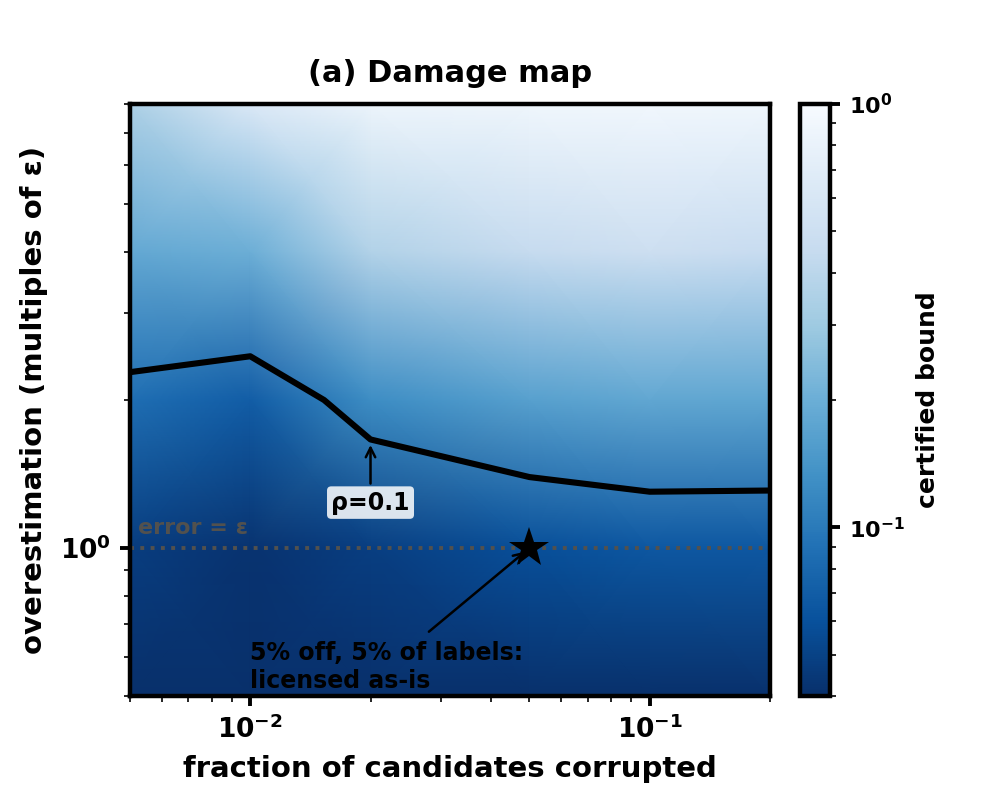}\hfill
\includegraphics[width=0.49\textwidth]{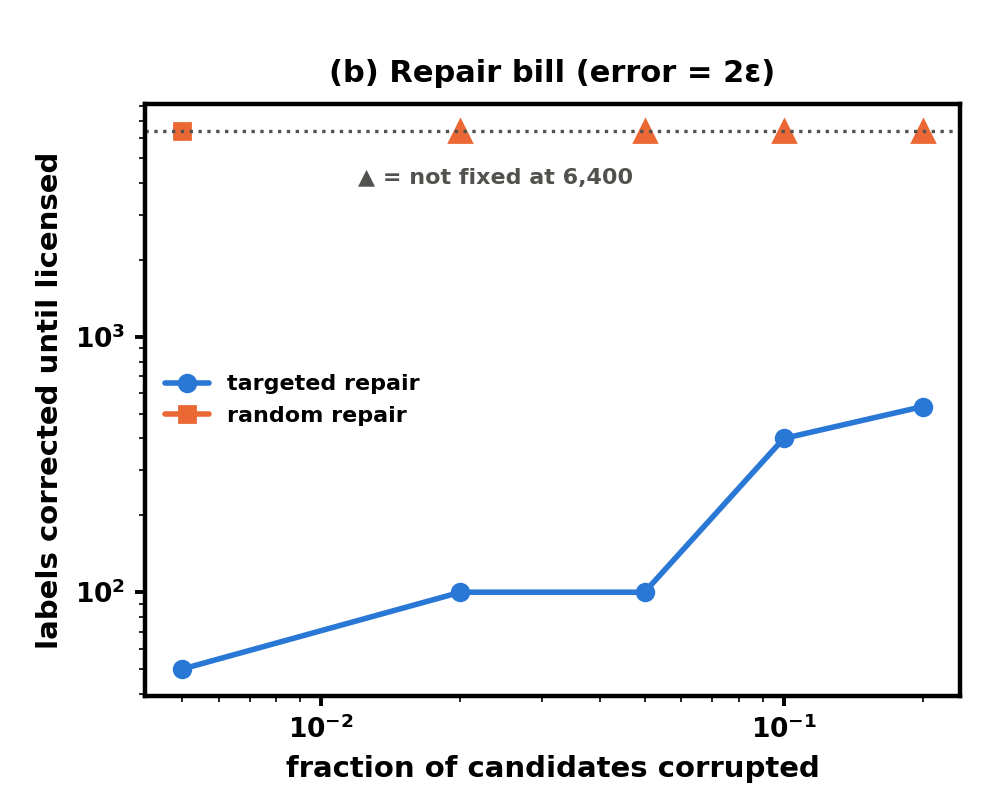}
\caption{(a) Damage map: strongest certificate under controlled
corruption (fraction of candidates overestimated $\times$ error size in
units of $\varepsilon$). Sub-$\varepsilon$ errors are free at any rate;
the license boundary tracks error size, not accuracy percentage; the
starred case ($\varepsilon$-sized errors on $5\%$ of candidates) is
licensed as-is. (b) Repair bill: labels corrected until the license is
restored, fixing highest-predicted candidates first versus at random
(error $=2\varepsilon$).}
\label{fig:damage}
\end{figure}

Three budget rules follow, each attached to a measurement above. The
audit-to-training ratio is a diagnostic: audit-heavy means the
\emph{claim} is the bottleneck (weaken $N$ or $\rho$, or fall back to
the ${\sim}250$-call one-sided licenses), training-heavy means the
problem is. The floor in Eq.~\eqref{eq:frontier} cannot be negotiated,
only the promise can. And the practice landscape collapses onto a
single dial---\emph{how much verification does this model still
owe?}---with measure-everything proposal loops (Bayesian optimization
among them) at one end, paying full price because they never deploy
trust; licensed screening at one verification per batch at the other;
and a refused license sending the model back to propose-and-verify.
The dichotomy of Theorem~\ref{thm:dichotomy} says when crossing that
dial pays; the budget plane above says what the crossing costs.

\subsection{Practical Guidance}\label{sec:res-guidance}

The results compose into an operating doctrine whose every clause is a
theorem with an experiment attached: \emph{train on anything; certify only
with clean oracle statistics; audit where the search actually points;
re-audit when anything moves.} Table~\ref{tbl:summary} maps each
theoretical prediction to its measurement. For practitioners the doctrine
concretizes as: report only oracle-verified values; treat $R^2$ as a
development diagnostic, never a deployment one; before trusting a surrogate
as a selector, run the champion audit at two oracle calls per round and
size the license to the intended screen width $N$; prefer one-sided
(conservative) licenses when the reported value matters more than the
discovery rate; and void any license upon retraining or a shift in the
candidate generator. None of these steps requires assumptions about the
surrogate's architecture or training---which is what makes them
enforceable as review criteria for third-party models.

\begin{table}[H]
\caption{Theoretical predictions and their measurements across the three
ground-truthed tasks.}
\label{tbl:summary}
\small
\begin{tabular}{p{5.7cm}p{6.2cm}p{3.3cm}}
\toprule
Prediction (theorem) & Measurement & Task \\
\midrule
Deployed failure $\le 1-(1-p)^N$, tight only under error--selection
alignment (Thm~\ref{thm:audit}) & no violations in any cell; alignment is
per-fit: twin ridge fits at $p\approx0.25$ deploy at $q=0.000$ and
$q=1.000$; NB201 ridge deterministically at $q=1.0$ in both regimes
& 1, 2, 3 \\
Passive audits err in both directions; champion audits correct
(Thms~\ref{thm:audit},~\ref{thm:lb}) & safe deployment rejected
($p=0.048$, $q=0.000$); malign twin indistinguishable passively;
selection-aware audit correct in every case &
1, 2 \\
Accuracy decoupled from deployment (Thms~\ref{thm:r2}--\ref{thm:evt}) &
$R^2$ flat while deployed failure spans $0\to1.0$ & 1, 2, 3 \\
$\qsel$ is the deployment-calibrated diagnostic (Thm~\ref{thm:auditgap}) &
Spearman with regret $0.89$--$0.99$ in all six task--regime cells
(432 fits); NB201 biased: $0.97$ vs.\ $0.56$ ($\qinv$) vs.\ $0.33$ ($R^2$);
$\qsel$'s CI disjoint from both others & 1, 2, 3 \\
No-disappointment license, per-fit (Thm~\ref{thm:pessimism}) & holds at all
$N$ when earned; same recipe passes one fit ($p=0.090$), fails the other
($p=0.116\to30\%$ violations) & 1, 2 \\
Validity invariant to proposal quality (Thm~\ref{thm:invariance}) &
$2.2\times$ oracle saving from good proposals; biased proposals cost
efficiency only; unverified trust claims $0.43$, delivers $0.34$ & 1 \\
Conditional screening separation (Thm~\ref{thm:dichotomy}) & $25\times$
(Task 3) and $33\times$ (Task 1) certified-cost reduction at $M=100$ &
1, 3 \\
\bottomrule
\end{tabular}
\end{table}

% ====================================================================
\section{Conclusions}\label{sec:conclusions}

We have given a quantitative account of trust in machine-learning
surrogates for design: accuracy cannot anchor it
(Theorems~\ref{thm:r2}--\ref{thm:evt}); an information-flow architecture
secures it unconditionally, and treating predictions as measurements
admits a deterministic self-confirmation failure
(Theorems~\ref{thm:invariance}--\ref{thm:contamination}); its price is set
by close races (Theorems~\ref{thm:se}--\ref{thm:margin}); the criterion
for a model to act as an oracle is ordinal, verifiable only wholesale, and
only by audits that imitate deployment---a design that is provably
optimal among all audits
(Theorems~\ref{thm:ordinal}--\ref{thm:pessimism},~\ref{thm:lb}); and the
substitution pays, by a factor as large as the screening ratio, exactly
when the goal is generative and the audit passes---and, in the tabular
model, never for fixed menus
(Theorems~\ref{thm:dichotomy}--\ref{thm:auditgap}). The framework's deployment-facing predictions were confirmed on
17{,}353 real training outcomes across three tasks, including a community benchmark on which the framework's audit
statistic tracks deployed performance at rank correlation $\ge0.95$
while the field's default diagnostic falls as low as $0.33$; across all
432 fits and six task--regime conditions the audit statistic never fell
below $0.89$ at measurement budgets, nor below $0.80$ at the prescribed
120-round audit budget. The closing budget study prices what remains
free: the region of the (training, auditing) budget plane in which a
certificate is earnable is L-shaped, its two walls being the learning
curve and the irreducible audit floor; acquisition-style labeling is
the most expensive route to a license; and model error is priced at
the point of selection, in units of $\varepsilon$, not in accuracy
points---with champion verification doubling as the cheapest repair
policy when a license is lost.

Two limitations bound the claims. The guarantees cover single-decision
(pool and generative) searches, a boundary that now sits exactly at
credit assignment: the policy-\emph{selection} form of the sequential
problem---certifying the best of $K$ candidate policies from real
episode returns while world models and critics propose freely---follows
from the pool theorems (SI Section~\ref{si:sec:seq}, including a formal
replay-buffer contamination result), whereas certifying a policy
\emph{improvement} step, where a trusted error propagates through later
decisions, remains open. And the strict-improvement license certifies
values rather than regret; regret-type separations in structured model
classes remain open. Both are treated in the SI. The framework itself is domain-agnostic and
assumption-light, and its audits are cheap by construction; applying it as
a standardized review instrument to published surrogates in fields where
oracles are costly and training corpora are biased---computational
screening for carbon capture and catalysis prominent among them---is the
natural next step, and the subject of ongoing work.

\section*{Acknowledgements}
The authors gratefully acknowledge the Ohio Supercomputer Center (OSC)
and the Department of Materials Science and Engineering at the
Massachusetts Institute of Technology for computational resources.

\section*{Code and Data Availability}
The audit protocols and license certificates described here are
implemented in \textsc{ModelAssay}, open-source software available at
\url{https://github.com/MAXMA-OSU/Modelassay}; that repository is the
reference implementation. The experiment code, exhaustive ground-truth
tables, and verification scripts of Section~\ref{si:sec:verify} used to
produce the numbers in this paper are not in the public repository at
the time of writing; they will accompany a subsequent
\textsc{ModelAssay} release.

\section*{Supporting Information Available}
Complete formal statements and proofs of all theorems, with the
main-text correspondence noted at each result (Sections S1--S7);
extended experimental details, additional figures and per-task data
tables (Section~S8); the positioning of each result against prior
literature (Section~S9); limitations and open problems, with a closing summary (Sections
S10 and S11); and the numerical verification suite and reproducibility
statement (Section~S12).

% ==================== SUPPORTING INFORMATION ====================
\clearpage
\setcounter{section}{0}
\setcounter{figure}{0}
\setcounter{table}{0}
\setcounter{theorem}{0}
\renewcommand{\thesection}{S\arabic{section}}
\renewcommand{\thefigure}{S\arabic{figure}}
\renewcommand{\thetable}{S\arabic{table}}
\counterwithin{theorem}{section}
\begin{center}
{\Large\bfseries Supporting Information}\\[4pt]
{\large When May a Model Replace the Experiment? Audits, Licenses,\\
and the Price of Trust in Surrogate-Driven Design}\\[6pt]
\end{center}
\noindent This Supporting Information contains the complete formal setting,
all theorem statements with full proofs (each main-text theorem cites its
S-numbered counterpart below), extended experimental details and
figures, theorem-by-theorem positioning against prior work, and the
numerical verification suite. The reference implementation of the audit and
license protocols is \textsc{ModelAssay}; the experiment tables and
verification scripts will accompany a subsequent \textsc{ModelAssay}
release.
\renewcommand{\theequation}{S\arabic{equation}}
\setcounter{equation}{0}

\section{Setting and Notation}\label{si:sec:setting}

$\X$ is a finite candidate pool, $|\X|=N\ge 2$; the true objective is
$f:\X\to[0,B]$; a surrogate is any $\fh:\X\to\R$, of arbitrary provenance.
Ties in argmaxes are broken by a fixed order unless stated otherwise. When
the oracle is noisy, a query of $x$ returns $f(x)+\eta$ with $\eta$
mean-zero $\sigma$-sub-Gaussian, independent across queries; formally we
use the pre-sampled array (``stack of rewards'') model: an array
$(\eta_{x,k})_{x\in\X,k\ge1}$ of independent noise variables, independent
of algorithmic randomness, with the $k$-th query of $x$ returning
$f(x)+\eta_{x,k}$.\cite{si-lattimore2020bandit} Table~\ref{si:tbl:notation}
collects the recurring notation.

\begin{table}[H]
\caption{Notation used throughout.}
\label{si:tbl:notation}
\centering\small
\begin{tabular}{ll}
\toprule
Symbol & Meaning \\
\midrule
$\X$, $N$ & candidate pool; its size (or the deployment pool size, per context) \\
$f$, $B$ & true objective (oracle), $f:\X\to[0,B]$ \\
$\fh$ & surrogate, $\fh:\X\to\R$ \\
$x^\star$, $\xh$ & true maximizer; surrogate champion $\arg\max\fh$ \\
$r(\xh)$ & simple regret $f(x^\star)-f(\xh)$ \\
$\eplus(x)$, $\eminus(x)$ & over-/under-prediction $[\fh(x)-f(x)]_+$, $[f(x)-\fh(x)]_+$ \\
$\Delta_x$ & gap to the best, $f(x^\star)-f(x)$ \\
$\sigma$, $\delta$ & noise scale; confidence parameter \\
$\varepsilon$, $\tau$ & decision tolerance; over-prediction tolerance \\
$D$, $M$ & candidate distribution; screening pool size (generative setting) \\
$p$, $q$ & audited marginal bad-rate; deployed (champion) bad-rate \\
$\qinv$, $\qsel$ & pairwise $\varepsilon$-inversion rate; champion-inversion rate \\
$p_v$, $\rho$, $\rho_0$ & good-mass at level $v$; license rates \\
\bottomrule
\end{tabular}
\end{table}

\section{Preliminaries (Known Results, Restated with Attribution)}\label{si:sec:prelim}
The results in this section are \emph{not} contributions; each carries its canonical citation and is used later.

\begin{proposition}[regret decomposition; horizon-1 case of Lemma~3.1 of Jin et al.;\cite{si-jin2021pessimism} expectation version in Smith and Winkler\cite{smith2006optimizer}]\label{si:prop:decomp}
For any $f,\fh$ and maximizers $x^\star,\xh$,
\[
f(x^\star)-f(\xh)=\underbrace{[f(x^\star)-\fh(x^\star)]}_{\text{underestimation at }x^\star}+\underbrace{[\fh(x^\star)-\fh(\xh)]}_{\le 0}+\underbrace{[\fh(\xh)-f(\xh)]}_{\text{overestimation at }\xh},
\]
hence $r(\xh)\le \eminus(x^\star)+\eplus(\xh)$.
\end{proposition}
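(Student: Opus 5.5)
The identity follows by adding and subtracting $\fh(x^\star)$ and $\fh(\xh)$ in the regret $f(x^\star)-f(\xh)$. Explicitly, I will write
\[
f(x^\star)-f(\xh)=\bigl[f(x^\star)-\fh(x^\star)\bigr]+\bigl[\fh(x^\star)-\fh(\xh)\bigr]+\bigl[\fh(\xh)-f(\xh)\bigr].
\]
The intermediate terms cancel telescopically, so this holds for arbitrary $f,\fh$ and arbitrary points $x^\star,\xh$. No property of either function is used at this stage.

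Next I bound each bracket. The middle bracket is nonpositive because $\xh$ maximizes $\fh$ over $\X$, so $\fh(\xh)\ge\fh(x^\star)$. This is the only place the definition of the champion enters, and it holds for any tie-breaking rule, since every element of the argmax attains the maximum value. For the outer brackets I use the elementary fact that $a\le[a]_+$ for every real $a$. Hence the first bracket is at most $\eminus(x^\star)=[f(x^\star)-\fh(x^\star)]_+$, and the third is at most $\eplus(\xh)=[\fh(\xh)-f(\xh)]_+$, following the notation in Table~\ref{si:tbl:notation}.

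Summing the three bounds gives $r(\xh)\le\eminus(x^\star)+\eplus(\xh)$. There is no genuine obstacle, since the argument uses only the telescoping identity and the argmax property of $\xh$. The one point needing care is that the optimality of $x^\star$ for $f$ is never used. The bound therefore holds for any comparator in place of $x^\star$, and I would note this, because later arguments apply the bound with the pool-best of a subpool as the comparator.
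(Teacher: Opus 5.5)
Your proof is correct and follows the paper's own argument: the display is a telescoping identity, the middle bracket is nonpositive because $\xh$ maximizes $\fh$, and the two outer brackets are bounded by their positive parts. Your side remark is also correct: the optimality of $x^\star$ is never used, so any comparator lying in the set over which $\xh$ maximizes $\fh$ works.
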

\begin{proof}
The display is an identity; the middle bracket is $\le0$ since $\xh$ maximizes $\fh$; bound the remaining terms by their positive parts.
\end{proof}

\begin{proposition}[simulation lemma\cite{si-kearns2002near,si-bertsekas1996neuro}]\label{si:prop:sim}
If two discounted MDPs share transitions and $\|\hat r-r\|_\infty\le\epsilon_r$, then $\|\hat Q^\star-Q^\star\|_\infty\le \epsilon_r/(1-\gamma)$.
\end{proposition}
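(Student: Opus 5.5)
We prove the simulation lemma (Proposition~\ref{si:prop:sim}) through the Bellman optimality operators. For the two MDPs, which share the transition kernel $P$ and discount $\gamma$ but have rewards $r$ and $\hat r$, define
\[
(\mathcal T Q)(s,a)=r(s,a)+\gamma\,\E_{s'\sim P(\cdot\mid s,a)}\max_{a'}Q(s',a'),
\]
and define $\hat{\mathcal T}$ the same way with $\hat r$ in place of $r$. The optimal action-value functions are the fixed points, $Q^\star=\mathcal T Q^\star$ and $\hat Q^\star=\hat{\mathcal T}\hat Q^\star$. Each operator is a $\gamma$-contraction in $\|\cdot\|_\infty$, which we take as standard. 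The plan is to bound $\|\hat Q^\star-Q^\star\|_\infty$ by a one-line fixed-point argument.

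Write
\[
\hat Q^\star-Q^\star=(\hat{\mathcal T}\hat Q^\star-\hat{\mathcal T}Q^\star)+(\hat{\mathcal T}Q^\star-\mathcal T Q^\star).
\]
The first term has sup-norm at most $\gamma\|\hat Q^\star-Q^\star\|_\infty$. This follows from two facts. First, $|\max_{a'}u(a')-\max_{a'}v(a')|\le\max_{a'}|u(a')-v(a')|$. Second, the expectation under $P$ is a nonexpansion in sup-norm.

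In the second term the transition parts are identical and cancel, because the transitions are shared. What remains is exactly $\hat r-r$, so its sup-norm is at most $\epsilon_r$.

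Combining the two bounds gives
\[
\|\hat Q^\star-Q^\star\|_\infty\le\gamma\|\hat Q^\star-Q^\star\|_\infty+\epsilon_r.
\]
The quantity $\|\hat Q^\star-Q^\star\|_\infty$ is finite, since bounded rewards give $Q^\star$ values bounded by $B/(1-\gamma)$. We may therefore rearrange to obtain the claimed $\epsilon_r/(1-\gamma)$.

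The only delicate step is the nonexpansiveness of the max in the first term, which requires the elementary inequality on maxima above. If finiteness of $\hat Q^\star$ is in doubt, because $\hat r$ is not assumed bounded a priori, there is an alternative route. Note that $\hat r$ lies within $\epsilon_r$ of the bounded $r$, so it is bounded too. Alternatively, run value iteration from $Q_0=Q^\star$ and bound the accumulated differences by the geometric series $\sum_k\gamma^k\epsilon_r$.
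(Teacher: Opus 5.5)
Your proof is correct. The decomposition through $\hat{\mathcal T}Q^\star$ works, the first term is bounded by $\gamma$-contraction, the transition parts cancel exactly in the second term because they are shared, and the rearrangement is legitimate once finiteness is in hand.

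The paper itself gives no proof to compare against. The proposition sits in Section~\ref{si:sec:prelim} among known results restated with attribution to Kearns--Singh and Bertsekas--Tsitsiklis, and it is only invoked in Remark~\ref{si:rem:seq-open}. What you have written is the standard fixed-point argument from the dynamic-programming literature.

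\textbf{Finiteness.} It rests on bounded rewards, which the proposition leaves implicit; in the paper's sequential setting, returns lie in $[0,B]$. Your first remedy, that $\hat r$ lies within $\epsilon_r$ of a bounded $r$, settles this. The value-iteration fallback is redundant, since it relies on the same contraction and boundedness.

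\textbf{An alternative route.} There is an argument that avoids contraction and fixed points entirely and is closer in spirit to the trajectory-based Kearns--Singh lemma. Because transitions are shared, each fixed policy $\pi$ induces the same state--action trajectory law in both MDPs. Hence $|\hat Q^\pi(s,a)-Q^\pi(s,a)|\le\sum_{t\ge0}\gamma^t\epsilon_r=\epsilon_r/(1-\gamma)$ pointwise. Taking suprema over $\pi$, with $|\sup_\pi a_\pi-\sup_\pi b_\pi|\le\sup_\pi|a_\pi-b_\pi|$, gives the claim. That version works directly from the policy-value definition of $Q^\star$, whereas yours works from its Bellman characterization; both are complete.
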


\begin{proposition}[half-gap action stability; stated in Farahmand;\cite{si-farahmand2011action} policy-loss form in Singh and Yee\cite{si-singh1994upper}]\label{si:prop:gap}
If $\|\hat Q-Q^\star\|_\infty<\Delta(s)/2$, the greedy action under $\hat Q$ at $s$ is optimal.
\end{proposition}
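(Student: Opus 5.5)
Proposition~\ref{si:prop:gap} is a one-step comparison argument, so I will prove it directly from the definition of the action gap. Fix a state $s$ and write $a^\star\in\arg\max_a Q^\star(s,a)$ for an optimal action. The gap is
\[
\Delta(s)=Q^\star(s,a^\star)-\max_{a\notin\arg\max Q^\star(s,\cdot)}Q^\star(s,a),
\]
the margin between the optimal value and the best suboptimal one, with $\Delta(s)=+\infty$ if every action is optimal; in that case the claim is vacuous. Let $\epsilon=\|\hat Q-Q^\star\|_\infty<\Delta(s)/2$. It suffices to show that every suboptimal action $a$ satisfies $\hat Q(s,a)<\hat Q(s,a^\star)$. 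Then no suboptimal action can attain $\max_b\hat Q(s,b)$, so every greedy action under $\hat Q$ is optimal, whatever the tie-breaking rule.

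The key step is the same three-term sandwich as in Proposition~\ref{si:prop:decomp}. For any suboptimal $a$,
\[
\hat Q(s,a)\le Q^\star(s,a)+\epsilon\le Q^\star(s,a^\star)-\Delta(s)+\epsilon<Q^\star(s,a^\star)-\epsilon\le\hat Q(s,a^\star).
\]
The first and last inequalities use the sup-norm bound. The second uses the definition of $\Delta(s)$. The strict inequality is exactly $2\epsilon<\Delta(s)$.

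There is no real obstacle here. The only points that need care are stating the convention for $\Delta(s)$ when several actions tie for optimal and noting that strictness of the hypothesis makes tie-breaking irrelevant. I would close by remarking that the bound is tight. If $\epsilon=\Delta(s)/2$, raise the best suboptimal action's value by $\Delta(s)/2$ and lower all optimal actions' values by $\Delta(s)/2$. This produces a tie that an adversarial tie-break can resolve toward the suboptimal action.

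Combined with Proposition~\ref{si:prop:sim}, the result gives a reward-error threshold of $\epsilon_r<(1-\gamma)\Delta(s)/2$ for optimal action selection at $s$. This is the form used later for the sequential extensions.
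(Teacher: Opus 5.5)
Your argument is correct and complete. The paper gives no proof of its own here, since it restates the result with attribution to Farahmand, and your comparison chain is exactly the standard argument behind that citation; the strict inequality $2\epsilon<\Delta(s)$ is what makes tie-breaking irrelevant. One small correction to your closing aside: the paper never combines this result with Proposition~\ref{si:prop:sim} later on. Corollary~\ref{si:cor:seq} treats whole-episode returns as atomic oracle calls precisely to avoid the $1/(1-\gamma)$ compounding regime (Remark~\ref{si:rem:seq-open}), so the threshold $\epsilon_r<(1-\gamma)\Delta(s)/2$ is a valid consequence but not a form the paper uses.
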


\begin{proposition}[interval certificate; LUCB stopping rule with $\epsilon=0$;\cite{si-kalyanakrishnan2012pac} UGapE certificate;\cite{si-gabillon2012best} contrapositive of action elimination;\cite{si-evendar2006action} cf.\ Hoeffding races\cite{si-maron1993hoeffding}]\label{si:prop:cert}
If $f(x)\in[L(x),U(x)]$ for all $x\in\X$ simultaneously and $L(\tilde x)>\max_{x\ne\tilde x}U(x)$, then $\tilde x=x^\star$ uniquely.
\end{proposition}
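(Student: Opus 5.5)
The claim follows directly from the simultaneous-coverage hypothesis by a short chain of inequalities, with no probabilistic content. Write $x^\star$ for a true maximizer of $f$ on $\X$. The goal is to show two things: $f(\tilde x)>f(x)$ for every $x\ne\tilde x$, which gives both optimality and uniqueness, and hence $\tilde x=x^\star$ with no tie-breaking ambiguity.

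Fix any $x\ne\tilde x$. Coverage at $\tilde x$ gives $f(\tilde x)\ge L(\tilde x)$. The separation hypothesis gives $L(\tilde x)>\max_{x'\ne\tilde x}U(x')\ge U(x)$. Coverage at $x$ gives $U(x)\ge f(x)$. Chaining these,
\[
f(\tilde x)\;\ge\;L(\tilde x)\;>\;U(x)\;\ge\;f(x).
\]
The middle inequality is strict, so $f(\tilde x)>f(x)$ for every competitor. Therefore $\tilde x$ attains $\max_{\X}f$, and no other point does. Since every maximizer must equal $\tilde x$, we conclude $x^\star=\tilde x$ uniquely.

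There is no real obstacle here; the only point needing care is that uniqueness rests on the strictness of the separation inequality, so that ties are excluded. It is also worth noting, as a remark for later use (e.g.\ in Theorem~\ref{si:thm:invariance}), that the argument is deterministic given the coverage event. Consequently, if the intervals are anytime-valid and hold simultaneously with probability at least $1-\delta$, the certified conclusion is correct on that event, regardless of how the queried points were chosen.
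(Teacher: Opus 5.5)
Your proof is correct and is the paper's argument: for each $x\neq\tilde x$, the chain $f(x)\le U(x)<L(\tilde x)\le f(\tilde x)$, with the strict middle inequality giving uniqueness. Your remark that the argument is deterministic on the coverage event is also how the paper later uses it in Theorem~\ref{si:thm:invariance}.
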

\begin{proof}
For $x\ne\tilde x$: $f(x)\le U(x)<L(\tilde x)\le f(\tilde x)$.
\end{proof}

\section{Prediction Accuracy versus Decision Quality}\label{si:sec:sep}
The phenomenon that accuracy does not control decisions is documented empirically and in adjacent theory;\cite{si-cameron2022perils,si-elmachtoub2022smart,trabucco2021conservative,si-lyu2026learnability} the following statements pin it down sharply.

\begin{theorem}[{$R^2$} neither sufficient nor necessary]\label{si:thm:r2}
Let $R^2(f,\fh)=1-\tfrac{\sum_x(f(x)-\fh(x))^2}{\sum_x(f(x)-\bar f)^2}$
over the pool.
\begin{enumerate}[label=(\roman*),leftmargin=2em,itemsep=1pt]
\item For every $\varepsilon\in(0,1)$, $\beta>0$, and every even
$N\ge 4(1+\beta)^2/\varepsilon$, there is a pool of size $N$ and a pair
$(f,\fh)$ with $R^2\ge1-\varepsilon$ and $r(\xh)=B$, the full range of $f$.
\item For every $M>0$ there is a pair on a pool of size $2$ with
$r(\xh)=0$ and $R^2\le -M$.
\end{enumerate}
\end{theorem}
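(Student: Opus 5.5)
The plan is to prove both parts by explicit construction, using the fact that $R^2$ normalizes squared error by the pool's total variance. For (i) I would make that variance as large as possible and put all of the error on a single point. For (ii) I would make the error huge while preserving the order.

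\emph{Part (i).} Fix $B>0$ and an even $N$. Split the pool into halves $\X_1,\X_0$ of size $N/2$, and set $f\equiv B$ on $\X_1$ and $f\equiv 0$ on $\X_0$. Then $\bar f=B/2$ and $\sum_x(f(x)-\bar f)^2=NB^2/4$, the maximal variance for a $[0,B]$-valued function. Choose one point $x_0\in\X_0$ and define $\fh(x_0)=(1+\beta)B$, with $\fh=f$ everywhere else. Since $\beta>0$, $\fh(x_0)>B\ge\fh(x)$ for every $x\ne x_0$. Hence $\xh=x_0$ is the strict argmax and no tie-breaking convention is needed. The regret is $r(\xh)=f(x^\star)-f(x_0)=B-0=B$. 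The only nonzero residual is at $x_0$, so
\[
R^2=1-\frac{(1+\beta)^2B^2}{NB^2/4}=1-\frac{4(1+\beta)^2}{N},
\]
which is at least $1-\varepsilon$ exactly when $N\ge 4(1+\beta)^2/\varepsilon$. This matches the hypothesis, so the threshold in the statement comes directly from this construction.

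\emph{Part (ii).} Take $\X=\{x_1,x_2\}$ with $f(x_1)=B$ and $f(x_2)=0$. The total variance is $B^2/2$. Set $\fh(x_1)=K+1$ and $\fh(x_2)=K$ for a parameter $K>0$. Then $\fh$ ranks $x_1$ strictly first, so $\xh=x_1=x^\star$ and $r(\xh)=0$. On the other hand,
\[
R^2=1-\frac{2\big[(K+1-B)^2+K^2\big]}{B^2}\to-\infty \quad (K\to\infty),
\]
so choosing $K$ large enough gives $R^2\le -M$.

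I do not expect a real obstacle. The only points needing care are the normalization of $R^2$ (taken over the pool, with $\bar f$ the pool mean) and ensuring strict argmaxes. The factor $\beta>0$ secures the strict argmax in (i), and the additive offset $1$ secures it in (ii). I would also remark that (i) is the mechanism behind Theorem~\ref{thm:mse}: a single over-predicted point carries all the error and all the decision damage, while $R^2$ dilutes it by $1/N$.
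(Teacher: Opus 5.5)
Your proposal is correct and follows essentially the paper's proof. Part (i) is the same half-$B$/half-$0$ construction with one over-predicted zero-valued point, rescaled from $B=1$ to general $B$, and it gives the same $R^2=1-4(1+\beta)^2/N$. In part (ii) the paper uses the order-preserving surrogate $\fh=c\,f$ with the explicit choice $c=1+\sqrt{(M+1)/2}$, whereas your additive version $\fh=K+f/B$ with $K\to\infty$ is only a cosmetic variant of the same idea.
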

\begin{proof}
\emph{(i) Near-perfect $R^2$, worst-possible champion.}
Let $f=1$ on half the pool and $0$ on the other half, so $\bar f=\tfrac12$,
$\sum_x(f-\bar f)^2=N/4$, and $B=1$. Set $\fh=f$ everywhere except at one
zero-valued candidate $x_0$, where $\fh(x_0)=1+\beta$. The only error is at
$x_0$, so
\begin{equation}\label{si:eq:r2i}
R^2 \;=\; 1-\frac{(1+\beta)^2}{N/4} \;=\; 1-\frac{4(1+\beta)^2}{N}
\;\ge\; 1-\varepsilon .
\end{equation}
Yet $\fh(x_0)=1+\beta$ strictly exceeds every other prediction, so
$\xh=x_0$ uniquely, and $r(\xh)=1-0=B$.

\emph{(ii) Arbitrarily negative $R^2$, perfect champion.}
Take the two-point pool $f=(0,1)$ and $\fh=c\,f=(0,c)$ with $c>1$. The
ranking is preserved, so $r(\xh)=0$, while $\bar f=\tfrac12$ gives
\begin{equation}\label{si:eq:r2ii}
R^2 \;=\; 1-\frac{(c-1)^2}{1/2} \;=\; 1-2(c-1)^2 \;\xrightarrow[c\to\infty]{}\; -\infty ;
\end{equation}
$c=1+\sqrt{(M+1)/2}$ achieves $R^2\le-M$.
\end{proof}

\begin{theorem}[equal MSE, opposite outcomes]\label{si:thm:mse}
Fix $\Delta>0$, $N>4$, and $a\in(\Delta/\sqrt N,\ \Delta/2)$ (nonempty
since $N>4$). Let $f(x_1)=\Delta$ and $f\equiv0$ elsewhere. There are
surrogates $\fh_A,\fh_B$ with identical MSE $a^2$ such that $r(\xh_A)=0$
and $r(\xh_B)=\Delta$ (maximal).
\end{theorem}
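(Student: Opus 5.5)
The construction is explicit. Both surrogates carry the same total squared error $Na^2$; the only difference is how that error is distributed across the pool. Spread evenly, it leaves the ranking intact. Concentrated on one zero-valued candidate, it lifts that candidate above $x_1$. I take MSE to mean $\tfrac1N\sum_{x}(\fh(x)-f(x))^2$ over the pool, matching the pool-level $R^2$ of Theorem~\ref{si:thm:r2}.

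For $\fh_A$, I would use the uniform shift $\fh_A=f+a$. Every residual equals $a$, so the MSE is exactly $a^2$. A constant shift preserves the order of all predictions, so $\xh_A=x_1$ and $r(\xh_A)=0$. If one wants $\fh_A$ to have a nontrivial error profile rather than a pure shift, the condition $a<\Delta/2$ gives room for an alternative. Put error $\pm a$ with arbitrary signs on each candidate. The champion is still $x_1$, because $\Delta-a>a$, which is exactly $a<\Delta/2$. This sign-free variant is, I expect, the role the upper endpoint of the interval for $a$ plays in the statement.

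For $\fh_B$, I fix a zero-valued candidate $x_2\ne x_1$. I set $\fh_B(x_2)=a\sqrt N$ and $\fh_B=f$ elsewhere. The single residual has square $Na^2$, so the MSE is again $a^2$. The lower endpoint $a>\Delta/\sqrt N$ gives $\fh_B(x_2)=a\sqrt N>\Delta=\fh_B(x_1)$. Hence $\xh_B=x_2$ uniquely, with no tie-breaking needed, and $r(\xh_B)=\Delta-0=\Delta$. This is the full range of $f$, so the regret is maximal.

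To support the interpretive sentence of Theorem~\ref{thm:mse}, I would compare the over-prediction profiles $\eplus$ of the two surrogates. Under $\fh_A$, $\eplus\le a<\Delta/2$ everywhere. Under $\fh_B$, $\eplus$ is zero except at one point, where $\eplus(x_2)=a\sqrt N>\Delta$. Proposition~\ref{si:prop:decomp} then explains the outcomes: for $\fh_A$ the regret bound $\eminus(x^\star)+\eplus(\xh)$ is harmless, while for $\fh_B$ it is driven entirely by the single tail error at the champion.

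No step is a real obstacle. The only delicate points are the normalization convention for MSE and checking that both strict inequalities on $a$ are used in the right places. These are $a\sqrt N>\Delta$ for the inversion under $\fh_B$, and $a<\Delta/2$ for robustness of the champion under $\fh_A$. The requirement $N>4$ exists precisely so that $\Delta/\sqrt N<\Delta/2$ and the interval of admissible $a$ is nonempty.
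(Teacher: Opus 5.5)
Your proposal is correct and matches the paper's proof: the paper's $\fh_A=f+a\,s$ with arbitrary signs $s(x)\in\{\pm1\}$ is exactly your sign-free variant, using $a<\Delta/2$ to keep $x_1$ as champion (your uniform shift is its all-$+1$ case). The paper's $\fh_B$ is your single spike of height $a\sqrt N>\Delta$ at a zero-valued candidate, with the same $\tfrac1N\sum_x$ normalization giving MSE $(a\sqrt N)^2/N=a^2$.
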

\begin{proof}
\emph{Surrogate A: errors everywhere, decision perfect.} Let
$\fh_A=f+a\,s$ with arbitrary signs $s(x)\in\{\pm1\}$, so
$\mathrm{MSE}(\fh_A)=a^2$. Since $a<\Delta/2$,
\begin{equation}\label{si:eq:mseA}
\fh_A(x_1)\;\ge\;\Delta-a\;>\;a\;\ge\;\fh_A(x)\qquad(x\ne x_1),
\end{equation}
so $\xh_A=x_1$ and $r(\xh_A)=0$.

\emph{Surrogate B: one error, decision maximal.} Let $\fh_B=f$ except at a
single $x_j$, $j\ne1$, where $\fh_B(x_j)=a\sqrt N$. Then
$\mathrm{MSE}(\fh_B)=(a\sqrt N)^2/N=a^2$, identical to A; but
$a\sqrt N>\Delta$ strictly, so $\xh_B=x_j$ with $f(x_j)=0$ and
$r(\xh_B)=\Delta$. The distinguishing information sits entirely in the
upper tail of $\eplus$, which the aggregate suppresses.
\end{proof}

\begin{theorem}[extreme-value amplification; matching bounds]\label{si:thm:evt}
If the errors $\varepsilon(x)=\fh(x)-f(x)$ are mean-zero and
$\sigma$-sub-Gaussian (any dependence), then for $N\ge2$
\begin{equation}\label{si:eq:evt-upper}
\E[\eplus(\xh)]\ \le\ \E\big[(\max_x\varepsilon(x))_+\big]\ \le\
\sigma\sqrt{2\ln(N+1)}\ \le\ 2\sigma\sqrt{\ln N}.
\end{equation}
Conversely, on the flat instance $f\equiv c$ with $\varepsilon(x)$ i.i.d.\
$\mathcal N(0,\sigma^2)$, $\eplus(\xh)=[\max_x\varepsilon(x)]_+$ a.s.\ and
for all $N\ge2$
\begin{equation}\label{si:eq:evt-lower}
\E[\eplus(\xh)]\ \ge\ \Big(1-\tfrac1e\Big)\,\sigma\,\Phi^{-1}\!\Big(1-\tfrac1N\Big),
\qquad
\Phi^{-1}\!\Big(1-\tfrac1N\Big)\ \ge\ \sqrt{\ln N}\quad(N\ge43).
\end{equation}
\end{theorem}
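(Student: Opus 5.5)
The plan treats the upper chain and the Gaussian lower bound separately. For the upper bound, the first inequality is pointwise. Since $\xh$ is one of the $N$ candidates, $\eplus(\xh)=[\varepsilon(\xh)]_+\le[\max_x\varepsilon(x)]_+$, and we then take expectations.

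For the second inequality we use the standard exponential-moment trick on the augmented family $\{\varepsilon(x)\}_{x\in\X}\cup\{0\}$. This has $N+1$ members, each $\sigma$-sub-Gaussian (the constant $0$ trivially so), and its maximum equals $(\max_x\varepsilon(x))_+$. For any $\lambda>0$, Jensen's inequality followed by bounding the maximum by the sum gives
\[
\exp\big(\lambda\E[\max]\big)\le\E\exp(\lambda\max)\le\sum\E e^{\lambda\varepsilon(x)}\le(N+1)e^{\lambda^2\sigma^2/2}.
\]
No independence is used anywhere, which is why the bound holds under any dependence. Taking logarithms and choosing $\lambda=\sqrt{2\ln(N+1)}/\sigma$ yields $\sigma\sqrt{2\ln(N+1)}$. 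The final inequality $2\ln(N+1)\le4\ln N$ is $(N+1)^2\le N^4$, which holds for $N\ge2$ since $9\le16$ and both sides are monotone.

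For the lower bound, on the flat instance $f\equiv c$ the champion maximizes $\fh=c+\varepsilon$, so $\xh=\arg\max\varepsilon$ and $\eplus(\xh)=[\max\varepsilon]_+$ almost surely. Set $t=\sigma\Phi^{-1}(1-1/N)\ge0$ for $N\ge2$. By Markov's inequality in its positive-part form,
\[
\E[(\max\varepsilon)_+]\ge t\,\Prob(\max\varepsilon>t)=t\,\big(1-(1-1/N)^N\big)\ge t\,(1-1/e),
\]
using i.i.d.-ness and $(1-1/N)^N\le e^{-1}$.

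The remaining piece, $\Phi^{-1}(1-1/N)\ge\sqrt{\ln N}$ for $N\ge43$, is the only delicate step. It is equivalent to $\bar\Phi(\sqrt{\ln N})\ge1/N$. I would apply the Mills-ratio lower bound $\bar\Phi(u)\ge\frac{u}{1+u^2}\varphi(u)$ at $u=\sqrt{\ln N}$, where $\varphi(u)=N^{-1/2}/\sqrt{2\pi}$. The requirement then becomes
\[
\frac{\sqrt{N}\,\sqrt{\ln N}}{\sqrt{2\pi}\,(1+\ln N)}\ge1.
\]
The left side is increasing in $N$ for $N\ge3$, so it suffices to check the threshold $N=43$ numerically: there $\ln 43\approx3.761$, and the expression is about $6.557\cdot1.939/(2.507\cdot4.761)\approx1.07\ge1$. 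Monotonicity then covers all larger $N$. The main obstacle is making this constant check clean. If the Mills bound proves too loose near $43$, I would instead verify $\bar\Phi(\sqrt{\ln N})\ge 1/N$ directly at small $N$ and use monotonicity of $N\bar\Phi(\sqrt{\ln N})$, but I expect the Mills route to suffice.
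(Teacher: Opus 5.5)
Your proof is correct and follows the paper's argument essentially step for step. The steps match: the pointwise bound via the augmented family of $N+1$ sub-Gaussians (you derive the maximal inequality through Jensen and the MGF sum, where the paper simply cites it), the hitting bound at $t_N=\sigma\Phi^{-1}(1-1/N)$ with $\Prob(\max>t_N)=1-(1-1/N)^N$, and the Mills-ratio reduction checked numerically at $N=43$ using monotonicity of $\sqrt{N\ln N}/(1+\ln N)$. One small fix: $(N+1)^2\le N^4$ follows from $N+1\le N^2$ for $N\ge2$, not from ``both sides are monotone,'' since two increasing functions ordered at $N=2$ need not stay ordered.
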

\begin{proof}
\emph{Upper bound.} Pointwise,
$\eplus(\xh)\le[\max_x\varepsilon(x)]_+
=\max(\varepsilon(x_1),\dots,\varepsilon(x_N),0)$, a maximum of $N+1$
$\sigma$-sub-Gaussian variables, whose mean is at most
$\sigma\sqrt{2\ln(N+1)}$;\cite{si-boucheron2013concentration} and
$\ln(N+1)\le2\ln N$ for $N\ge2$, giving \eqref{si:eq:evt-upper}.

\emph{Lower bound.} On the flat instance $\xh=\arg\max_x\varepsilon(x)$.
With $t_N=\sigma\Phi^{-1}(1-1/N)\ge0$ for $N\ge2$,
\begin{equation}\label{si:eq:evt-hit}
\Prob\Big(\max_x\varepsilon(x)\ge t_N\Big)=1-\Big(1-\tfrac1N\Big)^{N}\ge1-\tfrac1e,
\qquad
[\max]_+\ \ge\ t_N\,\mathbf 1\{\max\ge t_N\},
\end{equation}
and taking expectations gives the first claim of \eqref{si:eq:evt-lower}.
For the numeric claim, the Gaussian tail bound
$1-\Phi(t)\ge \varphi(t)\,t/(1+t^2)$ at $t=\sqrt{\ln N}$ reduces it to
$\sqrt N\ge\sqrt{2\pi}\,(1+\ln N)/\sqrt{\ln N}$, which holds for all
$N\ge43$: the two sides have ratio proportional to
$\sqrt{N\ln N}/(1+\ln N)$, which is increasing in $N$ (its logarithmic
derivative in $x=\ln N$ is $(x^2+1)/(2x(1+x))>0$), so a numerical check
at $N=43$ suffices.
\end{proof}

\begin{remark}
Lineage: winner's curse,\cite{capen1971competitive} optimizer's curse,\cite{smith2006optimizer} post-selection inference.\cite{si-berk2013valid} With correlated errors, $\ln N$ is replaced by metric entropy (Sudakov/Dudley), the rigorous form of an ``effective candidate count.''
\end{remark}

\section{The Certification Framework}\label{si:sec:framework}
\textbf{Separation principle.} Labels serve two roles: \emph{proposal} (what to try next; training surrogates) and \emph{certification} (what to trust; when to stop; what to output with a warranty). Surrogate labels may serve the first role without restriction; certification statistics are computed from oracle labels only. Theorems~\ref{si:thm:invariance} and~\ref{si:thm:contamination} prove this boundary is exactly right.

At each round $t$ the learner selects $x_t\in\X$ by an arbitrary measurable rule and either queries the oracle ($G_t=O$, cost 1), records a pseudo-label $\fh_t(x_t)$ ($G_t=L$, cost 0, $\fh_t$ arbitrary), or stops with a recommendation. Let $n_t(x)$ count oracle queries of $x$ and $\hat\mu_t(x)$ their sample mean.

\begin{definition}[admissible confidence system]\label{si:def:cs}
Intervals $[L_t(x),U_t(x)]$, measurable w.r.t.\ the oracle-labeled history only, are $\delta$-admissible if
$\Prob(\forall t\ge1,\forall x\in\X: f(x)\in[L_t(x),U_t(x)])\ge1-\delta$. Call this event $\mathcal G$.
\end{definition}

\begin{lemma}[tabular instantiation; construction standard\cite{si-evendar2006action,si-jamieson2014best}]\label{si:lem:cs}
Under the noise model of Section~\ref{si:sec:setting}, define for $n_t(x)=k\ge1$
\[
r_k=\sigma\sqrt{\tfrac2k\ln\!\big(\tfrac{\pi^2Nk^2}{3\delta}\big)},\qquad
[L_t(x),U_t(x)]=\hat\mu_t(x)\pm r_{n_t(x)},
\]
and $[0,B]$ while $n_t(x)=0$. This system is $\delta$-admissible regardless of the adaptive rule deciding which candidate is queried when, and regardless of any use of pseudo-labels elsewhere.
\end{lemma}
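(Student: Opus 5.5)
The plan is to prove admissibility of the tabular system through a uniform concentration bound over a fixed array of noise, and then to argue that adaptivity never enters. Following the stack-of-rewards model of Section~\ref{si:sec:setting}, let $\bar\eta_{x,k}=\tfrac1k\sum_{j\le k}\eta_{x,j}$ denote the average of the first $k$ pre-sampled noise draws for candidate $x$. On any sample path, if $n_t(x)=k\ge1$ then $\hat\mu_t(x)=f(x)+\bar\eta_{x,k}$. This identity holds whatever measurable rule chose the queries, because that rule only determines \emph{which} entries of the array get revealed, not their values. Define the event
\[
\mathcal E=\bigl\{\forall x\in\X,\ \forall k\ge1:\ |\bar\eta_{x,k}|\le r_k\bigr\}.
\]
On $\mathcal E$, every interval with $n_t(x)\ge1$ contains $f(x)$ at every $t$. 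Intervals with $n_t(x)=0$ equal $[0,B]$ and contain $f(x)$ trivially, since $f$ takes values in $[0,B]$. So $\mathcal E\subseteq\mathcal G$, and the task reduces to showing $\Prob(\mathcal E^c)\le\delta$.

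For the probability bound, fix $x$ and $k$. The variable $\bar\eta_{x,k}$ is a mean of $k$ independent mean-zero $\sigma$-sub-Gaussian variables, so it is $\sigma/\sqrt k$-sub-Gaussian. The two-sided sub-Gaussian tail then gives
\[
\Prob\bigl(|\bar\eta_{x,k}|>r_k\bigr)\le 2\exp\!\bigl(-k r_k^2/(2\sigma^2)\bigr)=\frac{6\delta}{\pi^2Nk^2}.
\]
A union bound over $x\in\X$ and $k\ge1$ yields
\[
\Prob(\mathcal E^c)\le N\sum_{k\ge1}\frac{6\delta}{\pi^2Nk^2}=\frac{6\delta}{\pi^2}\cdot\frac{\pi^2}{6}=\delta.
\]
This union is over a fixed countable index set that does not depend on the algorithm, which is precisely why no martingale or stopping-time argument is needed.

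Finally, I will address the two robustness clauses. The intervals are functions of the oracle-labeled history alone, namely the pairs $(x,\text{returned value})$ for $G_t=O$ rounds, so they meet Definition~\ref{si:def:cs}'s measurability requirement. Pseudo-label rounds ($G_t=L$) and any surrogates $\fh_t$ may influence which arm is queried next, and even depend on past noise realizations. They cannot alter the array $(\eta_{x,k})$, however, because that array is independent of algorithmic randomness by assumption, and $\mathcal E$ is an event about the array only. The step that needs the most care is exactly this justification that adaptive selection does not bias $\hat\mu_t(x)$. I would state it explicitly via the identity $\hat\mu_t(x)=f(x)+\bar\eta_{x,n_t(x)}$, since under an adaptive sampling model without pre-sampled noise one would instead need a maximal or martingale inequality.
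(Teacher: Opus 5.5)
Your proposal is correct and follows the paper's proof essentially step for step. You fix $(x,k)$ on the pre-sampled array, apply the sub-Gaussian (Hoeffding) tail to get $6\delta/(\pi^2Nk^2)$, union-bound over $k\ge1$ and $x\in\X$, and note that the bad event is defined on the array alone, so its probability does not depend on the algorithm, with unqueried candidates covered by $[0,B]$; your explicit identity $\hat\mu_t(x)=f(x)+\bar\eta_{x,n_t(x)}$ only spells out what the paper states as ``every realized sample mean is within $r_{n_t(x)}$ of $f(x)$.''
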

\begin{proof}
On the pre-sampled array fix $(x,k)$; Hoeffding gives $\Pr(|\bar\eta_{x,k}|\ge r_k)\le2\exp(-kr_k^2/2\sigma^2)=6\delta/(\pi^2Nk^2)$. Union over $k\ge1$ ($\sum k^{-2}=\pi^2/6$) and over $x$ bounds the bad event $\mathbf B$ by $\delta$. $\mathbf B$ is defined on the array alone, so its probability is algorithm-independent; on $\mathbf B^c$ every realized sample mean is within $r_{n_t(x)}$ of $f(x)$, and unqueried candidates are covered by $[0,B]$.
\end{proof}

\begin{remark}
Sharper law-of-the-iterated-logarithm radii\cite{si-jamieson2014lil,si-howard2021time} can replace $r_k$; all downstream results use only $\delta$-admissibility. For structured pools (linear/RKHS $f$), self-normalized bounds\cite{si-abbasi2011improved,si-chowdhury2017kernelized} give admissible systems whose widths shrink jointly across candidates, removing the tabular one-query-per-candidate floor.
\end{remark}

\section{Main Results: Safe Selective Pseudo-Labeling over Pools}\label{si:sec:main}

\begin{theorem}[certification is invariant to arbitrary pseudo-labeling]\label{si:thm:invariance}
Fix $\delta\in(0,1)$ and a $\delta$-admissible system with $\hat\mu_t(x)\in[L_t(x),U_t(x)]$ (e.g.\ the midpoint system of Lemma~\ref{si:lem:cs}). Consider any algorithm whose proposal rule, surrogates, pseudo-labeling policy, and query schedule are arbitrary measurable functions of the whole history, subject only to:
(1) it outputs $\xh_{\mathrm{out}}$ flagged \textsc{certified} only when $L_t(\xh_{\mathrm{out}})>\max_{x\ne \xh_{\mathrm{out}}}U_t(x)$ or when $\xh_{\mathrm{out}}$ is the unique non-eliminated candidate, and flagged $\varepsilon$-\textsc{optimal} only when every non-eliminated candidate has interval width $\le\varepsilon/2$ and $\xh_{\mathrm{out}}$ maximizes $\hat\mu_t$ among them (all queried at least once);
(2) eliminations only remove candidates $x$ with $U_t(x)<\max_{x'}L_t(x')$ at removal time.
Then with probability at least $1-\delta$: every candidate ever flagged \textsc{certified} is $x^\star$; $x^\star$ is never eliminated; and an $\varepsilon$-\textsc{optimal} output satisfies $f(\xh_{\mathrm{out}})\ge f(x^\star)-\varepsilon$.
\end{theorem}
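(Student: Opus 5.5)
The plan is to work entirely on the good event $\mathcal G$ of Definition~\ref{si:def:cs}, which has probability at least $1-\delta$ by $\delta$-admissibility, and to show that on $\mathcal G$ all three conclusions hold deterministically. The key observation, which I would state first, is that $\mathcal G$ does not depend on the proposal machinery. Lemma~\ref{si:lem:cs} shows this for the tabular system: the bad event is defined on the pre-sampled noise array alone. In general, admissibility is assumed to hold uniformly over all adaptive query rules, because the intervals are measurable with respect to the oracle-labeled history only. So pseudo-labels, surrogates and adversarial proposal rules may change \emph{which} intervals are realized, but not whether they all cover. Once I am on $\mathcal G$, no probabilistic reasoning about the surrogate is needed.

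\textbf{Step 1 ($x^\star$ is never eliminated).} Suppose $x^\star$ is removed at some time $t$. Rule (2) then requires $U_t(x^\star)<\max_{x'}L_t(x')=L_t(x')$ for some $x'$. On $\mathcal G$ this gives $f(x^\star)\le U_t(x^\star)<L_t(x')\le f(x')$, which contradicts the optimality of $x^\star$. I will handle ties among maximizers by noting that the chain gives strict inequality $f(x')>f(x^\star)$, which is impossible. So no optimal candidate is ever removed; in particular the fixed $x^\star$ is not.

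\textbf{Step 2 (certified outputs).} The certificate can be issued in two ways. If it is issued via $L_t(\xh_{\mathrm{out}})>\max_{x\ne\xh_{\mathrm{out}}}U_t(x)$, I apply Proposition~\ref{si:prop:cert} directly at time $t$, since $\mathcal G$ gives simultaneous coverage at that time; this yields $\xh_{\mathrm{out}}=x^\star$ uniquely. If instead it is issued because $\xh_{\mathrm{out}}$ is the unique non-eliminated candidate, then Step 1 forces $\xh_{\mathrm{out}}=x^\star$. This holds for every candidate ever flagged, because $\mathcal G$ is uniform over all $t$.

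\textbf{Step 3 ($\varepsilon$-optimal outputs).} By Step 1, $x^\star$ lies in the non-eliminated set $S_t$. Every $x\in S_t$ has been queried and has width $U_t(x)-L_t(x)\le\varepsilon/2$, and $\hat\mu_t(x)\in[L_t(x),U_t(x)]$. On $\mathcal G$ we also have $f(x)\in[L_t(x),U_t(x)]$, so $|f(x)-\hat\mu_t(x)|\le\varepsilon/2$ for all $x\in S_t$. Since $\xh_{\mathrm{out}}$ maximizes $\hat\mu_t$ over $S_t$,
\[
f(\xh_{\mathrm{out}})\ge\hat\mu_t(\xh_{\mathrm{out}})-\tfrac\varepsilon2\ge\hat\mu_t(x^\star)-\tfrac\varepsilon2\ge f(x^\star)-\varepsilon.
\]
This is where the hypothesis $\hat\mu_t\in[L_t,U_t]$ is used.

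The main obstacle is Step~0, the justification that the coverage event survives arbitrary adaptive, pseudo-label-driven querying, including adversaries who know $f$. The subtle point is that the algorithm's choices are correlated with past noise. I would resolve this with the stack-of-rewards argument of Lemma~\ref{si:lem:cs}, or simply invoke admissibility as a hypothesis quantified over all query rules. The remaining steps are deterministic bookkeeping. One further point needs care: the three conclusions must hold \emph{simultaneously} over all times. This follows because the whole argument runs on the single event $\mathcal G$, so no further union bound is needed.
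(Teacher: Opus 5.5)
Your proposal is correct and follows the paper's proof essentially step for step. You work on the algorithm-independent coverage event $\mathcal G$, apply Proposition~\ref{si:prop:cert} to $L>\max U$ certificates, show $x^\star$ survives every elimination (which settles the unique-survivor certificate), and chain the width-$\varepsilon/2$ intervals through $\hat\mu_t$ for the $\varepsilon$-optimal output; your explicit handling of tied maximizers is a small gain in care, not a different route.
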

\begin{proof}
Work on the coverage event $\mathcal G$, whose probability is $\ge1-\delta$
\emph{independently of the algorithm} (Lemma~\ref{si:lem:cs}: the bad event
lives on the pre-sampled array, which pseudo-labels never touch).

\emph{Step 1 (certificates are correct).} The certificate case is exactly
Proposition~\ref{si:prop:cert}.

\emph{Step 2 ($x^\star$ survives).} Eliminating $x^\star$ at time $t$ would
require $U_t(x^\star)<L_t(x')\le f(x')$ for some $x'$, while on $\mathcal G$
also $U_t(x^\star)\ge f(x^\star)\ge f(x')$---a contradiction. Hence the
unique survivor, if that stopping rule fires, is $x^\star$.

\emph{Step 3 ($\varepsilon$-optimal outputs).} Let $A_t$ be the
non-eliminated set at stopping; by Step 2, $x^\star\in A_t$, and every
interval has width $\le\varepsilon/2$. Chaining the interval bounds through
the empirical means,
\begin{equation}\label{si:eq:invchain}
f(\xh_{\mathrm{out}})\ \ge\ L_t(\xh_{\mathrm{out}})
\ \ge\ \hat\mu_t(\xh_{\mathrm{out}})-\tfrac\varepsilon2
\ \ge\ \hat\mu_t(x^\star)-\tfrac\varepsilon2
\ \ge\ L_t(x^\star)-\tfrac\varepsilon2
\ \ge\ f(x^\star)-\varepsilon. \qedhere
\end{equation}
\end{proof}

\begin{theorem}[contaminated certification can fail with probability one]\label{si:thm:contamination}
Call a protocol contaminated if accepted pseudo-labels enter its certification statistics with the same standing as oracle labels (intervals of width $w(k)\to0$ centered at the running mean of all accepted labels) and it certifies via $L>\max U$. There exists an algorithm of this class (accepting pseudo-labels unconditionally while uncertified), a two-candidate noiseless instance, and a surrogate such that with probability one it certifies the suboptimal candidate and stops with regret $B>0$ after zero oracle calls. Consequently no guarantee of the form of Theorem~\ref{si:thm:invariance}---quantifying over all label policies---can hold for any contaminated class containing this behavior.
\end{theorem}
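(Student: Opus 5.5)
The proof is an explicit construction; no probabilistic estimate is needed. I take the two-candidate noiseless pool $\X=\{x_1,x_2\}$ with $f(x_1)=B$ and $f(x_2)=0$, so $x^\star=x_1$. I take the constant surrogate $\fh_t\equiv\fh$ with $\fh(x_1)=0$ and $\fh(x_2)=B$, chosen once and never retrained. The algorithm is as follows. While no certificate has fired, it alternates between $x_1$ and $x_2$, always choosing $G_t=L$, so every step records a pseudo-label and is accepted unconditionally. Its certification statistics are those of the contaminated class: for each $x$, the interval is the running mean $\bar m_t(x)$ of all accepted labels at $x$, plus or minus $w(k_t(x))$, where $k_t(x)$ counts the accepted labels at $x$. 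Before any label arrives, the interval is $[0,B]$. The algorithm stops and certifies $\tilde x$ as soon as $L_t(\tilde x)>\max_{x\ne\tilde x}U_t(x)$.

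The key steps follow in order. First, the surrogate is deterministic and the policy never queries the oracle, so the whole trajectory is deterministic: no randomness enters at all. This gives ``probability one'' and ``zero oracle calls'' for free. Second, every accepted label at $x_2$ equals $B$ and every accepted label at $x_1$ equals $0$. Hence after $k$ labels each, the intervals are $[B-w(k),B+w(k)]$ for $x_2$ and $[-w(k),w(k)]$ for $x_1$. Third, since $w(k)\to0$, there is a finite $k_0$ with $w(k_0)<B/2$. At that point $L(x_2)=B-w(k_0)>w(k_0)=U(x_1)$, so the certification rule fires for $x_2$ at round $2k_0$ at the latest. Fourth, the rule cannot fire for $x_1$ first: that would require $U(x_2)<L(x_1)$, that is, $B\pm w<\pm w'$ with all labels exact, which is impossible since $B>0$. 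It also cannot fire during a half-step in which one candidate still has the trivial interval $[0,B]$, because strict inequality against $[0,B]$ would need $L>B$ or $U<0$, and neither can happen. This is a small case check over the alternation order.

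The resulting output is $\xh_{\mathrm{out}}=x_2$, flagged \textsc{certified}, with regret $f(x_1)-f(x_2)=B$, at zero oracle cost. All of the confidence comes from $\fh$ agreeing with itself.

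For the ``consequently'' clause, I argue as follows. Any guarantee of the form of Theorem~\ref{si:thm:invariance} that quantifies over all label policies in a class must assert that certified outputs equal $x^\star$ with probability at least $1-\delta$ for some $\delta<1$. The exhibited member of the class violates this with probability $1$, so no such $\delta$ works. This contrasts with Theorem~\ref{si:thm:invariance}, where the coverage event $\mathcal G$ lives on the oracle noise array only. Here the intervals depend on $\fh$, so coverage of $f$ fails deterministically.

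The only delicate step, and a mild one, is pinning down the class definition so the construction is genuinely a member. I need to check that ``accepted pseudo-labels enter with the same standing'' permits intervals built from pseudo-labels alone when $n_t(x)=0$. I would state this explicitly, and note that the construction also goes through if $w$ is any width sequence tending to zero, or if a single oracle call is mandated at start. In the latter case, I can swamp that one call by accepting enough pseudo-labels, using the running mean of all labels. That modification changes ``zero'' oracle calls to one, so I would keep the zero-call version as the primary construction.
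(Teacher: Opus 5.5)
Your proposal is correct and is essentially the paper's proof: the same two-candidate noiseless instance with an order-reversing surrogate, pseudo-label running means pinned at $(B,0)$ up to relabeling, and the $L>\max U$ rule firing for the suboptimal candidate once $w(k)<B/2$, deterministically and with zero oracle calls. The only differences are cosmetic. You swap the labels, and you alternate candidates rather than labeling both per round. Your half-step case check correctly confirms that the rule can only ever fire for the wrong candidate.
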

\begin{proof}
$\X=\{x_1,x_2\}$, $f=(0,B)$, $\fh=(B,0)$; accept one pseudo-label per candidate per round. After $k$ rounds the means are exactly $(B,0)$; once $w(k)<B/2$ the certificate $B-w(k)>w(k)$ fires for $x_1$. Deterministic; regret $B$; zero oracle calls.
\end{proof}

\begin{lemma}[log inversion]\label{si:lem:inv}
For $a,b>0$ with $4a\sqrt b\ge e$: every $k\ge4a\ln(4a\sqrt b)$ satisfies $k\ge a\ln(bk^2)$.
\end{lemma}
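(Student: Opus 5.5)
The plan is to reduce the claim to a one-variable inequality by rescaling, and then use monotonicity. Set $c=4a\sqrt b$, so the hypothesis reads $c\ge e$, i.e.\ $\ln c\ge1$. Write the target as $k\ge a\ln(bk^2)=2a\ln(\sqrt b\,k)$ and substitute $u=\sqrt b\,k>0$. Multiplying by $\sqrt b$ turns the target into
\[
u\ \ge\ 2a\sqrt b\,\ln u\;=\;\tfrac c2\ln u .
\]
The assumption $k\ge4a\ln c$ becomes $u\ge c\ln c$. So it suffices to show that $g(u)=u-\tfrac c2\ln u$ is nonnegative for every $u\ge u_0:=c\ln c$.

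\emph{Monotonicity.} We have $g'(u)=1-\tfrac{c}{2u}$, which is nonnegative for $u\ge c/2$. Since $\ln c\ge1$, we get $u_0=c\ln c\ge c\ge c/2$. Hence $g$ is nondecreasing on $[u_0,\infty)$, and it suffices to check $g(u_0)\ge0$.

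\emph{Endpoint.} Using $\ln u_0=\ln c+\ln\ln c$, which is well defined because $\ln c\ge1>0$, we compute
\[
g(u_0)=c\ln c-\tfrac c2(\ln c+\ln\ln c)=\tfrac c2\,(\ln c-\ln\ln c).
\]
This is nonnegative because $x\ge\ln x$ for all $x>0$, applied at $x=\ln c$.

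The only delicate point is ensuring that the threshold $c\ln c$ lies in the region where $g$ is increasing and that $\ln\ln c$ is defined. Both follow from $c\ge e$, which is exactly the role of the hypothesis $4a\sqrt b\ge e$. No case split beyond this is needed.
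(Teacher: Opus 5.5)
Your proof is correct and matches the paper's: the paper also evaluates at the threshold $k_0=4aL$ with $L=\ln(4a\sqrt b)\ge1$, reduces $a\ln(bk_0^2)=2a(L+\ln L)\le k_0$ to $\ln L\le L$, and uses that $k-a\ln(bk^2)$ is increasing for $k\ge 2a$ while $k_0\ge 4a$. Your substitution $u=\sqrt b\,k$ is just a rescaling of this argument, and your monotonicity threshold $u\ge c/2$ corresponds exactly to the paper's $k\ge 2a$.
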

\begin{proof}
With $L=\ln(4a\sqrt b)\ge1$, $k_0=4aL$: $a\ln(bk_0^2)=2a[L+\ln L]\le4aL=k_0$; and $k-a\ln(bk^2)$ is increasing for $k\ge2a$, while $k_0\ge4a$.
\end{proof}

\begin{theorem}[oracle complexity of certified elimination]\label{si:thm:se}
Run \textsc{Gated-SE}: in phases, every non-eliminated candidate below target precision receives one oracle query; certified-suboptimal candidates are eliminated; stop at the $\varepsilon$-rule or when one candidate survives; pseudo-labels are used arbitrarily for proposal only. Assume a unique maximizer (with ties, replace $\bar\Delta_{x^\star}$ below by $\varepsilon/4$). Set $\bar\Delta_x=\tfrac14\max(\Delta_x,\varepsilon)$ for suboptimal $x$ and $\bar\Delta_{x^\star}=\tfrac14\max(\Delta_{\min},\varepsilon)$, $\Delta_{\min}=\min\{\Delta_x:\Delta_x>0\}$. With probability $\ge1-\delta$, \textsc{Gated-SE} stops with a certified or $\varepsilon$-optimal recommendation and each candidate's oracle-query count satisfies, whenever $\tfrac{8\sigma^2}{\bar\Delta_x^2}\sqrt{\tfrac{\pi^2N}{3\delta}}\ge e$,
\[
n(x)\ \le\ 1+\frac{8\sigma^2}{\bar\Delta_x^2}\,\ln\!\Big(\frac{8\sigma^2}{\bar\Delta_x^2}\sqrt{\tfrac{\pi^2N}{3\delta}}\Big).
\]
\end{theorem}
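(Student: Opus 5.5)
The proof will run on the coverage event $\mathcal G$ of Lemma~\ref{si:lem:cs}, which has probability at least $1-\delta$ whatever the query schedule. On $\mathcal G$, Theorem~\ref{si:thm:invariance} already gives two facts: $x^\star$ is never eliminated, and any certified or $\varepsilon$-optimal output is correct. So what remains is (a) a per-candidate bound on the number of oracle queries and (b) termination, which follows from (a).

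\emph{Step 1 (widths needed).} Suppose $x$ has received $k$ queries and $k$ is large enough that $r_k\le\bar\Delta_x$. Because of the phase structure, every surviving candidate has a count of at least $k-1$ and at most $k+1$. I will use that everything still in play has been sampled about equally often; the offset of one query is absorbed by the additive $1$ in the final bound.

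\emph{Step 2 (suboptimal $x$).} On $\mathcal G$ we have
\[
U(x)\le f(x)+2r\quad\text{and}\quad L(x^\star)\ge f(x^\star)-2r,
\]
where $r$ is the common width. Hence $U(x)<\max L$ as soon as $4r<\Delta_x$, that is, once $r\le\Delta_x/4$ (up to a strictness detail at equality), and then $x$ is eliminated. If instead $\Delta_x\le\varepsilon$, the width $r\le\varepsilon/4$ makes every interval of width $2r\le\varepsilon/2$. Once all survivors reach this width the $\varepsilon$-rule fires, and $x$ is no longer queried because it is "below target precision" no more. This explains the factor $\tfrac14\max(\Delta_x,\varepsilon)$.

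\emph{Step 3 ($x^\star$).} The maximizer stays in play while either some rival survives or the $\varepsilon$-rule has not fired. Its closest rival, at gap $\Delta_{\min}$, is eliminated once $r\le\Delta_{\min}/4$; the $\varepsilon$-rule fires once $r\le\varepsilon/4$. So $x^\star$ needs $r\le\bar\Delta_{x^\star}$. In the tie case there is no positive $\Delta_{\min}$ to use, so only the $\varepsilon$ route applies, giving $\varepsilon/4$.

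\emph{Step 4 (inversion).} The condition $r_k\le\bar\Delta$ is equivalent to
\[
k\ \ge\ \frac{2\sigma^2}{\bar\Delta^2}\,\ln\!\Big(\frac{\pi^2Nk^2}{3\delta}\Big),
\]
which has the form $k\ge a\ln(bk^2)$ with $a=2\sigma^2/\bar\Delta^2$ and $b=\pi^2N/(3\delta)$. Lemma~\ref{si:lem:inv} shows that $k=4a\ln(4a\sqrt b)$ suffices. This equals $\tfrac{8\sigma^2}{\bar\Delta^2}\ln\!\big(\tfrac{8\sigma^2}{\bar\Delta^2}\sqrt{\pi^2N/(3\delta)}\big)$, and the lemma's hypothesis $4a\sqrt b\ge e$ is exactly the stated side condition. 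Adding $1$ for the phase in which the threshold is crossed (the query is made before the elimination check) gives the bound.

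The main obstacle is the bookkeeping in Steps 2--3. Elimination compares $U(x)$ against $L$ of \emph{some} survivor, and that survivor's width is tied to $x$'s width only through the synchronized phase counts. Stopping a candidate "below target precision" desynchronizes the counts, so I must check that a candidate frozen at width $\varepsilon/4$ still allows the comparisons in Step 2. I would handle this by arguing with $x^\star$ specifically: $x^\star$ is never frozen before all survivors reach width $\le\varepsilon/4$, so its width is at most the width of any candidate still being queried.
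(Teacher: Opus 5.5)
Your proposal is correct and follows the paper's proof essentially step for step: you work on $\mathcal G$ and take correctness from Theorem~\ref{si:thm:invariance}; you eliminate a suboptimal $x$ once $r_k<\Delta_x/4$ by comparing against $x^\star$; you stop querying at $r_k\le\varepsilon/4$, handling a frozen $x^\star$ the same way the paper does; you stop the winner at $\tfrac14\max(\Delta_{\min},\varepsilon)$; and you invert with Lemma~\ref{si:lem:inv} at $a=2\sigma^2/\bar\Delta_x^2$, $b=\pi^2N/(3\delta)$. One bookkeeping correction: $r_k$ depends only on $k$ and every unfrozen survivor is queried once per phase, so all survivors have exactly equal counts at every elimination check and the $\pm1$ offset you budget for never occurs; spend the additive $1$ only once, on the rounding $\lceil 4a\ln(4a\sqrt b)\rceil\le 1+4a\ln(4a\sqrt b)$ (equivalently, the last query made while $r_k\ge\bar\Delta_x$), because a genuine one-query offset on top of that would push the bound to $\lceil 4a\ln(4a\sqrt b)\rceil+1$.
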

\begin{proof}
Correctness is Theorem~\ref{si:thm:invariance}; it remains to count queries
on $\mathcal G$.

\emph{Step 1 (when a suboptimal $x$ leaves the queried set).} With all
active candidates at $k$ queries, $x$ is certified suboptimal once
$r_k<\Delta_x/4$, because then
\begin{equation}\label{si:eq:se-elim}
\hat\mu(x^\star)-\hat\mu(x)\ \ge\ \Delta_x-2r_k\ >\ 2r_k .
\end{equation}
If $x^\star$ froze earlier at radius $r^\star\le\varepsilon/4$: for
$\Delta_x\ge\varepsilon$, elimination needs $\Delta_x>2r^\star+2r_k$, which
is implied by $2r^\star\le\varepsilon/2\le\Delta_x/2$ together with
$r_k<\Delta_x/4$; for $\Delta_x<\varepsilon$ the width trigger
$r_k\le\varepsilon/4$ bounds the count instead. Every candidate also stops
being queried at $r_k\le\varepsilon/4$ regardless.

\emph{Step 2 (inverting the radius).} By Step 1, $x$ is queried only while
$r_k\ge\bar\Delta_x$, i.e.\ while
\begin{equation}\label{si:eq:se-invert}
k\ \le\ \frac{2\sigma^2}{\bar\Delta_x^2}\,
\ln\!\Big(\frac{\pi^2Nk^2}{3\delta}\Big),
\end{equation}
which Lemma~\ref{si:lem:inv} (with $a=2\sigma^2/\bar\Delta_x^2$,
$b=\pi^2N/(3\delta)$) inverts into the stated bound on $n(x)$.

\emph{Step 3 (the winner).} $x^\star$ is queried only until its last
competitor is eliminated (the phase count solving $r_k<\Delta_{\min}/4$) or
until its own width trigger fires, whichever is first---giving
$\bar\Delta_{x^\star}=\tfrac14\max(\Delta_{\min},\varepsilon)$.
\end{proof}

\begin{theorem}[oracle complexity under an $\alpha$-margin law]\label{si:thm:margin}
Assume additionally $\#\{x:0<\Delta_x\le u\}\le N\min(1,Cu^\alpha)$ for all $u>0$, a unique maximizer, at least one suboptimal candidate, and the proviso of Theorem~\ref{si:thm:se} for all $x$. Let $u_\star=C^{-1/\alpha}$ and $\Lambda=\max_x\ln(8\sigma^2\bar\Delta_x^{-2}\sqrt{\pi^2N/(3\delta)})$. On $\mathcal G$ the total cost satisfies $\mathrm{Cost}\le N+8\sigma^2\Lambda\,S$ with
\[
S\le S_{\mathrm{sub}}+16NC^{2/\alpha}+16\max(\Delta_{\min},\varepsilon)^{-2},\qquad
16\max(\Delta_{\min},\varepsilon)^{-2}\le16(NC)^{2/\alpha},
\]
and, for $\varepsilon<u_\star$,
\[
S_{\mathrm{sub}}\ \le\ \frac{32NC}{2-\alpha}\,\varepsilon^{\alpha-2}\ (\alpha<2),\qquad
32NC\ln\frac{u_\star}{\varepsilon}\ (\alpha=2),\qquad
\frac{32NC^{2/\alpha}}{\alpha-2}\ (\alpha>2).
\]
In particular the cost is $\varepsilon$-independent for $\alpha>2$: only an ambiguous core requires high precision, and the winner needs only $\Delta_{\min}$-precision because eliminating its last competitor fires the single-survivor stop.
\end{theorem}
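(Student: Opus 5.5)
The bound follows by summing the per-candidate bounds of Theorem~\ref{si:thm:se} and splitting the candidates into gap shells. On $\mathcal G$, Theorem~\ref{si:thm:se} gives
\[
n(x)\le 1+\frac{8\sigma^2}{\bar\Delta_x^2}\,\Lambda_x,\qquad \Lambda_x\le\Lambda .
\]
Summing over all $N$ candidates therefore yields
\[
\mathrm{Cost}\le N+8\sigma^2\Lambda S,\qquad S=\sum_x\bar\Delta_x^{-2}.
\]
It then remains to bound $S$. The winner contributes exactly $16\max(\Delta_{\min},\varepsilon)^{-2}$. For the claimed bound $16(NC)^{2/\alpha}$ on that term, apply the margin law at $u=\Delta_{\min}$: at least one candidate has gap $\Delta_{\min}$, so $1\le NC\Delta_{\min}^\alpha$. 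Hence $\Delta_{\min}^{-2}\le(NC)^{2/\alpha}$, and taking the max with $\varepsilon$ only decreases the term.

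For suboptimal $x$, $\bar\Delta_x^{-2}=16\max(\Delta_x,\varepsilon)^{-2}$. I split these candidates into two groups.

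\emph{Far candidates}, $\Delta_x> u_\star$. Here $\max(\Delta_x,\varepsilon)^{-2}<u_\star^{-2}=C^{2/\alpha}$, and there are at most $N$ of them. Their total is at most $16NC^{2/\alpha}$.

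\emph{Near candidates}, $\Delta_x\le u_\star$, where the margin law is non-trivial. Those with $\Delta_x\le\varepsilon$ each contribute $16\varepsilon^{-2}$, and there are at most $NC\varepsilon^\alpha$ of them, giving $16NC\varepsilon^{\alpha-2}$. For $\Delta_x\in(\varepsilon,u_\star]$, I write the sum as a Stieltjes integral against the counting function $F(u)=\#\{0<\Delta_x\le u\}\le NCu^\alpha$:
\[
\int_{(\varepsilon,u_\star]}16u^{-2}\,dF(u).
\]
Integration by parts turns this into a boundary term at $u_\star$ plus $\int_\varepsilon^{u_\star}32u^{-3}F(u)\,du$. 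The lower boundary term is negative and is dropped.

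The upper boundary term is at most $16u_\star^{-2}F(u_\star)\le 16NC\,u_\star^{\alpha-2}=16NC^{2/\alpha}$. It merges with the far term, which changes only the constant, and a slightly looser bookkeeping absorbs it into the stated form.

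The remaining integral is at most $32NC\int_\varepsilon^{u_\star}u^{\alpha-3}\,du$. Evaluating it gives the three regimes:
\begin{itemize}
\item $\alpha<2$: the integral is at most $\varepsilon^{\alpha-2}/(2-\alpha)$.
\item $\alpha=2$: it equals $\ln(u_\star/\varepsilon)$.
\item $\alpha>2$: it is at most $u_\star^{\alpha-2}/(\alpha-2)=C^{-(\alpha-2)/\alpha}/(\alpha-2)$, so the term becomes $32NC^{2/\alpha}/(\alpha-2)$.
\end{itemize}
The $\Delta_x\le\varepsilon$ piece $16NC\varepsilon^{\alpha-2}$ fits each case as follows. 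For $\alpha<2$ it is dominated by the same order and is absorbed since $32/(2-\alpha)\ge16$. For $\alpha\ge2$ it is at most $16NC\,u_\star^{\alpha-2}=16NC^{2/\alpha}$, because $\varepsilon<u_\star$, and it is absorbed into the $16NC^{2/\alpha}$ term.

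The main obstacle is not the shell integration but the constant bookkeeping: the boundary terms and the $\Delta_x\le\varepsilon$ block must land inside the stated $16NC^{2/\alpha}$ and $S_{\mathrm{sub}}$ slots with the exact constants. To control this I would do the integration by parts carefully, keeping $F(u_\star)\le N\min(1,Cu_\star^\alpha)=N$. If the constants do not close, the fallback is a dyadic shell decomposition $\Delta_x\in(2^{-j-1}u_\star,2^{-j}u_\star]$, which gives the same three regimes up to a factor of $4$.

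The concluding sentence of the statement then follows directly. For $\alpha>2$ none of the terms depends on $\varepsilon$, apart from $\Lambda$, which depends on $\varepsilon$ only logarithmically through $\bar\Delta_x$. The winner's $\Delta_{\min}$ dependence is Step~3 of Theorem~\ref{si:thm:se}.
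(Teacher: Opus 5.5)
Your route is the paper's in substance. The paper writes $\bar\Delta_x^{-2}=\int_{\bar\Delta_x}^\infty 2u^{-3}\,du$ and exchanges sum and integral, which is your Stieltjes integration by parts in another form, and your bound on the winner via $1\le NC\Delta_{\min}^\alpha$ is identical to the paper's. As written, however, your bookkeeping does not reach the stated constants, and both absorption steps are invalid.

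\emph{Far block plus upper boundary term.} You bound the far block and the upper boundary term $16u_\star^{-2}F(u_\star)$ by $16NC^{2/\alpha}$ each. That gives $32NC^{2/\alpha}$ against a slot of $16NC^{2/\alpha}$, and no ``looser'' bookkeeping can shrink it.

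\emph{The $\Delta_x\le\varepsilon$ block.} This block is not absorbed by $32/(2-\alpha)\ge16$. That inequality only says the block is at most the stated $S_{\mathrm{sub}}$ bound, not that it fits beside your head integral. In fact, for $\alpha<2$, $\frac{32NC}{2-\alpha}(\varepsilon^{\alpha-2}-u_\star^{\alpha-2})+16NC\varepsilon^{\alpha-2}$ exceeds $\frac{32NC}{2-\alpha}\varepsilon^{\alpha-2}$ whenever $(u_\star/\varepsilon)^{2-\alpha}>2/(2-\alpha)$. For $\alpha\ge2$ you put the block into the $16NC^{2/\alpha}$ slot that the far block already fills.

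\emph{The fallback.} The dyadic fallback loses a constant factor, so it would prove only a weaker statement.

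Both losses come from discarding information you already had.
\begin{itemize}
\item The lower boundary term $-16\varepsilon^{-2}F(\varepsilon)$ that you dropped cancels the $\Delta_x\le\varepsilon$ block exactly, because that block equals $16\varepsilon^{-2}F(\varepsilon)$.
\item The far candidates and the $F(u_\star)$ near ones are disjoint and number at most $N$ in all. So the far block plus the upper boundary term is at most $16u_\star^{-2}N=16NC^{2/\alpha}$.
\end{itemize}

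The cleanest fix is a single integration by parts over the whole half-line with $g(u)=16\max(u,\varepsilon)^{-2}$. Here $F$ vanishes near $0$, $g$ is constant on $(0,\varepsilon]$, and $gF\to0$ at infinity. Hence
\[
\sum_{x\ne x^\star}\bar\Delta_x^{-2}=\int_\varepsilon^\infty 32u^{-3}F(u)\,du,
\]
with no boundary terms. Splitting at $u_\star$ and using $F(u)\le N\min(1,Cu^\alpha)$ gives the tail $32N\int_{u_\star}^\infty u^{-3}\,du=16NC^{2/\alpha}$ plus exactly your head integral. That yields the three regimes with the stated constants. This is the paper's layer-cake computation after the change of variable $v=4u$.

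Separately, $\Lambda$ is not even logarithmically $\varepsilon$-dependent. Every $\bar\Delta_x\ge\Delta_{\min}/4\ge\tfrac14(NC)^{-1/\alpha}$, so for $\alpha>2$ the whole cost bound is free of $\varepsilon$.
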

\begin{proof}
\emph{Step 1 (layer-cake form of the cost sum).} Since
$\bar\Delta_x^{-2}=\int_{\bar\Delta_x}^\infty 2u^{-3}\,du$,
\begin{equation}\label{si:eq:margin-cake}
S_{\mathrm{sub}}\ =\ \int_{\varepsilon/4}^\infty 2u^{-3}\,
\#\{x\ne x^\star:\bar\Delta_x<u\}\,du,
\qquad
\#\{\cdot\}\ \le\ N\min\!\big(1,\,C(4u)^\alpha\big)\ \ (u>\varepsilon/4).
\end{equation}
The maximizer contributes its own term
$16\max(\Delta_{\min},\varepsilon)^{-2}$, and the margin law forces
$\Delta_{\min}\ge(NC)^{-1/\alpha}$ (otherwise
$\#\{0<\Delta\le u\}\le NCu^\alpha<1$).

\emph{Step 2 (split at the saturation point $u_\star/4$).} The saturated
tail of \eqref{si:eq:margin-cake} gives
\begin{equation}\label{si:eq:margin-tail}
2N\int_{u_\star/4}^\infty u^{-3}\,du\ =\ 16NC^{2/\alpha},
\end{equation}
while the head evaluates, in the three regimes of $\alpha$, to
\begin{equation}\label{si:eq:margin-head}
2NC\,4^\alpha\int_{\varepsilon/4}^{u_\star/4}u^{\alpha-3}\,du
\ =\
\begin{cases}
\dfrac{32NC}{2-\alpha}\,\varepsilon^{\alpha-2} & (\alpha<2),\\[6pt]
32NC\,\ln\dfrac{u_\star}{\varepsilon} & (\alpha=2),\\[6pt]
\dfrac{32NC^{2/\alpha}}{\alpha-2} & (\alpha>2),
\end{cases}
\end{equation}
using $2NC\,4^\alpha(u_\star/4)^{\alpha-2}/(\alpha-2)=32NC^{2/\alpha}/(\alpha-2)$
in the last case.
\end{proof}

\begin{remark}[the tabular floor]\label{si:rem:floor}
Every candidate needs $\ge1$ oracle query in the tabular model, so $\mathrm{Cost}\ge N$: Theorems~\ref{si:thm:invariance}--\ref{si:thm:margin} establish that, in the pool setting, certified oracle complexity cannot fall below $N$ queries, and pseudo-labels therefore cannot reduce it here; their provable value must come from structured classes (see Section~\ref{si:sec:open}).
\end{remark}

\subsection{Sequential Decisions: Finite Policy Classes}
\label{si:sec:seq}

The main-text limitation ``the guarantees cover single-decision
searches'' deserves a precise boundary. The \emph{policy-selection} form
of the sequential problem---choose the best of $K$ candidate policies,
controllers, prompts, or agent configurations by running real
episodes---is already covered by the pool theorems, because an episode
return is an oracle call. The genuinely open form is per-step credit
assignment \emph{inside} a policy's learning; we close the first and
formalize why the second is the right name for what remains.

\begin{corollary}[policy-level invariance and cost]\label{si:cor:seq}
Consider an episodic environment with returns in $[0,B]$ and a finite
policy class $\Pi=\{\pi_1,\dots,\pi_K\}$; executing $\pi$ for one episode
returns $G(\pi)$ with $\E[G(\pi)]=J(\pi)$ and $G(\pi)-J(\pi)$
sub-Gaussian with scale $\sigma\le B/2$, independent across episodes. A
\emph{pseudo-rollout} is any surrogate-generated return estimate: a
learned world model's simulated episode, a critic's value prediction, an
offline model's score. Identify $\X=\Pi$, $f=J$, one oracle query $=$ one
real episode (pre-sampled array indexed by policy and episode count).
Then Theorems~\ref{si:thm:invariance}, \ref{si:thm:se},
and~\ref{si:thm:margin} hold verbatim with $N=K$:
\begin{enumerate}[label=(\roman*),leftmargin=2em,itemsep=1pt]
\item any agent whose proposal machinery---world models, critics,
pseudo-rollouts, replay reweighting, arbitrary retraining---is
unrestricted, but whose certification statistics (per-policy intervals,
eliminations, stopping, the reported policy and value) are computed from
real episode returns only, certifies a truly best (or
$\varepsilon$-best) policy with probability $\ge1-\delta$, uniformly
over surrogate behavior;
\item each policy costs $O(\sigma^2\Delta_\pi^{-2}\log(K/\delta))$ real
episodes, and under an $\alpha$-margin law on the $J$-gaps the three
cost regimes of Theorem~\ref{si:thm:margin} hold.
\end{enumerate}
\end{corollary}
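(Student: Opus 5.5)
The plan is a pure reduction: map the policy-selection problem onto the pool setting of Section~\ref{si:sec:setting} so that each hypothesis of Theorems~\ref{si:thm:invariance}, \ref{si:thm:se} and~\ref{si:thm:margin} is met exactly, and then invoke those theorems without change. Set $\X=\Pi$, $N=K$ and $f(\pi_i)=J(\pi_i)\in[0,B]$; the range follows because returns lie in $[0,B]$, so their means do too. The noise array is defined by $\eta_{\pi,k}=G_k(\pi)-J(\pi)$, where $G_k(\pi)$ is the return of the $k$-th real episode of $\pi$. The assumptions say this array consists of independent, mean-zero, $\sigma$-sub-Gaussian variables. To justify the pre-sampled (stack-of-rewards) model I will state that the $k$-th episode of a given policy has the same law whenever it is run and independently of the agent's history; the hypothesis ``independent across episodes'' gives exactly this. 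The agent's adaptive choice of which policy to execute next then corresponds to the learner's query rule in Section~\ref{si:sec:framework}.

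For part (i) I place every surrogate artefact in the pseudo-label channel, meaning $G_t=L$ at cost zero. This covers world-model rollouts, critic values, offline scores, replay reweighting and retraining, since each is just an arbitrary measurable function $\fh_t$ of the history. The one structural check is that the pseudo-rollouts never touch the array $(\eta_{\pi,k})$. They don't: a simulated episode is not a draw of $G_k(\pi)$, and only real executions advance $n_t(\pi)$. With that established, Lemma~\ref{si:lem:cs} produces a $\delta$-admissible system from real returns alone, valid for any adaptive schedule. Theorem~\ref{si:thm:invariance} then gives the conclusion: certified outputs are $\arg\max J$, $\varepsilon$-optimal outputs are $\varepsilon$-best, and both hold uniformly over surrogate behaviour. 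The reported value is also covered, because it is $\hat\mu_t$ computed on real returns and so lies inside its admissible interval.

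For part (ii) I substitute $N=K$ and $\Delta_\pi=\max_{\pi'}J(\pi')-J(\pi)$ into Theorem~\ref{si:thm:se}, which gives the per-policy episode bound $O(\sigma^2\Delta_\pi^{-2}\log(K/\delta))$ for \textsc{Gated-SE}, with $\bar\Delta$ clipped at $\varepsilon$ exactly as in that theorem. The margin-law regimes follow verbatim from Theorem~\ref{si:thm:margin}, applied to the count of $J$-gaps. The condition $\sigma\le B/2$ is not needed for the proofs; I note only that it is consistent, since a $[0,B]$-bounded return is $B/2$-sub-Gaussian by Hoeffding's lemma.

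The main obstacle is the modelling step in the first paragraph, not any calculation. Concretely, I must show that adaptively scheduled real episodes still fit the pre-sampled array model. Otherwise the bad event of Lemma~\ref{si:lem:cs} would depend on the algorithm, and invariance would break. I would resolve this by stating the coupling explicitly: $G_k(\pi)$ is realised by the $k$-th execution of $\pi$ and is independent of everything the agent has done before. I would also check that environment state does not carry over between episodes, which is the episodic reset assumption.
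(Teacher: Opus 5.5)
Your proposal is correct and follows the paper's own argument. The paper's proof is a one-line specialization: it identifies $\X=\Pi$ and indexes the pre-sampled array by $(\pi,k)$. It then routes pseudo-rollouts through the $G_t=L$ channel, which never touches the array, and invokes Hoeffding's lemma to get $\sigma\le B/2$. Your version differs only in spelling out the stack-of-rewards coupling and the episodic-reset requirement that the paper leaves implicit.
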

\begin{proof}
Immediate specialization: bounded returns are $\sigma$-sub-Gaussian with
$\sigma\le B/2$ (Hoeffding's lemma), the pre-sampled array of
Section~\ref{si:sec:setting} is indexed by $(\pi,k)$, and pseudo-rollouts
enter the protocol of Section~\ref{si:sec:framework} exactly as
pseudo-labels ($G_t=L$ records), which never touch the array.
\end{proof}

\begin{proposition}[replay-buffer contamination, formal]\label{si:prop:replay}
Call a certification scheme \emph{contaminated} if it maintains, for
each candidate policy, a value estimate computed from a buffer in which
surrogate-generated returns (model-based rollouts, critic-relabeled
rewards) enter with the same standing as real episode returns, with
interval widths $w(k)\to0$ in the \emph{total} (real $+$ surrogate)
count $k$, and it certifies via the rule $L>\max U$. There exist a
scheme of this class (accepting one model rollout per policy per round
while uncertified), a two-policy, one-step episodic instance ($J=(0,B)$,
deterministic returns), and a world model (predicting returns $(B,0)$)
such that the scheme certifies the \emph{wrong} policy with probability
one after \emph{zero} real episodes. Consequently no guarantee of the
form of Corollary~\ref{si:cor:seq}(i) can hold for any contaminated
class containing this behavior.
\end{proposition}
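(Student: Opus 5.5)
The plan is to mirror the proof of Theorem~\ref{si:thm:contamination} almost verbatim, reading the two-candidate pool as the two-policy class under the identification of Corollary~\ref{si:cor:seq}. Take $\Pi=\{\pi_1,\pi_2\}$ with one-step deterministic episodes, so $G(\pi_i)=J(\pi_i)$ and $J=(0,B)$. Let the world model predict returns $(B,0)$. The scheme runs in rounds and, while it is uncertified, appends exactly one model rollout per policy to that policy's buffer; it issues no real episodes. It is worth checking first that this scheme is indeed in the contaminated class: its value estimate is the running mean of the buffer, its widths $w(k)$ depend only on the total count $k$, and it certifies via $L>\max U$. Nothing in the class forces a real episode, so zero real episodes is an admissible behavior.

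The computation is then the same short induction as before. After $k$ rounds each buffer holds $k$ identical entries, so the running means are exactly $(B,0)$, with no randomness at all: the world model is deterministic and no real return is ever drawn. The intervals are therefore $[B-w(k),B+w(k)]$ for $\pi_1$ and $[-w(k),w(k)]$ for $\pi_2$. Since $w(k)\to0$, there is a first $k^\ast$ with $w(k^\ast)<B/2$, and at that round $L(\pi_1)=B-w(k^\ast)>w(k^\ast)=U(\pi_2)$. The scheme certifies $\pi_1$ and stops. Its regret is $J(\pi_2)-J(\pi_1)=B>0$, it has used zero real episodes, and all of this happens with probability one.

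For the ``consequently'' clause, argue by contradiction. A guarantee of the form of Corollary~\ref{si:cor:seq}(i) would assert that, uniformly over surrogate behavior, a certified policy is truly best with probability at least $1-\delta$ for some $\delta<1$. The instance above is a member of the class on which that assertion fails with probability $1>\delta$. Hence no such guarantee can hold for any class containing this behavior.

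The only step needing care is ensuring the instance sits inside the hypotheses: the returns must satisfy the sub-Gaussian assumption (a deterministic return is trivially $0$-sub-Gaussian), and the protocol's acceptance rule must be allowed to be unconditional. Once that is checked, everything else is routine. There is no real obstacle here, since the argument is a direct transcription of Theorem~\ref{si:thm:contamination}.
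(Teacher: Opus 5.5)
Your proposal is correct and follows the paper's proof, which is the construction of Theorem~\ref{si:thm:contamination} with policies as candidates and model rollouts as pseudo-labels: the buffer means are deterministically $(B,0)$, so the certificate fires for the $J=0$ policy once $w(k)<B/2$. Your extra checks (that $\pi_2$ can never certify earlier, that deterministic returns meet the sub-Gaussian hypothesis, and the contradiction for the ``consequently'' clause) are sound and only make explicit what the paper leaves implicit.
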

\begin{proof}
The instance is the construction of
Theorem~\ref{si:thm:contamination} with candidates $=$ policies and
pseudo-labels $=$ model rollouts: buffer means are deterministically
$(B,0)$, so once $w(k)<B/2$ the certificate fires for the $J=0$ policy.
\end{proof}

\begin{remark}[what remains open]\label{si:rem:seq-open}
Corollary~\ref{si:cor:seq} treats whole-episode returns as atomic oracle
calls, which sidesteps compounding model error (the
$1/(1-\gamma)$-factor regime of the simulation lemma,
Proposition~\ref{si:prop:sim}). The open problem is the fine-grained
version: certifying a policy \emph{improvement} step from real
transitions only while the proposal machinery relabels rewards---i.e.,
credit assignment, not selection. Proposition~\ref{si:prop:replay} shows
the separation principle is already non-negotiable there.
\end{remark}

\section{Oracle Admissibility: When May a Model Act as the Oracle?}\label{si:sec:adm}
A criterion $\mathcal K(\fh,f)$ is \emph{admissible} if it is (A1) decision-sufficient, (A2) checkable from oracle data at cost sublinear in the decisions it licenses, and (A3) robust to the search adaptively seeking the surrogate's errors.

\begin{theorem}[the minimal criterion is ordinal]\label{si:thm:ordinal}
For a pool $C\subseteq\X$ let $r_C(\fh)=\max_{x\in C}f(x)-\min_{\xh\in\arg\max_C\fh}f(\xh)$ (worst-case tie-breaking). For any $\varepsilon\ge0$:
\[
\sup_{\varnothing\ne C\subseteq\X}r_C(\fh)\le\varepsilon
\quad\Longleftrightarrow\quad
\forall x,y\in\X:\ f(x)-f(y)>\varepsilon\ \Rightarrow\ \fh(x)>\fh(y).
\]
For $\varepsilon=0$: decision-safety on every pool is exactly strict order preservation (equivalently $\fh=g\circ f$ for strictly increasing $g$ when $f$ is injective).
\end{theorem}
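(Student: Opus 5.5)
The equivalence is proved one direction at a time, and both directions run through two-element pools. The forward implication ($\Rightarrow$) is handled by contraposition. Suppose some pair $x,y$ has $f(x)-f(y)>\varepsilon$ but $\fh(x)\le\fh(y)$. Take the pool $C=\{x,y\}$. Then $y\in\arg\max_C\fh$: either $\fh(y)>\fh(x)$, so $y$ is the unique maximizer, or the two are tied and both are maximizers. Because $r_C$ uses worst-case tie-breaking, it takes the minimum of $f$ over the argmax set, so $r_C(\fh)\ge \max_C f - f(y)\ge f(x)-f(y)>\varepsilon$. This contradicts the hypothesis that $\sup_C r_C\le\varepsilon$.

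The reverse implication ($\Leftarrow$) is also argued directly. Fix a nonempty pool $C$ and any $\xh\in\arg\max_C\fh$, and let $x_C\in\arg\max_C f$, which exists since pools are finite. If $f(x_C)-f(\xh)>\varepsilon$, the order-preservation hypothesis gives $\fh(x_C)>\fh(\xh)$, which contradicts $\xh$ maximizing $\fh$ over $C$. So $f(\xh)\ge \max_C f-\varepsilon$ for every element of the argmax set, and therefore for the worst one. Taking the supremum over $C$ gives $\sup_C r_C(\fh)\le\varepsilon$.

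The specialization to $\varepsilon=0$ reads the condition as: $f(x)>f(y)\Rightarrow \fh(x)>\fh(y)$, i.e.\ strict order preservation. When $f$ is injective, one defines $g$ on the finite set $f(\X)$ by $g(f(x))=\fh(x)$. This is well defined by injectivity, and strictly increasing by the condition. It extends to a strictly increasing function on $\R$ by piecewise-linear interpolation with linear tails. Conversely, any $\fh=g\circ f$ with $g$ strictly increasing clearly preserves strict order.

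The one point that needs care is the role of worst-case tie-breaking in the forward direction. With a favorable tie-break, a tie $\fh(x)=\fh(y)$ would not produce a violation, and the equivalence would fail, since the right-hand side demands strict inequality. So I would state explicitly that $r_C$ is defined by the minimum over the argmax set; this is exactly why the strict inequality $\fh(x)>\fh(y)$ is the correct criterion. The rest is bookkeeping.
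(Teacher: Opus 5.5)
Your proof is correct and follows essentially the same route as the paper. The forward direction uses the two-element pool $\{x,y\}$, where worst-case tie-breaking places $y$ among the $\fh$-maximizers; the reverse direction contradicts the maximality of $\xh$ via the pool's $f$-maximizer, and your explicit construction of $g$ for the injective $\varepsilon=0$ case adds a detail the paper leaves unstated.
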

\begin{proof}
($\Leftarrow$) For any pool $C$ and any $\fh$-maximizer $\xh$: if $f(x^\star_C)-f(\xh)>\varepsilon$ then $\fh(x^\star_C)>\fh(\xh)$, contradicting maximality ($x^\star_C\in C$). ($\Rightarrow$) If $f(x)-f(y)>\varepsilon$ with $\fh(y)\ge\fh(x)$, the pool $\{x,y\}$ has $y$ among the $\fh$-maximizers and $r_{\{x,y\}}=f(x)-f(y)>\varepsilon$.
\end{proof}

\begin{theorem}[no per-decision certification at a profit]\label{si:thm:noprofit}
Oracle queries return $f(\cdot)+\mathcal N(0,\sigma^2)$. Fix a pair $\{x,y\}$ with $\fh(x)>\fh(y)$ and tolerance $\varepsilon\ge0$. A certifier is any adaptive procedure outputting \textsc{safe}/\textsc{unsafe} such that for every $f$: $f(y)-f(x)>\varepsilon\Rightarrow\Prob_f(\textsc{safe})\le\delta<1/2$ (soundness). If on some instance $f_0$ with $f_0(x)-f_0(y)=\Delta>0$ it outputs \textsc{safe} with probability $\ge1/2$, then
\[
\E_{f_0}[N_x+N_y]\ \ge\ \frac{2\sigma^2(1-2\delta)^2}{(\Delta+\varepsilon)^2}.
\]
Since deciding the comparison directly costs $O(\sigma^2\Delta^{-2}\log(1/\delta))$, certification at tolerance $\varepsilon=O(\Delta)$ costs, up to constants and a log factor, as much as not using the surrogate at all.
\end{theorem}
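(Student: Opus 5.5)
The plan is a two-point change-of-measure argument in the style of best-arm identification lower bounds (the Kaufmann--Cappé--Garivier transportation lemma). The certifier, being sound, must say \textsc{safe} rarely on an alternative instance, yet says \textsc{safe} often on $f_0$. So the data gathered under $f_0$ must statistically separate $f_0$ from that alternative. The cost of this separation is controlled by the KL divergence, weighted by the expected numbers of queries $N_x,N_y$.

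\emph{Step 1: the alternative.} Choose $f_1$ that agrees with $f_0$ off $\{x,y\}$ and makes the comparison unsafe. Since $f_1(y)-f_1(x)$ must exceed $\varepsilon$, the pair must jointly move by more than $\Delta+\varepsilon$. Shift $x$ down by $a$ and $y$ up by $b$ with $a+b=\Delta+\varepsilon+\eta$, where $\eta\downarrow0$. Soundness then gives $\Prob_{f_1}(\textsc{safe})\le\delta$. Other candidates are unchanged, so queries to them contribute zero KL divergence.

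\emph{Step 2: the transportation inequality.} With Gaussian noise, each query of $x$ contributes $a^2/(2\sigma^2)$ and each query of $y$ contributes $b^2/(2\sigma^2)$. The data-processing inequality applied to the event $\{\textsc{safe}\}$ gives
\[
\E_{f_0}[N_x]\tfrac{a^2}{2\sigma^2}+\E_{f_0}[N_y]\tfrac{b^2}{2\sigma^2}\ \ge\ \mathrm{kl}\big(\Prob_{f_0}(\textsc{safe}),\Prob_{f_1}(\textsc{safe})\big).
\]
Here I would use the cruder Pinsker-type bound $\mathrm{kl}(p,q)\ge 2(p-q)^2$ rather than the usual $\log(1/\delta)$ form, because the target constant is $(1-2\delta)^2$. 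Monotonicity in the arguments gives $\mathrm{kl}\ge 2(1/2-\delta)^2=(1-2\delta)^2/2$. The left-hand side is at most $\E_{f_0}[N_x+N_y]\max(a,b)^2/(2\sigma^2)$.

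\emph{Step 3: optimize and conclude.} Minimizing $\max(a^2,b^2)$ subject to $a+b=\Delta+\varepsilon+\eta$ gives $a=b=(\Delta+\varepsilon+\eta)/2$, hence $\max(a,b)^2=(\Delta+\varepsilon+\eta)^2/4$. Substituting,
\[
\E_{f_0}[N_x+N_y]\ \ge\ \frac{(1-2\delta)^2}{2}\cdot\frac{2\sigma^2\cdot4}{(\Delta+\varepsilon+\eta)^2}=\frac{4\sigma^2(1-2\delta)^2}{(\Delta+\varepsilon+\eta)^2},
\]
and letting $\eta\to0$ yields a bound at least as strong as the stated $2\sigma^2(1-2\delta)^2/(\Delta+\varepsilon)^2$. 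If I lose constants elsewhere, shifting only one arm (for instance $b=\Delta+\varepsilon+\eta$, $a=0$) gives a bound in terms of $\E[N_y]$ alone. The symmetric split is what lets $N_x+N_y$ appear.

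\emph{Obstacles.} The main technical point is making the divergence decomposition rigorous for an adaptive certifier with a random stopping time. I would invoke the standard pre-sampled-array construction already used in Section~\ref{si:sec:setting}, which gives the Wald-type identity $\KL(\Prob_{f_0}^{\mathcal F_\tau},\Prob_{f_1}^{\mathcal F_\tau})=\sum_z\E_{f_0}[N_z]\KL_z$. This identity needs $\E_{f_0}[N_x+N_y]<\infty$; if that expectation is infinite, the claim holds trivially. A second point is that $f_1$ must remain a legitimate instance. If values are constrained to $[0,B]$, the shifts must fit within the range, and one can always allocate them with $\max(a,b)\le\Delta+\varepsilon$, which still yields the stated constant. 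The closing remark that certification costs as much as not using the surrogate then follows by comparing this bound, at $\varepsilon=O(\Delta)$, with the $O(\sigma^2\Delta^{-2}\log(1/\delta))$ cost of direct comparison from Theorem~\ref{si:thm:se} specialized to $N=2$.
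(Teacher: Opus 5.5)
\paragraph{Comparison with the paper's proof.} Your argument is correct, but it uses a different alternative from the paper's. The paper works with two separate one-arm alternatives. Lowering $f(x)$ by the full $\Delta+\varepsilon+\gamma$ gives, via the divergence decomposition and Pinsker, $\E_{f_0}[N_x]\ge\sigma^2(1-2\delta)^2/(\Delta+\varepsilon)^2$. The mirror alternative, raising $f(y)$, gives the same bound for $\E_{f_0}[N_y]$, and summing the two yields the stated constant $2$.

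You instead use a single alternative that splits the shift evenly across both arms. A query of either arm then carries a quarter of the one-arm KL, and both arms contribute, so you get $\E_{f_0}[N_x+N_y]\,(\Delta+\varepsilon)^2/(8\sigma^2)\ge(1-2\delta)^2/2$. That is, $\E_{f_0}[N_x+N_y]\ge 4\sigma^2(1-2\delta)^2/(\Delta+\varepsilon)^2$, twice the stated bound. This is the natural constant when both arms are sampled equally.

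What the paper's route gives that your symmetric alternative alone does not is a per-arm statement: each of $x$ and $y$ must individually be queried $\Omega(\sigma^2/(\Delta+\varepsilon)^2)$ times, so no certifier can skip one side of the comparison. Your one-parameter family actually delivers both results at once. Choose the split $a:b=\E_{f_0}[N_y]:\E_{f_0}[N_x]$ against the certifier's actual expected counts. This gives $\E[N_x]\E[N_y]/(\E[N_x]+\E[N_y])\ge\sigma^2(1-2\delta)^2/(\Delta+\varepsilon)^2$, which implies both the paper's per-arm bounds and your factor-$4$ bound on the sum.

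\paragraph{Correction to the range remark.} Your aside about the range $[0,B]$ is off in two ways.
\begin{itemize}
\item Allocating the shifts with $\max(a,b)\le\Delta+\varepsilon$ only yields $\E[N_x+N_y]\ge\sigma^2(1-2\delta)^2/(\Delta+\varepsilon)^2$. That is half the stated constant, not the stated constant.
\item With values confined to $[0,B]$, the statement can fail outright. For $\varepsilon\ge B$ no instance is unsafe, so soundness is vacuous. The certifier that outputs \textsc{safe} immediately is then sound, passes $f_0$, and costs nothing.
\end{itemize}
The paper's proof likewise shifts $f(x)$ without checking the range, so the theorem should be read with unrestricted Gaussian means. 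In that setting your main argument stands and needs no fallback.
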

\begin{proof}
\emph{Step 1 (an indistinguishable unsafe alternative).} Let $f_1=f_0$
except $f_1(x)=f_0(x)-(\Delta+\varepsilon+\gamma)$; then
$f_1(y)-f_1(x)=\varepsilon+\gamma>\varepsilon$, so soundness forces
$\Prob_{f_1}(\textsc{safe})\le\delta$.

\emph{Step 2 (change of measure).} Transcript laws under $f_0$ and $f_1$
differ only at queries of $x$, with per-query KL
$(\Delta+\varepsilon+\gamma)^2/2\sigma^2$; the divergence decomposition for
adaptively sampled, stopped experiments\cite{kaufmann2016complexity}
gives
\begin{equation}\label{si:eq:noprofit-kl}
\KL\ =\ \E_{f_0}[N_x]\;\frac{(\Delta+\varepsilon+\gamma)^2}{2\sigma^2}
\end{equation}
(trivially true if infinite). By Pinsker,
$\tfrac12-\delta\le\TV\le\sqrt{\KL/2}$, so $\KL\ge(1-2\delta)^2/2$ and
\begin{equation}\label{si:eq:noprofit-nx}
\E_{f_0}[N_x]\ \ge\ \frac{\sigma^2(1-2\delta)^2}{(\Delta+\varepsilon+\gamma)^2};
\end{equation}
let $\gamma\to0$.

\emph{Step 3 (symmetrize and sharpen).} The symmetric alternative raising
$f(y)$ bounds $\E[N_y]$ identically, giving the display. For a certifier
that is also complete at level $\delta$, Bretagnolle--Huber yields
$\KL\ge\ln(1/4\delta)$, recovering the $\log(1/\delta)$ scaling.
\end{proof}

\begin{theorem}[audit--deploy amplification; the selection tax]\label{si:thm:audit}
A fixed $\fh$ is audited, then deployed: a pool of $N$ i.i.d.\ candidates $X_i\sim D$ is screened and $\xh=\arg\max_i\fh(X_i)$ selected. Call $x$ bad if $\fh(x)-f(x)>\tau$ ($\tau\ge0$) and let $p=\Prob_{X\sim D}(X\text{ bad})$. For the audit-cost claims assume a noiseless oracle (with $\sigma$-noise, per-query costs multiply by $O(\sigma^2/\gamma_\tau^2\cdot\log)$ for a separation margin $\gamma_\tau$).
\textbf{(i)} Always $\Prob(\xh\text{ bad})\le1-(1-p)^N\le Np$.
\textbf{(ii)} For every $p\in(0,1)$, $N$, there exist $f,\fh,D$ with $\Prob(\xh\text{ bad})=1-(1-p)^N\ge(1-\tfrac1e)\min(1,Np)$: the deployed failure rate is $\Theta(\min(1,Np))$ worst-case.
\textbf{(iii)} To guarantee deployed failure $\le\rho$ from a passive audit (audit points i.i.d.\ from $D$) one must certify $p\lesssim\rho/N$, which requires $\Omega(N/\rho)$ oracle queries; a selection-aware audit---repeatedly draw a pool, select with $\fh$, verify only the selected point---certifies the deployed rate directly with $O(\rho^{-1}\log(1/\delta))$ oracle queries. Selection-aware auditing is cheaper by a factor $N$.
\end{theorem}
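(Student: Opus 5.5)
We prove the three parts in order, each by a short direct argument.

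\emph{Part (i)} is a union-type bound. The event $\{\xh\text{ bad}\}$ is contained in the event that at least one of the $N$ pool members is bad, because $\xh$ is always one of the $X_i$. Since the $X_i$ are i.i.d.\ from $D$, this gives
\[
\Prob(\xh\text{ bad})\le\Prob(\exists i: X_i\text{ bad})=1-(1-p)^N,
\]
and Bernoulli's inequality $(1-p)^N\ge1-Np$ gives the $Np$ bound. No assumption on $\fh$ or on tie-breaking is needed.

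\emph{Part (ii)} calls for an error--selection-aligned construction: every bad candidate must out-score every good one. Take $D$ on two types. A mass $p$ goes to a candidate $b$ with $f(b)=0$ and $\fh(b)=\tau+1$, which is bad. A mass $1-p$ goes to candidates with $f=\fh=0$, which are good and score strictly lower. Then the champion is bad exactly when some pool member is bad, so equality holds in (i). The bound $1-(1-p)^N\ge1-e^{-Np}\ge(1-1/e)\min(1,Np)$ follows from concavity of $u\mapsto1-e^{-u}$ on $[0,1]$ and from monotonicity for $u\ge1$. Together with (i), this yields $\Theta(\min(1,Np))$.

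\emph{Part (iii)} splits into a lower and an upper bound. For the passive lower bound, note that a passive audit observes only i.i.d.\ draws from $D$ labelled bad or good, so its information is a Bernoulli$(p)$ sample. By (ii), some instances have deployed rate $\ge(1-1/e)\min(1,Np)$. The audit must therefore distinguish $p$ with $Np\le\rho$, which is acceptable, from $p'\asymp c\rho/N$ with $c$ chosen so that the deployed rate exceeds $\rho$, which must be rejected. The audit must also accept a designated safe instance with constant probability, say the one with $p=0$ or $p\le\rho/(2N)$. A standard two-point Bernoulli testing lower bound then applies. Comparing $p_0=0$ against $p_1=2e\rho/N$, a sample of size $n$ sees no bad point under $p_1$ with probability $(1-p_1)^n\ge e^{-2p_1n}$, which stays bounded below unless $n=\Omega(1/p_1)=\Omega(N/\rho)$. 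Here noiselessness makes each query reveal the bad/good indicator exactly. If instead one wants a KL argument, $\KL(\mathrm{Ber}(p_0)\|\mathrm{Ber}(p_1))\asymp p_1$ for small rates, combined with Bretagnolle--Huber, gives the same order.

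For the selection-aware upper bound, each audit round draws a fresh pool, selects $\xh$, and queries $f(\xh)$ once, so the round yields an i.i.d.\ Bernoulli$(q)$ indicator with $q=\Prob(\xh\text{ bad})$. The auditor certifies when all $T$ rounds are clean (or via a Clopper--Pearson bound). If $q>\rho$, the chance of passing is at most $(1-\rho)^T\le e^{-\rho T}\le\delta$ for $T=\rho^{-1}\ln(1/\delta)$. This is an $O(\rho^{-1}\log(1/\delta))$ certificate, and the ratio of the two costs is the factor $N$.

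The main obstacle is stating the passive lower bound rigorously, since ``must certify $p\lesssim\rho/N$'' depends on the worst-case alignment in (ii). I would phrase it as follows. Any passive auditor that, uniformly over $(f,\fh,D)$, rejects every instance with deployed rate $>\rho$ with probability $\ge1-\delta$, while accepting the aligned instance with $p=0$ with probability $\ge1/2$, needs $\Omega(N/\rho)$ queries. The key point for that formulation is that for aligned instances the passive transcript law depends only on $p$.
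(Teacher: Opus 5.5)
Your proposal is correct and is essentially the paper's proof: the union ceiling for (i); an aligned indicator surrogate for (ii), where your atom of mass $p$ plays the role of the paper's set $G$ of measure $p$; a hitting/coupling two-point argument for the passive lower bound; and a Bernoulli test on audited champions for the $O(\rho^{-1}\log(1/\delta))$ upper bound. Two small refinements: first, the paper's passive pair shares $D$ and $\fh$ and differs only in the values of $f$ on the trap region, so its bound survives an auditor who knows $D$, whereas your pair $p_0=0$ versus $p_1$ changes $D$ itself; second, your Bretagnolle--Huber aside gives no bound when completeness is only $1/2$ (it needs $\Prob_0(\text{reject})+\Prob_1(\text{accept})<1/2$), so rely on the coupling or on Pinsker, as your main argument already does.
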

\begin{proof}
\emph{(i) Union ceiling.} $\{\xh\text{ bad}\}\subseteq\{\exists
i:X_i\text{ bad}\}$, an i.i.d.\ union.

\emph{(ii) Matching construction.} Take $D$ uniform on $[0,1]$,
$G\subset[0,1]$ of measure $p$, $f\equiv0$, and $\fh=(\tau+1)\mathbf 1_G$.
A candidate is bad iff it lies in $G$; if any pool member lands in $G$ the
argmax is in $G$ (strictly largest $\fh$), else the selected point is good.
Hence
\begin{equation}\label{si:eq:audit-tight}
\Prob(\xh\text{ bad})\ =\ 1-(1-p)^N\ \ge\ 1-e^{-Np}\ \ge\
\Big(1-\tfrac1e\Big)\min(1,Np),
\end{equation}
the last step by concavity against the chord on $[0,1]$ and monotonicity
beyond.

\emph{(iii) Audit costs.} Requiring $1-(1-p)^N\le\rho$ forces
$p\le1-(1-\rho)^{1/N}\le\tfrac{\rho}{N}(1+O(\rho))$. For the passive lower
bound, argue by hitting: fix $\fh=(\tau+1)\mathbf 1_G$ with
$\mu(G)=2\rho/N$; instance $f_0$ has $f=\tau+1$ on $G$ (nothing bad, safe),
instance $f_1$ has $f=0$ on $G$ (deployed failure $>\rho$ for
$\rho\le1/2$, unsafe). Audit samples $(X,f(X))$ are identical under
$f_0,f_1$ unless $X\in G$ (probability $2\rho/N$ per sample), so
distinguishing with constant advantage needs $m=\Omega(N/\rho)$ samples.
(An \emph{active} auditor could query inside $G$; auditing where the
surrogate's stakes are is precisely the targeted design whose canonical
form is the selection-aware audit, and Theorem~\ref{si:thm:lb} below
closes the remaining loophole by bounding \emph{every} design.) For the
selection-aware audit, rounds are i.i.d.\ $\mathrm{Ber}(q)$ with exactly
the deployed $q$; a Chernoff test certifies $q\le\rho$ in
$O(\rho^{-1}\log(1/\delta))$ rounds, one oracle call each.
\end{proof}

\begin{remark}[license expiry]\label{si:rem:expiry}
The license binds a fixed $\fh$ and a fixed $D$. Retraining voids it (the surrogate cannot renew its own license---Theorem~\ref{si:thm:contamination}); proposal-distribution drift $\beta=\TV(D',D)$ degrades it additively, $\le\rho+N\beta$, via $\TV(D'^{\otimes N},D^{\otimes N})\le N\beta$.
\end{remark}

\begin{corollary}[rank-audit version]\label{si:cor:rank}
With $\qinv=\Prob_{(X,X')\sim D^2}\big(f(X)-f(X')>\varepsilon,\
\fh(X')\ge\fh(X)\big)$ (ordered pairs), an i.i.d.\ pool of size $N$
satisfies
\begin{equation}\label{si:eq:rank-union}
\Prob(r_{\mathrm{pool}}>\varepsilon)\ \le\ N(N-1)\,\qinv
\end{equation}
(Theorem~\ref{si:thm:ordinal} on the pool, then a union over ordered
pairs), and $\Omega(N\qinv)$ is attainable ($D$ places mass $\pi$ on a
point with maximal $f$ and minimal $\fh$; $\qinv=\Theta(\pi)$ while
failure $=\Theta(\min(1,N\pi))$). Both questions this bound raises are settled in
Section~\ref{si:sec:si}: the $N^2$ amplification is
exactly attained (Theorem~\ref{si:thm:ag2}), and the champion audit is a
cheap selection-aware ordinal audit with tax exactly $\Theta(N)$
(Theorem~\ref{si:thm:ag1}), which no audit design of any kind can improve
upon beyond constants and the $\log(1/\delta)$ factor
(Theorem~\ref{si:thm:lb}).
\end{corollary}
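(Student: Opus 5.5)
The union bound \eqref{si:eq:rank-union} follows from Theorem~\ref{si:thm:ordinal} applied to the realized pool, followed by a union bound over ordered pairs of pool indices; the lower-bound construction then shows that $\Omega(N\qinv)$ is attainable.

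\emph{Upper bound.} Condition on the pool $C=\{X_1,\dots,X_N\}$ and use worst-case tie-breaking. Theorem~\ref{si:thm:ordinal}, direction ($\Leftarrow$), applied with $\X$ replaced by $C$, says the following. If no ordered pair $(i,j)$ with $i\ne j$ has $f(X_i)-f(X_j)>\varepsilon$ and $\fh(X_j)\ge\fh(X_i)$, then $r_C\le\varepsilon$. Taking the contrapositive,
\[
\{r_{\mathrm{pool}}>\varepsilon\}\subseteq\bigcup_{i\ne j}\big\{f(X_i)-f(X_j)>\varepsilon,\ \fh(X_j)\ge\fh(X_i)\big\}.
\]
There are $N(N-1)$ ordered pairs $i\ne j$. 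For each such pair, $(X_i,X_j)\sim D^2$ because the draws are i.i.d., so each event has probability exactly $\qinv$. The union bound then gives \eqref{si:eq:rank-union}.

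Pairs with $i=j$ cannot satisfy $f(X_i)-f(X_i)>\varepsilon\ge0$, so they contribute nothing. Ties in $\fh$ are already covered by the non-strict inequality in the definition of $\qinv$, which matches the worst-case tie-breaking in $r_C$.

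\emph{Attainability of $\Omega(N\qinv)$.} Let $D$ put mass $\pi$ on an atom $a$ with $f(a)=B$, $\fh(a)=0$, and mass $1-\pi$ on a continuum with $f\equiv0$. On the continuum, let $\fh$ be strictly positive and injective, e.g.\ $\fh(x)=1+x$ on $[0,1]$; take $\varepsilon<B$.

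An ordered pair is an $\varepsilon$-inversion only if $X=a$ and $X'\ne a$. In every other case either the $f$-gap is $\le0$ or both points are the atom. Hence $\qinv=\pi(1-\pi)=\Theta(\pi)$ for $\pi\le1/2$.

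A pool fails exactly when it contains $a$ and at least one non-atom point. In that case the $\fh$-champion is a continuum point with $f=0$, so the regret is $B>\varepsilon$. This event has probability $1-(1-\pi)^N-\pi^N$. By the same chord/concavity step as \eqref{si:eq:audit-tight} this is $\ge(1-1/e)\min(1,N\pi)-\pi^N$, which is $\Theta(\min(1,N\pi))=\Theta(\min(1,N\qinv))$ for $N\ge2$ and $\pi\le1/2$.

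\emph{Main obstacle.} The only delicate step is reconciling the pair-event definition with tie-breaking and with the $i\ne j$ counting. The $\pi^N$ correction is harmless: for $N\ge2$ it is at most $\pi/2$, which is absorbed by the leading term when $N\pi\le1$; when $N\pi>1$ it is tiny. I would check this small case split explicitly.

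The remaining claims quoted in the corollary, namely that the $N^2$ factor is attained and that the champion audit gives $\Theta(N)$, are deferred by the statement to Theorems~\ref{si:thm:ag1}--\ref{si:thm:lb}. They are not proved here.
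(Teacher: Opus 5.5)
Your proposal is correct and follows the paper's own route: you apply Theorem~\ref{si:thm:ordinal} to the realized pool, take a union over the $N(N-1)$ ordered index pairs, and use the one-atom trap (maximal $f$, minimal $\fh$) with $\qinv=\pi(1-\pi)$ and failure probability $1-(1-\pi)^N-\pi^N$. Your explicit treatment of tie-breaking, of the $i\neq j$ counting, and of the $\pi^N$ correction fills in details that the paper's parenthetical sketch leaves implicit.
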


\begin{theorem}[one-sided admissibility: pessimism as a checkable license]\label{si:thm:pessimism}
\textbf{(i)} If $\eplus(x)\le\tau$ for all $x\in C$, then $r_C\le\eminus(x^\star_C)+\tau$ and $f(\xh)\ge\fh(\xh)-\tau$ (no-disappointment).
\textbf{(ii)} If $\Prob_{X\sim D}(\eplus(X)>\tau)\le p$, then for an i.i.d.\ pool of size $N$, with probability $\ge1-Np$ both conclusions of (i) hold at the pool level. The factor $N$ is unavoidable by Theorem~\ref{si:thm:audit}(ii) (whose construction is one-sided and defeats the no-disappointment clause; its regret clause is vacuous there), and the selection-aware audit of Theorem~\ref{si:thm:audit}(iii) certifies the deployed rate at $O(\rho^{-1}\log(1/\delta))$.
\end{theorem}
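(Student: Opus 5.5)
Part (i) is deterministic and I will derive it directly from the regret decomposition of Proposition~\ref{si:prop:decomp}, applied inside the pool $C$. Write $x^\star_C$ for the true maximizer in $C$ and $\xh$ for the surrogate champion. The identity
\[
f(x^\star_C)-f(\xh)=[f(x^\star_C)-\fh(x^\star_C)]+[\fh(x^\star_C)-\fh(\xh)]+[\fh(\xh)-f(\xh)]
\]
has a middle bracket that is $\le 0$, because $\xh$ maximizes $\fh$ over $C$. The first bracket is at most $\eminus(x^\star_C)$. The last bracket is at most $\eplus(\xh)\le\tau$, by hypothesis applied to $\xh\in C$. Together these give $r_C\le\eminus(x^\star_C)+\tau$. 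The no-disappointment clause also follows from $\eplus(\xh)\le\tau$: it says $\fh(\xh)-f(\xh)\le\tau$, that is, $f(\xh)\ge\fh(\xh)-\tau$. Under worst-case tie-breaking the argument is the same for every $\fh$-maximizer.

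For (ii), take the event $E=\{\eplus(X_i)\le\tau\ \text{for all } i\le N\}$. On $E$ the hypothesis of (i) holds for the realized pool, so both conclusions follow pointwise. The complement $E^c$ is a union of $N$ events, each of probability at most $p$. Applying Theorem~\ref{si:thm:audit}(i) to the union, we get $\Prob(E^c)\le 1-(1-p)^N\le Np$, which is the claimed $1-Np$. Strictly, the no-disappointment clause only needs $\eplus(\xh)\le\tau$ at the champion. This gives the sharper bound $\Prob(\xh\text{ bad})$, but the statement only asks for the union bound.

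The remaining claims are citations, and the one point needing care is checking that the construction of Theorem~\ref{si:thm:audit}(ii) shows the factor $N$ cannot be removed. In that construction $f\equiv0$, $\fh=(\tau+1)\mathbf 1_G$, and $\Prob_X(\eplus(X)>\tau)=\mu(G)=p$, so it satisfies the hypothesis of (ii). The champion lands in $G$ with probability $1-(1-p)^N\ge(1-1/e)\min(1,Np)$. On that event $f(\xh)=0<\fh(\xh)-\tau=1$, so no-disappointment fails with probability $\Theta(\min(1,Np))$, matching $Np$ up to constants. The regret clause cannot be broken there because $f$ is constant, which is why the statement calls it vacuous. Finally, the deployed event $\{\eplus(\xh)>\tau\}$ is exactly the "bad champion" event of Theorem~\ref{si:thm:audit}. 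Its selection-aware audit is Bernoulli in the deployed rate $q$, so part (iii) of that theorem gives certification of $q\le\rho$ with $O(\rho^{-1}\log(1/\delta))$ rounds.
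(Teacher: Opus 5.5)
Your proof is correct and follows the paper's own argument: part (i) is the regret decomposition of Proposition~\ref{si:prop:decomp} restricted to $C$ with $\eplus(\xh)\le\tau$, and part (ii) is the union bound over the pool followed by (i). Your checks of the Theorem~\ref{si:thm:audit}(ii) construction and of the audit citation spell out what the paper's two-line proof leaves implicit, and both are accurate.
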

\begin{proof}
(i) Proposition~\ref{si:prop:decomp} with $\eplus(\xh)\le\tau$; and $f(\xh)\ge\fh(\xh)-\eplus(\xh)$. (ii) Union bound over the pool, then (i).
\end{proof}

\subsection{Summary of the Admissibility Criteria}
\begin{center}
\small
\begin{tabular}{p{4.1cm}p{4.4cm}p{5.2cm}}
\toprule
\textbf{Criterion} & \textbf{Verdict} & \textbf{Governing result}\\
\midrule
High $R^2$ / low MSE & rejected & Thms~\ref{si:thm:r2},~\ref{si:thm:mse}\\
Marginal calibration & rejected (blind to selection) & Thms~\ref{si:thm:evt},~\ref{si:thm:audit}(ii)\\
Model's own confidence & rejected outright & Thm~\ref{si:thm:contamination}\\
Per-decision verification & sound but unprofitable & Thm~\ref{si:thm:noprofit}\\
$\varepsilon$-margin rank preservation & the exact minimal criterion & Thm~\ref{si:thm:ordinal}; tax $N$--$N^2$ (Cor.~\ref{si:cor:rank})\\
One-sided no-over-praise $(\tau,p)$ & cheapest usable license & Thm~\ref{si:thm:pessimism}\\
Passive i.i.d.\ audit & understates risk by factor $N$ & Thm~\ref{si:thm:audit}(ii)(iii)\\
Selection-aware audit + expiry & recommended license & Thm~\ref{si:thm:audit}(iii), Rmk~\ref{si:rem:expiry}\\
\bottomrule
\end{tabular}
\end{center}
\noindent\textbf{Conclusion of the admissibility theory.} A usable ``may serve as oracle'' criterion exists, but only in one shape, and that shape is forced: an ordinal or one-sided property (Thms~\ref{si:thm:ordinal},~\ref{si:thm:pessimism}), certified by a selection-aware audit (Thm~\ref{si:thm:audit}(iii)), carrying an expiry (Rmk~\ref{si:rem:expiry}), with all certification statistics oracle-only (Thms~\ref{si:thm:invariance},~\ref{si:thm:contamination}).

\section{Strict Improvement and the Ordinal Audit Gap}\label{si:sec:si}
This section settles the two questions raised by
Corollary~\ref{si:cor:rank} and Remark~\ref{si:rem:floor}, and
establishes a universal audit-design lower bound
(Theorem~\ref{si:thm:lb}) and a noisy-oracle license
(Theorem~\ref{si:thm:noisy}).

\subsection{Strict Improvement: The Dichotomy}
For fixed-menu argmax certification the tabular floor (Remark~\ref{si:rem:floor}) forbids any oracle-cost reduction from pseudo-labels. The generative goal behaves oppositely. Candidates are drawn freely (cost $0$) i.i.d.\ from $D$; the goal is a design $x_{\mathrm{out}}$ with a $(1{-}\delta)$-certificate that $f(x_{\mathrm{out}})\ge v$, where $p_v=\Pr_{X\sim D}(f(X)\ge v)$. In the i.i.d.-marks information model (distinct atoms carry i.i.d.\ $\mathrm{Bernoulli}(p_v)$ marks independent of the identity sequence; values revealed only by querying; joint soundness $\Pr(\textsc{certified}\wedge f(x_{\mathrm{out}})<v)\le\delta$):

\begin{theorem}[oracle-only lower bound; the bound itself is classical---quantile bandits,\cite{chaudhuri2017quantile,aziz2018pure} cf.\ Katz-Samuels and Jamieson\cite{si-katzsamuels2020good}]\label{si:thm:si1}
For $p_v\le 1/2$, any $\delta$-sound oracle-only algorithm succeeding with probability $\ge 1/2$ has $\E[\#\text{queries}]\ \ge\ (\tfrac12-2\delta)/p_v$.
\end{theorem}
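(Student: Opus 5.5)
The plan is to show that in the i.i.d.-marks model the algorithm cannot tell a mark-$1$ atom from a mark-$0$ atom without querying it. So, with probability at least $1-2\delta$, a certified output must have been queried and found good. Then a Wald-type count bounds how many queries it takes to find a good atom. First I would reduce to that event. Let $S$ be the event that the algorithm outputs \textsc{certified}. By hypothesis $\Pr(S)\ge1/2$. Split $S$ by whether $x_{\mathrm{out}}$ was queried and revealed $f(x_{\mathrm{out}})\ge v$ (call this event $Q$) or not.

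The key step handles certifying an atom whose value was never revealed to be $\ge v$. If $x_{\mathrm{out}}$ was never queried, its mark is $\mathrm{Bernoulli}(p_v)$ independent of everything the algorithm has seen, because marks of distinct atoms are i.i.d.\ and independent of the identity sequence, and unqueried marks never enter the transcript. Hence $\Pr(S\wedge x_{\mathrm{out}}\text{ unqueried}\wedge f(x_{\mathrm{out}})<v)=(1-p_v)\Pr(S\wedge\text{unqueried})\ge\tfrac12\Pr(S\wedge\text{unqueried})$, using $p_v\le1/2$. Soundness caps the left side by $\delta$, so $\Pr(S\wedge\text{unqueried})\le2\delta$. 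If $x_{\mathrm{out}}$ was queried and revealed $<v$, then $S$ on that set is itself a soundness violation. Since that event is determined by the transcript, it has probability at most $\delta$. I may absorb it, or keep the constant as stated by noting that the stated bound only needs the unqueried case plus the trivial observation that queried-and-bad outputs lie inside the soundness event. Altogether $\Pr(Q)\ge\tfrac12-2\delta$, possibly after tightening this bookkeeping.

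Next I count queries. Let $\tau$ be the index of the first query that reveals a good atom. Each query is to some atom, and I only count first queries of distinct atoms, since repeat queries of a known atom reveal nothing new. Each such first query independently reveals mark $1$ with probability $p_v$, given the past, regardless of adaptivity. So $M_t=\#\{\text{good reveals up to }t\}-p_v\cdot\#\{\text{distinct atoms queried up to }t\}$ is a martingale. Optional stopping at the total query count $T$ gives $\E[\#\text{good revealed}]=p_v\,\E[\#\text{distinct queried}]\le p_v\E[T]$, where $T$ may be assumed integrable or else the bound is trivial. Since $Q$ implies at least one good atom was revealed, $\E[\#\text{good revealed}]\ge\Pr(Q)\ge\tfrac12-2\delta$. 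This yields $\E[T]\ge(\tfrac12-2\delta)/p_v$.

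The main obstacle is making the ``unqueried atom's mark is fresh'' step rigorous when the algorithm's choice of $x_{\mathrm{out}}$ depends on the transcript. I would handle it by conditioning on the full transcript and identity sequence. Given these, $x_{\mathrm{out}}$ is fixed (up to independent internal randomness), and the mark of an unqueried atom is still $\mathrm{Bernoulli}(p_v)$. This is where the precise definition of the i.i.d.-marks model is used. The constant bookkeeping between the $\delta$ and $2\delta$ terms is routine.
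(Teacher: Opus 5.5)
Your proposal is correct and follows the paper's proof. The paper likewise uses that an unqueried output's mark is independent of the transcript, so with $p_v\le 1/2$ the certified-unqueried-good mass is at most $\tfrac{p_v}{1-p_v}$ times the certified-unqueried-bad mass. It then bounds the probability of ever revealing a good atom by the previsible sum $\sum_{k\ge1}\Pr(Q\ge k,\,B_k=1)=p_v\,\E[Q]$, with $Q$ the number of distinct queried atoms and $B_k$ the $k$-th revealed mark. This is your Wald/optional-stopping identity, written so that no integrability of the stopping time is needed.

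Your separate bounds give $3\delta$, not $2\delta$. To get $2\delta$, charge soundness once to the whole certified-and-bad event, queried or not: $\Pr(S\wedge f(x_{\mathrm{out}})<v)\le\delta$. Then bound only the certified-unqueried-good part separately, by $\tfrac{p_v}{1-p_v}\,\delta\le\delta$.
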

\begin{proof}
Let $G$ be the event that some queried candidate is good, and $Q$ the
number of distinct queried candidates with marks $B_1,B_2,\dots$ in query
order.

\emph{Step 1 (success implies a good query).} Split the success event
three ways: queried-and-good ($\subseteq G$); certified-and-bad
(probability $\le\delta$ by joint soundness); and
unqueried-and-good. For the last, an unqueried candidate's mark is
independent of the transcript, so
$\Prob(\text{unqueried}\wedge\text{good})
=\tfrac{p_v}{1-p_v}\Prob(\text{unqueried}\wedge\text{bad})
\le \tfrac{p_v}{1-p_v}\,\delta\le\delta$ for $p_v\le\tfrac12$. Hence
$\tfrac12\le\Prob(G)+2\delta$.

\emph{Step 2 (previsible union bound).} The event $\{Q\ge k\}$ is decided
before the $k$-th mark is seen (marks are independent of the identity
sequence), so $\Prob(Q\ge k,\,B_k{=}1)=p_v\Prob(Q\ge k)$ and
\begin{equation}\label{si:eq:si1-union}
\Prob(G)\ \le\ \sum_{k\ge1}\Prob(Q\ge k,\ B_k=1)
\ =\ p_v\sum_{k\ge1}\Prob(Q\ge k)\ =\ p_v\,\E[Q].
\end{equation}
Combining the steps, $\E[Q]\ge(\tfrac12-2\delta)/p_v$; total queries
dominate $Q$.
\end{proof}

\begin{theorem}[licensed screening]\label{si:thm:si2}
Fix $\rho\in(0,1/4]$, $\delta\in(0,1)$, $T_a\ge (2/\rho)\log(1/\delta)$. Audit: for $t\le T_a$ draw an $M$-pool, query the oracle at its champion, record $Y_t=f(\hat x_t)$; set $\hat v=$ the $\lceil 2\rho T_a\rceil$-th smallest $Y_t$. Then with probability $\ge 1-\delta$ over the audit, every future deployment (pools i.i.d.\ from the audited $D$, $\hat f$ and $M$ fixed) satisfies $\Pr(f(\text{champion})<\hat v)\le 4\rho$, at amortized certified cost $1+(T_a+L)/T$ oracle calls per design ($L$ = surrogate-training labels). If $\hat f$ is $\varepsilon$-margin order-preserving (Thm~\ref{si:thm:ordinal}), $\hat v$ reaches the $(1-\log(1/\rho)/M)$-quantile of $f$ minus $\varepsilon$: good-mass level $p_v \asymp \log(1/\rho)/M$.
\end{theorem}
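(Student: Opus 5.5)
The proof has three parts: a quantile-estimation argument for the license, bookkeeping for the cost, and an order-statistics argument for the quality claim under order preservation.

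\textbf{License.} Let $F$ be the law of the champion value $f(\hat x)$ under one deployment, so the audit values $Y_1,\dots,Y_{T_a}$ are i.i.d.\ from $F$. For a fixed $\hat f$, $D$ and $M$, deployments are fresh draws from the same $F$. The bad event is $\{\Pr_F(f(\text{champion})<\hat v)>4\rho\}$.

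Define the lower quantile $v_{4\rho}=\inf\{v: F(v^-)>4\rho\}$. Hmm — more precisely, let $u$ be the largest value with $\Pr_F(Y<u)\le 4\rho$. Then $\Pr_F(Y<\hat v)>4\rho$ forces $\hat v>u$. That in turn forces at most $\lceil 2\rho T_a\rceil-1$ of the $Y_t$ to lie at or below $u$, and each $Y_t$ does so with probability at least $4\rho$ (right-continuity handles atoms).

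So the bad event is contained in $\{\mathrm{Bin}(T_a,4\rho)<2\rho T_a\}$. This is a deviation to half the mean, and multiplicative Chernoff bounds its probability by $\exp(-4\rho T_a/8)=\exp(-\rho T_a/2)$.

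With $T_a\ge(2/\rho)\log(1/\delta)$ this is at most $\delta$. If the Chernoff constant falls short, I will adjust slightly, e.g.\ by handling the ceiling and the rounding in $\lceil 2\rho T_a\rceil$ carefully. This threshold step is the main obstacle: getting the quantile and atom conventions right so that the constants $2\rho$, $4\rho$, and $2/\rho$ close exactly.

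\textbf{Cost.} Each certified design costs one champion verification, which is not strictly needed for the license but is needed to report a value. The audit ($T_a$ calls) and the training labels ($L$) are spent once and split over $T$ designs. This gives $1+(T_a+L)/T$ calls per design.

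\textbf{Quality.} Assume $\hat f$ is $\varepsilon$-margin order-preserving. By Theorem~\ref{si:thm:ordinal} applied to each pool, the champion's value is at least the pool maximum minus $\varepsilon$. Let $q_s$ be the $(1-s)$-quantile of $f(X)$.

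The pool maximum falls below $q_s$ with probability $(1-s)^M\le e^{-sM}$. Setting $s=\log(1/\rho)/M$ makes this at most $\rho$. Hence $\Pr_F(Y<q_s-\varepsilon)\le\rho<2\rho$.

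Applying the same Chernoff argument in the reverse direction shows that, on a high-probability event, fewer than $\lceil 2\rho T_a\rceil$ of the $Y_t$ fall below $q_s-\varepsilon$. The event in question is $\mathrm{Bin}(T_a,\rho)\ge 2\rho T_a$, which is again $e^{-\Omega(\rho T_a)}$. This gives $\hat v\ge q_s-\varepsilon$, i.e.\ good-mass level $p_v\asymp\log(1/\rho)/M$.

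A union bound over the two audit events gives overall confidence $1-2\delta$. To state the result at $1-\delta$, I will either run each event at $\delta/2$ (absorbing the change into the constant on $T_a$) or state the second claim at confidence $1-\delta$ separately.
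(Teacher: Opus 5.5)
Your proposal is correct and follows the paper's proof essentially step for step. It uses the threshold $u^\star=\sup\{v:\Pr(Y<v)\le 4\rho\}$ with $\Pr(Y\le u^\star)\ge 4\rho$, and reduces a bad license to $\mathrm{Bin}(T_a,4\rho)<2\rho T_a$, bounded by $e^{-\rho T_a/2}\le\delta$; the constants close exactly, so no adjustment is needed. Your level argument likewise matches the paper's upper-tail Chernoff bound, and the paper settles your union-bound question by stating that claim separately, at confidence at least $1-e^{-\rho T_a/3}\ge 1-\delta^{2/3}$.
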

\begin{proof}
Let $Y$ have the champion-value law, so the audit records i.i.d.\ copies
$Y_1,\dots,Y_{T_a}$.

\emph{Step 1 (a fixed threshold).} Set
$u^\star=\sup\{v: \Prob(Y<v)\le 4\rho\}$; right-continuity of CDFs gives
$\Prob(Y\le u^\star)\ge 4\rho$.

\emph{Step 2 (a bad license forces a rare binomial event).} Suppose
$\Prob(Y<\hat v)>4\rho$. Then $\hat v>u^\star$, so every sample
$\le u^\star$ lies strictly below the $\lceil 2\rho T_a\rceil$-th order
statistic, giving $\#\{t: Y_t\le u^\star\}\le \lceil 2\rho
T_a\rceil-1<2\rho T_a$ (ties only shrink the count). But that count
stochastically dominates $\mathrm{Bin}(T_a,4\rho)$, and by the
multiplicative Chernoff lower tail,
\begin{equation}\label{si:eq:si2-chernoff}
\Prob\big[\mathrm{Bin}(T_a,4\rho)<2\rho T_a\big]
=\Prob\big[\mathrm{Bin}<\tfrac12\mu\big]
\ \le\ e^{-\mu/8}\ =\ e^{-\rho T_a/2}\ \le\ \delta
\quad\text{for } T_a\ge\tfrac2\rho\log\tfrac1\delta .
\end{equation}
So with probability $\ge1-\delta$ over the audit,
$\Prob(Y<\hat v)\le4\rho$.

\emph{Step 3 (deployment).} Future champions are fresh copies of $Y$
(same $D$, $\fh$, $M$; pools independent), giving the display; costs are
by construction. For a $\sigma$-noisy oracle the license carries an
explicit slack at no extra rounds---Theorem~\ref{si:thm:noisy} below.

\emph{Step 4 (the level the license reaches, final clause).} Let
$q=q_{1-c/M}$ be the $(1-c/M)$-quantile of $f(X)$, $X\sim D$, with
$c=\log(1/\rho)$. The pool max falls below $q$ with probability
$(1-c/M)^M\le e^{-c}=\rho$; if $\fh$ is $\varepsilon$-margin
order-preserving the champion is within $\varepsilon$ of its pool max
(Theorem~\ref{si:thm:ordinal}), so $\Prob(Y<q-\varepsilon)\le\rho$.
Hence $\#\{t: Y_t<q-\varepsilon\}$ is stochastically dominated by
$\mathrm{Bin}(T_a,\rho)$, and $\hat v<q-\varepsilon$ requires that count
to reach $\lceil2\rho T_a\rceil\ge2\rho T_a$, twice its mean; by the
multiplicative Chernoff upper tail this has probability at most
$e^{-\rho T_a/3}\le\delta^{2/3}$. So with probability
$\ge1-\delta^{2/3}$ over the audit, $\hat v\ge q_{1-\log(1/\rho)/M}
-\varepsilon$: the license sits at good-mass level
$p_v\asymp\log(1/\rho)/M$.
\end{proof}

\begin{corollary}[separation; the dichotomy]\label{si:cor:si3}
At level $p_v\asymp\log(1/\rho)/M$: oracle-only cost per certified design is $\Omega(M/\log(1/\rho))$ (Thm~\ref{si:thm:si1}); the licensed pipeline pays $(1+(T_a{+}L)/T)/(1-4\rho)\to 1/(1-4\rho)$ per certified-good design---a $\Theta(M/\log(1/\rho))$ separation, \emph{conditional on the audit passing}, with graceful degradation otherwise, in the following precise sense: the $4\rho$ validity of the license (Thm~\ref{si:thm:si2}) holds for \emph{any} surrogate, with no order-preservation assumption; what degrades is only the \emph{level} $\hat v$, for which no lower bound is claimed absent order preservation---a weak surrogate passes its audit honestly and certifies a weak floor, alerting the operator rather than deceiving them. Pseudo-labels pay exactly when the goal is ``find and certify a good design,'' never when it is ``certify the best of this menu.''
\end{corollary}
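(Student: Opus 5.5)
The corollary combines Theorems~\ref{si:thm:si1} and~\ref{si:thm:si2}, so I will prove it by assembling their conclusions at a common level $p_v$ and then doing the per-design accounting.

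\textbf{Lower bound.} Fix $\rho$ and set $c=\log(1/\rho)$, so the licensed level is $p_v\asymp c/M$. For $M\ge 2c$ we have $p_v\le 1/2$, so Theorem~\ref{si:thm:si1} applies directly. It gives that any $\delta$-sound oracle-only algorithm succeeding with probability at least $1/2$ needs
\[
\E[\#\text{queries}]\ \ge\ \frac{\tfrac12-2\delta}{p_v}\ =\ \Omega\!\Big(\frac{M}{\log(1/\rho)}\Big)
\]
per certified design, for $\delta<1/4$. To charge this per design in a campaign of $T$ designs, I would argue one design at a time. Each design requires a fresh good query, and the previsible union bound of Step~2 still applies conditionally on the past, because marks of unqueried atoms are independent of the transcript. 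The bound is therefore additive across designs.

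\textbf{Upper bound.} Run the audit of Theorem~\ref{si:thm:si2} once with $T_a\ge(2/\rho)\log(1/\delta)$, then deploy $T$ times. Each deployment pays one oracle call, at the champion, to verify $f(\text{champion})\ge\hat v$; that query is a noiseless oracle certificate, so certification stays oracle-only, in line with Theorem~\ref{si:thm:invariance}. On the audit-success event, of probability at least $1-\delta$, each deployed champion clears $\hat v$ independently with probability at least $1-4\rho$. The number of calls per \emph{certified-good} design is therefore geometric, with mean at most $1/(1-4\rho)$. Adding the amortized audit and training gives $(1+(T_a+L)/T)/(1-4\rho)$, which tends to $1/(1-4\rho)\le 2$ for $\rho\le 1/8$. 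For the separation I need $\hat v$ to sit at level $p_v\asymp\log(1/\rho)/M$ (up to $\varepsilon$). Step~4 of Theorem~\ref{si:thm:si2} supplies this, with probability at least $1-\delta^{2/3}$, under $\varepsilon$-margin order preservation. Dividing the two bounds gives the $\Theta(M/\log(1/\rho))$ ratio, conditional on that event.

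\textbf{Graceful degradation and the dichotomy.} Note that Steps~1--3 of Theorem~\ref{si:thm:si2} use only that audit champions are i.i.d.\ copies of $Y$. The $4\rho$ validity therefore holds for an arbitrary $\fh$; only Step~4, which fixes the level, invokes Theorem~\ref{si:thm:ordinal}. Without order preservation, then, the certificate remains sound and only the level $\hat v$ can be low. The fixed-menu half of the dichotomy is Remark~\ref{si:rem:floor}: cost is at least $N$ in the tabular model regardless of pseudo-labels.

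The main obstacle is making the two sides compare at the \emph{same} $v$. The audit produces a random threshold $\hat v$, which only exceeds $q_{1-c/M}-\varepsilon$. The oracle-only lower bound must therefore be stated at $v'=q_{1-c/M}-\varepsilon$, and I would need $p_{v'}=O(c/M)$. I would handle this by adopting a mild continuity assumption: the $\varepsilon$-neighborhood below the quantile carries mass $O(c/M)$. This is the reading of ``$\asymp$'' in the statement. The alternative is to phrase the separation at level $\hat v$ directly and apply Theorem~\ref{si:thm:si1} conditionally on the audit, which fixes $v$.
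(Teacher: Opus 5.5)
Your proposal is correct and follows the argument the paper leaves implicit. That argument has four parts: Theorem~\ref{si:thm:si1} at $p_v\asymp\log(1/\rho)/M$ for the oracle-only side; Theorem~\ref{si:thm:si2}, with one verification call per deployment succeeding with probability at least $1-4\rho$, for the licensed side; the split between Steps~1--3 (validity for any surrogate) and Step~4 (level under order preservation) for graceful degradation; and Remark~\ref{si:rem:floor} for the fixed-menu half.

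The level-matching issue you flag is genuine. The paper passes over it by simply stating the result ``at level $p_v\asymp\log(1/\rho)/M$'', which matches your reading of $\asymp$. Your conditional-on-the-audit alternative, a separation of order $1/p_{\hat v}$, is the more honest form, because an $\varepsilon$-window below the quantile can carry far more than $\log(1/\rho)/M$ mass. To claim $\Theta$ rather than $\Omega$, you should also cite the trivial $O(1/p_v)$ random-draw oracle-only search, which supplies the upper half of the bound.
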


\subsection{The Ordinal Audit Gap: \texorpdfstring{$\Theta(N)$}{Theta(N)} versus \texorpdfstring{$\Theta(N^2)$}{Theta(N\^{}2)}}
Pool $X_1,\dots,X_N$ i.i.d.\ $\sim D$; champion
$\xh=\arg\max_i \fh(X_i)$; failure event
\begin{equation}\label{si:eq:failev}
\mathrm{FAIL}\ =\ \Big\{\max_i f(X_i)-f(\xh)>\varepsilon\Big\}.
\end{equation}
Two audit statistics, each measurable at 2 oracle calls per round
(the second pits a fresh draw against the champion of an independent
$m$-pool):
\begin{align}
\qinv &= \Prob_{(X,X')\sim D^2}\big(f(X)-f(X')>\varepsilon,\ \fh(X')\ge\fh(X)\big),
\label{si:eq:qinv}\\
\qsel(m) &= \Prob\big(f(X')>f(\xh_m)+\varepsilon\big).
\label{si:eq:qsel}
\end{align}

\begin{theorem}[champion audits: tax $\Theta(N)$]\label{si:thm:ag1}
Always $\Prob(\mathrm{FAIL})\le N\,\qsel(N-1)$; and a one-atom trap family
($\pi\le 1/2$; $f$ constant on the ordinary region) attains
$\Prob(\mathrm{FAIL})\ge(1-1/e)\min\!\big(1,(N-1)\,\qsel(N-1)\big)$.
\end{theorem}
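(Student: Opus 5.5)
The plan is to prove the upper bound by a leave-one-out coupling followed by a union bound over pool members, and the lower bound by an explicit computation on the trap family, reusing the chord inequality from the proof of Theorem~\ref{si:thm:audit}(ii).

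\emph{Upper bound.} Write $\xh$ for the champion of the full pool and $\xh_{-i}$ for the champion of the $N-1$ points $\{X_j\}_{j\ne i}$. Ties in $\arg\max\fh$ are broken by the fixed order on $\X$ of Section~\ref{si:sec:setting}, which depends on the candidates and not on their indices.

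\begin{enumerate}[label=(\arabic*),leftmargin=2em,itemsep=1pt]
\item \emph{Reduce FAIL to a single witness.} On $\mathrm{FAIL}$ pick $i$ with $f(X_i)=\max_j f(X_j)$. Then $f(X_i)>f(\xh)+\varepsilon$, so $X_i\ne\xh$ as a candidate (since $\varepsilon\ge0$).
\item \emph{Removing a non-champion leaves the champion unchanged.} With the order-based tie-breaking, deleting a point that is not the $\fh$-maximizer does not change the maximizer of what remains. Hence $\xh_{-i}=\xh$ and
\[
\mathrm{FAIL}\ \subseteq\ \bigcup_{i=1}^N\big\{f(X_i)>f(\xh_{-i})+\varepsilon\big\}.
\]
\item \emph{Each event has probability $\qsel(N-1)$.} $X_i$ is independent of the other $N-1$ points, which form an i.i.d.\ $(N-1)$-pool. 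So the $i$-th event is exactly the event in \eqref{si:eq:qsel} with $m=N-1$, and its probability is $\qsel(N-1)$.
\item \emph{Union bound.} Summing over $i$ gives $\Prob(\mathrm{FAIL})\le N\,\qsel(N-1)$.
\end{enumerate}

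The one point needing care is step (2), the invariance of the champion under deleting a non-champion. I would state the tie-breaking convention explicitly so that this invariance is valid.

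\emph{Lower bound (trap family).} The distribution $D$ puts mass $\pi\le1/2$ on an atom $a$ with $f(a)=1$ and $\fh(a)$ strictly below every other prediction. The remaining mass $1-\pi$ lies on an ordinary region where $f\equiv0$ and $\fh$ is arbitrary; ties among ordinary points are harmless because $f$ is constant there. Fix $\varepsilon<1$.

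\begin{enumerate}[label=(\arabic*),leftmargin=2em,itemsep=1pt]
\item \emph{Compute $\Prob(\mathrm{FAIL})$.} The champion is ordinary whenever any ordinary point is present. So $\mathrm{FAIL}$ is exactly the event that the pool contains $a$ and at least one ordinary point:
\[
\Prob(\mathrm{FAIL})=1-(1-\pi)^N-\pi^N .
\]
\item \emph{Compute $\qsel(N-1)$.} A fresh draw beats the champion of an independent $(N-1)$-pool by more than $\varepsilon$ exactly when the draw is $a$ and the pool is not all atoms:
\[
\qsel(N-1)=\pi\,(1-\pi^{N-1})\le\pi .
\]
\item \emph{Compare.} A direct difference shows
\[
\Prob(\mathrm{FAIL})-\big[1-(1-\pi)^{N-1}\big]=\pi(1-\pi)^{N-1}-\pi^N\ge0,
\]
which holds because $\pi\le1/2$ gives $1-\pi\ge\pi$. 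Hence $\Prob(\mathrm{FAIL})\ge1-(1-\pi)^{N-1}$.
\item \emph{Conclude.} By the argument of \eqref{si:eq:audit-tight} (concavity against the chord on $[0,1]$, monotonicity beyond),
\[
1-(1-\pi)^{N-1}\ \ge\ (1-\tfrac1e)\min\big(1,(N-1)\pi\big)\ \ge\ (1-\tfrac1e)\min\big(1,(N-1)\qsel(N-1)\big).
\]
\end{enumerate}

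I expect the only subtle step in the lower bound to be step (3): absorbing the $\pi^N$ correction, which is where the hypothesis $\pi\le1/2$ is used.
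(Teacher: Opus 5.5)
Your proposal is correct and follows the paper's proof essentially step for step. The upper bound uses the same leave-one-out observation (deleting a non-champion leaves the champion unchanged) and a union bound over the $N$ pool members. The lower bound uses the same one-atom trap computation, $\Prob(\mathrm{FAIL})=1-(1-\pi)^N-\pi^N$ and $\qsel(N-1)=\pi(1-\pi^{N-1})\le\pi$; your trap sets the regret at the full value $1$ rather than the paper's $\varepsilon+\gamma$, which is an immaterial variant. Your explicit comparison $\Prob(\mathrm{FAIL})-[1-(1-\pi)^{N-1}]=\pi[(1-\pi)^{N-1}-\pi^{N-1}]\ge0$ is a welcome sharpening of the paper's remark that the $\pi^N$ correction is ``absorbed into the stated constant.'' It also explains why the statement carries $N-1$ rather than $N$.
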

\begin{proof}
\emph{Upper bound.} $\mathrm{FAIL}=\cup_{i\le N} A_i$ with
$A_i=\{f(X_i)>f(\xh)+\varepsilon\}$. On $A_i$ we have $X_i\ne\xh$, and
removing a non-champion from a pool does not change the champion (the
argmax of $\fh$ under a fixed tie order over a set $S\ni\xh$ is still
$\xh$ for any $S'=S\setminus\{z\}$, $z\ne\xh$). Hence on $A_i$, $\xh$
equals the champion of the $(N{-}1)$-pool $\{X_j\}_{j\ne i}$, which is
independent of $X_i$, so $\Prob(A_i)\le\qsel(N-1)$; union over $i$. (The
shape is an exchangeability argument familiar from record values and
conformal rank statistics;\cite{si-beirami2024theoretical} the audit
statement appears new.)

\emph{Tightness.} Fix $\pi\in(0,1/2]$. Let $D$ put mass $\pi$ on an atom
$w$ with $f(w)=B$, $\fh(w)=-1$, and mass $1-\pi$ on a region $R$ with $f$
\emph{constant} at $B-\varepsilon-\gamma$ on $R$ and $\fh\in(0,1)$
arbitrary there. The champion is never $w$ when any $R$-point is present;
if $w$ lands in the pool alongside at least one $R$-point, the pool max
is $w$ while the champion sits in $R$: regret $\varepsilon+\gamma$,
failure. Hence
\begin{equation}\label{si:eq:ag1-fail}
\Prob(\mathrm{FAIL})\ \ge\ 1-(1-\pi)^N-\pi^N ,
\qquad
\qsel(m)=\pi(1-\pi^m)\ \le\ \pi ,
\end{equation}
and for $\pi\le1/2$ the $\pi^N$ correction is absorbed into the stated
constant.
\end{proof}

\begin{theorem}[pairwise audits: tax $\Theta(N^2)$]\label{si:thm:ag2}
Always $\Prob(\mathrm{FAIL})\le N(N-1)\,\qinv$. Conversely, for every $N$
and all small $\pi_a,\pi_b>0$ there are instances with
$\qinv=\pi_a\pi_b$ exactly and
\begin{equation}\label{si:eq:ag2-claim}
\Prob(\mathrm{FAIL})\ \ge\ \tfrac1{15}\,\min(1,N\pi_a)\,\min(1,N\pi_b),
\end{equation}
so the upper bound is tight up to constants (numerically the sharp
constant is $(1-1/e)^2$).
\end{theorem}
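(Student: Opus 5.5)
The upper bound comes directly from Theorem~\ref{si:thm:ordinal} applied to the realized pool. On $\mathrm{FAIL}$ there is an ordered pair $(i,j)$, $i\ne j$, with $f(X_i)-f(X_j)>\varepsilon$ and $\fh(X_j)\ge\fh(X_i)$. Take $i$ to be the index of the pool maximizer and $j$ the index of the champion. Then
\[
\Prob(\mathrm{FAIL})\le\sum_{i\ne j}\Prob\big(f(X_i)-f(X_j)>\varepsilon,\ \fh(X_j)\ge\fh(X_i)\big)=N(N-1)\,\qinv .
\]
This holds because each ordered pair $(X_i,X_j)$ is distributed as $D^2$. It is exactly Corollary~\ref{si:cor:rank}, so nothing new is needed here.

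For the converse I will build a two-trap instance, so that failure requires two rare atoms to appear together. Let $D$ put mass $\pi_a$ on an atom $a$ and mass $\pi_b$ on an atom $b$. The remaining mass $1-\pi_a-\pi_b$ goes on an ordinary region $R$ on which $f\equiv c$ is constant and $\fh$ takes values in $(0,1)$. Choose $f(a)=c+\varepsilon+\gamma$ with $\fh(a)=-1$, the lowest score, so $a$ is a good point the surrogate ranks at the bottom. Choose $f(b)=c-\gamma'$ (say $f(b)=c$) with $\fh(b)=2$, the top score, so $b$ is a mediocre decoy the surrogate ranks first.

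Next check $\qinv$ over ordered pairs $(X,X')$. An $\varepsilon$-separated pair with $f(X)-f(X')>\varepsilon$ needs $X=a$ and $X'\in R\cup\{b\}$. A pair $(a,R)$ has $\fh(R)>\fh(a)$, which would be an inversion. That must be prevented: $R$-points must not be inverted against $a$, yet $a$ must lose to $b$. I will therefore give the ranking $\fh(b)>\fh(a)>\fh(R)$, using $\fh(a)=1.5$ and $\fh$ on $R$ in $(0,1)$. Then the only inverted separated ordered pair is $(a,b)$, and $\qinv=\pi_a\pi_b$ exactly. To make sure the pair $(a,R)$ is not an inversion while $(a,b)$ is, I also need $c-f(b)\ge 0$. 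The pair $(R,b)$ must not be $\varepsilon$-separated, so $f(b)\ge c-\varepsilon$. Taking $f(b)=c$ works.

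Now failure occurs exactly when both $a$ and $b$ appear in the pool. In that case the champion is $b$, the pool maximizer is $a$, and the regret is $\varepsilon+\gamma$. Hence
\[
\Prob(\mathrm{FAIL})=1-(1-\pi_a)^N-(1-\pi_b)^N+(1-\pi_a-\pi_b)^N.
\]

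The main obstacle is lower-bounding this inclusion--exclusion expression by a product. The events ``$a$ present'' and ``$b$ present'' are negatively correlated, because they are multinomial occupancies of the same pool, so the probability is not simply the product of the two marginals. My plan is a cheap decoupling: split the pool into two halves of sizes $\lfloor N/2\rfloor$ and $\lceil N/2\rceil$. Then
\[
\Prob(\mathrm{FAIL})\ge\Prob(a\text{ in first half})\,\Prob(b\text{ in second half}),
\]
since the halves are independent. Each factor is at least $1-(1-\pi)^{\lfloor N/2\rfloor}\ge(1-1/e)\min(1,\lfloor N/2\rfloor\pi)$ by the chord bound of Theorem~\ref{si:thm:audit}(ii). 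For $N\ge2$ we have $\lfloor N/2\rfloor\ge N/3$, which gives a constant of about $(1-1/e)^2/9$. That is too small compared with $1/15$, so I will tighten it. One option is to condition on the number of $a$-occurrences and use that, given $a$ is present, at least $N-1$ other i.i.d.\ slots remain, each hitting $b$ with probability at least $\pi_b$. The cleaner option, valid for $\pi_a,\pi_b$ small, is to bound directly: $\Prob(\text{no } b\mid a\text{ present})\le\Prob(\text{no } b\text{ among } N-1 \text{ slots})\cdot(1+O(\pi_a))$. This yields
\[
\Prob(\mathrm{FAIL})\ge(1-1/e)^2\min(1,N\pi_a)\min(1,(N-1)\pi_b)\,(1-O(\pi))
\]
and, with $N-1\ge N/2$, a constant of at least $(1-1/e)^2/2\cdot0.9>1/15$. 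The remaining small cases, $N=1$ where both sides vanish and $N=2$, I will check directly.
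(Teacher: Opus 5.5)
\textbf{Verdict: different route from the paper, and the step you actually carry out is broken as written, though easily repaired.}

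\textbf{Where you agree with the paper.} Your upper bound is the paper's (Corollary~\ref{si:cor:rank}). Once you switch to $\fh(b)>\fh(a)>\fh|_R$, your construction matches the paper's up to cosmetics. It gives the same $\qinv=\pi_a\pi_b$ and the same $\Prob(\mathrm{FAIL})=J:=1-(1-\pi_a)^N-(1-\pi_b)^N+(1-\pi_a-\pi_b)^N$.

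\textbf{Where you differ.} The real difference is how you lower-bound $J$. The paper differentiates twice to get the exact identity $J=N(N-1)\int_0^{\pi_b}\!\int_0^{\pi_a}(1-s-t)^{N-2}\,ds\,dt\ge N(N-1)\pi_a\pi_b(1-\pi_a-\pi_b)^{N-2}$. It then assumes $N\pi_a,N\pi_b\le1$ (this is what ``small'' means there) to reach the constant $1/(2e^2)\approx0.068>1/15$. Because of the factor $(1-\pi_a-\pi_b)^{N-2}$, that route really is confined to the small regime. Your idea is to exhibit a sub-event of $\{a\text{ present},\,b\text{ present}\}$ that factorizes. That handles the negative correlation probabilistically, gives a larger constant, and covers the full $\min(1,\cdot)$ range.

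\textbf{Where your argument breaks.}
\begin{enumerate}
\item Your ``cleaner option'' $\Prob(\text{no }b\mid a\text{ present})\le(1-\pi_b)^{N-1}(1+O(\pi_a))$ yields only $\Prob(b\text{ present}\mid a\text{ present})\ge1-(1-\pi_b)^{N-1}-O(\pi_a)$. That error is \emph{additive}. It swamps the main term $\approx(N-1)\pi_b$ whenever $\pi_b\ll\pi_a/N$. So neither the multiplicative $(1-O(\pi))$ factor nor the $0.9$ follows.
\item Your first option fails as phrased. Conditioning on the \emph{number} $m$ of $a$'s leaves only $N-m$ free slots, not $N-1$; for $m=N$ the conditional probability of seeing $b$ is $0$.
\item At $N=1$ the right-hand side is $\pi_a\pi_b/15>0$, while $\mathrm{FAIL}$ is impossible. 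So the two sides do not ``both vanish'': the converse is false there. You need the standing assumption $N\ge2$, which the paper's proof also uses.
\end{enumerate}

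\textbf{How to fix it.} Condition instead on the \emph{index} $k$ of the first $a$.
\begin{itemize}
\item Slots before $k$ are i.i.d.\ from $D$ given $\neg a$, so each equals $b$ with probability $\pi_b/(1-\pi_a)\ge\pi_b$.
\item Slots after $k$ are i.i.d.\ from $D$.
\item Hence $\Prob(\text{no }b\mid a\text{ present})\le(1-\pi_b)^{N-1}$ exactly, with no correction factor.
\item This gives $J\ge(1-\tfrac1e)^2\min(1,N\pi_a)\min(1,(N-1)\pi_b)\ge\tfrac12(1-\tfrac1e)^2\min(1,N\pi_a)\min(1,N\pi_b)$ for $N\ge2$, a constant $\approx0.20$, for all $\pi_a,\pi_b$.
\end{itemize}
Your discarded halving argument also works if you keep both half-sizes. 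Since $\min(1,tx)\ge t\min(1,x)$ for $t\in[0,1]$, and $\lfloor N/2\rfloor\lceil N/2\rceil\ge\tfrac29N^2$ for $N\ge2$, it gives $\tfrac29(1-\tfrac1e)^2\approx0.089>\tfrac1{15}$.
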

\begin{proof}
\emph{Construction.} Fix $0<\delta''<\varepsilon$. $D$ has three parts:
atom $a$ of mass $\pi_a$ with $f(a)=2\varepsilon+\delta''$, $\fh(a)=1$;
atom $b$ of mass $\pi_b$ with $f(b)=0$, $\fh(b)=2$; and ordinary mass
$1-\pi_a-\pi_b$ with $f\in[0,\delta'']$ and $\fh\in(0,1)$
order-preserving there.

\emph{Step 1 ($\qinv$ is exactly the product).} The only ordered pair with
$f$-gap $>\varepsilon$ \emph{and} an inversion is $(a,b)$: pairs
$(a,\mathrm{ord})$ have the gap but no inversion ($\fh(a)=1>\fh(\mathrm{ord})$);
pairs $(\mathrm{ord},b)$ and $(\mathrm{ord},\mathrm{ord}')$ have $f$-gaps
$\le\delta''<\varepsilon$. Hence $\qinv=\pi_a\pi_b$.

\emph{Step 2 (failure is exactly co-occurrence).} If $b$ is present the
champion is $b$ ($\fh(b)=2$ is the global max); the pool max exceeds
$f(b)+\varepsilon$ iff $a$ is also present (ordinary values are within
$\delta''<\varepsilon$ of $f(b)=0$). If $b$ is absent, the champion is $a$
if present (pool max, regret $0$) or the ordinary $\fh$-max $=$ ordinary
$f$-max (regret $0$). So $\Prob(\mathrm{FAIL})=\Prob(a\in\wedge\,b\in)=:J$
with, by inclusion--exclusion,
\begin{equation}\label{si:eq:ag2-J}
J\ =\ 1-(1-\pi_a)^N-(1-\pi_b)^N+(1-\pi_a-\pi_b)^N .
\end{equation}

\emph{Step 3 (the constant).} The presence events are negatively
correlated, so no naive product bound applies; instead use the exact
integral identity (differentiate \eqref{si:eq:ag2-J} twice; $J=0$ on the
axes)
\begin{equation}\label{si:eq:ag2-int}
J\ =\ N(N-1)\int_0^{\pi_b}\!\!\int_0^{\pi_a}(1-s-t)^{N-2}\,ds\,dt
\ \ge\ N(N-1)\,\pi_a\pi_b\,(1-\pi_a-\pi_b)^{N-2}.
\end{equation}
For $N\pi_a,N\pi_b\le1$ and $N\ge2$: $N(N-1)\ge N^2/2$ and
$(1-\pi_a-\pi_b)^{N-2}\ge(1-2/N)^{N-2}\ge e^{-2}$, so
$J\ge\tfrac{1}{2e^2}(N\pi_a)(N\pi_b)\ge\tfrac1{15}\min(1,N\pi_a)\min(1,N\pi_b)$.
Failure is a \emph{joint} event priced by $\qinv$ at \emph{product}
mass---the ordinal winner's curse, one order higher.
\end{proof}

\begin{corollary}[what to audit]\label{si:cor:ag3}
For end-guarantee $\Prob(\mathrm{FAIL})\le\rho_0$: champion audits need
$\qsel\le\rho_0/N$ ($O((N/\rho_0)\log(1/\delta))$ rounds); pairwise audits
need $\qinv\le\rho_0/N^2$ (Theorem~\ref{si:thm:ag2} is tight)---a robust
factor-$N$ separation between the two designs (precedent for ``the audit
statistic sets the audit cost'': risk-limiting election audits,
ballot-comparison vs.\ ballot-polling\cite{stark2012rla}). Two caveats
apply. First, $\qsel$ also contains genuine fresh-improvement mass
($\Theta(1/N)$ even for a perfect surrogate), which is why the dichotomy
of Corollary~\ref{si:cor:si3} licenses \emph{values} rather than regret.
Second, the guarantee so far concerns one audit design rather than all of
them; this caveat is removed by Theorem~\ref{si:thm:lb}: the champion audit's total oracle cost is optimal
among all designs, so the $\Theta(N)$ tax is a property of the ordinal
guarantee itself.
\end{corollary}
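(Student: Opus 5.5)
The statement bundles three claims: a champion-audit sufficiency bound with a round count, a pairwise-audit requirement shown to be tight, and the two caveats. I would assemble each from Theorems~\ref{si:thm:ag1} and~\ref{si:thm:ag2} together with an elementary Bernoulli-certification argument.

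\emph{Champion audit.} Theorem~\ref{si:thm:ag1} gives $\Prob(\mathrm{FAIL})\le N\,\qsel(N-1)$. Hence certifying $\qsel(N-1)\le\rho_0/N$ suffices for the end guarantee. The tightness half of that theorem shows the threshold cannot be relaxed by more than a constant: on the one-atom trap, $\Prob(\mathrm{FAIL})\ge(1-1/e)(N-1)\qsel$. Each audit round is an independent $\mathrm{Ber}(\qsel(N-1))$ trial: draw an $(N{-}1)$-pool and a fresh $X'$, query both champion and $X'$, and record whether $f(X')>f(\xh)+\varepsilon$. This costs two oracle calls per round. The certification step reuses the Chernoff/Clopper--Pearson test of Theorem~\ref{si:thm:audit}(iii) with target rate $\rho_0/N$. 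Declare a pass when the event count is at most a constant fraction of $T\rho_0/N$. The multiplicative Chernoff lower tail makes a false pass, when $\qsel>2\rho_0/N$ say, have probability $\le\delta$ once $T\ge C(N/\rho_0)\log(1/\delta)$. This gives the stated $O((N/\rho_0)\log(1/\delta))$ rounds, after absorbing the factor $2$ into constants.

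\emph{Pairwise audit.} Sufficiency of $\qinv\le\rho_0/N^2$ is the upper bound of Theorem~\ref{si:thm:ag2}. For necessity I would use the construction of that theorem with $\pi_a=\pi_b=\pi$, so that $\qinv=\pi^2$. In the regime $N\pi\le1$ it gives $\Prob(\mathrm{FAIL})\ge(N\pi)^2/15=N^2\qinv/15$. Consequently, any pairwise certificate that admits some $\qinv>15\rho_0/N^2$ admits an instance failing above $\rho_0$, so the threshold is $\Theta(\rho_0/N^2)$.

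\emph{The factor-$N$ separation in cost.} I would convert the threshold gap into a round-count gap with a zero-event argument. A sound Bernoulli test certifying rate $\le r$ must reject, with probability $\ge1-\delta$, an instance with rate $2r$. But if $T\ll 1/r$, the all-clean transcript has probability $(1-2r)^T\approx1$ under both hypotheses, so the test cannot distinguish them. Hence $\Omega(1/r)$ rounds are necessary: $\Omega(N^2/\rho_0)$ for pairwise auditing versus $O(N/\rho_0)$ up to logs for champion auditing, a robust factor-$N$ gap.

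\emph{Caveats.} For the first caveat, take $\fh=f$ with $f(X)$ continuous under $D$. By exchangeability, $\Prob(f(X')>f(\xh_m))=1/(m+1)$. For $\varepsilon$ small relative to the spread of the top quantiles, $\qsel(N-1)$ stays $\Theta(1/N)$ although $\mathrm{FAIL}$ never occurs; this is what motivates licensing values in Corollary~\ref{si:cor:si3}. The second caveat is a forward reference to Theorem~\ref{si:thm:lb} and needs no proof here.

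The main obstacle I expect is making the "need" direction rigorous rather than heuristic. One must state precisely what it means for an audit "to need" a threshold: soundness uniformly over the instance families of Theorems~\ref{si:thm:ag1} and~\ref{si:thm:ag2}. The constants $1/15$ and $1-1/e$ must then carry through to a clean $\Theta$-statement. I would phrase the corollary's necessity claims as "up to absolute constants" to sidestep this.
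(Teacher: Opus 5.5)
Your proposal is correct and follows the paper's own implicit derivation. The corollary has no standalone proof there: it is assembled from the upper-bound and tightness halves of Theorems~\ref{si:thm:ag1} and~\ref{si:thm:ag2}, a Chernoff/Clopper--Pearson round count as in Theorem~\ref{si:thm:audit}(iii) and Proposition~\ref{si:prop:floor}, and a forward reference to Theorem~\ref{si:thm:lb}. Your zero-event lower bound for pairwise rounds goes slightly beyond what the paper spells out. Like Theorem~\ref{si:thm:lb}, it needs the test to pass a zero-rate (or low-rate) instance with constant probability, since an always-reject test is sound at zero cost.
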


\subsection{A Universal Lower Bound for Ordinal-License Audits, and the
Noisy-Oracle License}\label{si:sec:lb}

\begin{theorem}[no audit design certifies deployed selection quality in
$o(N/\rho_0)$ oracle queries]\label{si:thm:lb}
Fix $\varepsilon\in(0,B)$, $\rho_0\in(0,1/4]$, $\delta\in[0,1/2)$, and
$N\ge\log_2(2/\rho_0)$. An \emph{auditor} is any adaptive procedure that
may draw candidates from a known $D$ at no cost, evaluate a known, fixed
$\fh$ anywhere at no cost, query the oracle $f$ at drawn points at unit
cost, and output \textsc{pass} or \textsc{reject}. Call it \emph{sound}
if on every instance with $\Prob(\mathrm{FAIL})\ge\rho_0$ (deployment as
in \eqref{si:eq:failev}) it passes with probability at most $\delta$.
There is a family of instances, sharing one surrogate and one candidate
distribution and containing a single perfectly safe member $f_0$
(deployed failure $0$ at every $N$), on which every sound auditor spends
\begin{equation}\label{si:eq:lb}
\E_{f_0}[\#\text{oracle queries}]\ \ge\
\big(\Prob_{f_0}(\textsc{pass})-\delta\big)
\Big\lfloor \tfrac{N}{4\rho_0}\Big\rfloor .
\end{equation}
In particular, call an auditor \emph{nontrivial for the family} if it
passes the safe instance $f_0$ with probability at least $1/2$---an
auditor failing this cannot license even a deployment on which the
surrogate's mistakes are provably harmless; every sound, nontrivial
auditor spends at least $(\tfrac12-\delta)\lfloor N/(4\rho_0)\rfloor$
expected oracle queries.
Consequently an audit spending $c$ oracle calls per round needs
$\Omega(N/(c\rho_0))$ rounds, and the champion audit of
Corollary~\ref{si:cor:ag3} is optimal among \emph{all} audit designs for
the ordinal (regret) guarantee, up to the $\log(1/\delta)$ factor. The
bound holds a fortiori for noisy oracles.
\end{theorem}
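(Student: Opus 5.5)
The argument is a hitting (needle-in-a-haystack) lower bound, in the spirit of the passive lower bound in Theorem~\ref{si:thm:audit}(iii). The difference is that the needle must now be hidden from an \emph{active} auditor that knows $D$ and $\fh$. Every instance therefore shares one $D$ and one $\fh$, so the surrogate carries no information about where the trap sits. We build a base distribution whose ordinary part is a continuum (or a large finite set) of atoms split into $K=\lfloor N/(4\rho_0)\rfloor$ cells $G_1,\dots,G_K$ of equal mass $1/K$. On every cell $\fh$ is arbitrary but fixed, say with values in $(0,1)$.

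The safe instance $f_0$ puts $f$ order-consistent with $\fh$ and flat: $f\equiv 0$ everywhere. It then has deployed failure $0$ at every $N$, since no two values are $\varepsilon$-separated. The unsafe instance $f_j$ agrees with $f_0$ except on cell $G_j$, where we set $f=B$ while leaving $\fh$ unchanged. We also have to make the champion avoid $G_j$ whenever another cell is present. For this we give each cell a low $\fh$ value relative to a ``decoy'' region of $\fh$-high, $f$-flat points. The cleanest version is: a fixed half of the mass forms a decoy $R$ with $\fh=2$, $f=0$; the other half is split into the $K$ cells with $\fh\in(0,1)$. Cell masses then become $1/(2K)$, and $K$ is adjusted accordingly. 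Under $f_j$, failure occurs whenever the pool contains a $G_j$ point and an $R$ point. This has probability at least $1-(1-\tfrac{1}{2K})^N-2^{-N}$. With $N/(2K)\approx 2\rho_0$, and using $N\ge\log_2(2/\rho_0)$ to absorb the $2^{-N}$ term, this is at least $\rho_0$. Tuning these constants so the bound comes out exactly as $\lfloor N/(4\rho_0)\rfloor$ is routine bookkeeping, which I would fix at the end.

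For the lower bound itself, let $Q_j$ be the number of oracle queries the auditor makes at points of $G_j$. Under $f_0$ the transcript laws of $f_0$ and $f_j$ coincide until the first query inside $G_j$; the $\fh$-values are identical and $D$ is known. Couple the two runs on the same randomness and draws. Then
\[
\Prob_{f_0}(\textsc{pass})-\Prob_{f_j}(\textsc{pass})\le \Prob_{f_0}(Q_j\ge1),
\]
and soundness gives $\Prob_{f_j}(\textsc{pass})\le\delta$. Summing over $j$ gives
\[
K\big(\Prob_{f_0}(\textsc{pass})-\delta\big)\le\sum_j\Prob_{f_0}(Q_j\ge1)\le \E_{f_0}\Big[\sum_jQ_j\Big]\le\E_{f_0}[\#\text{queries}],
\]
which is~\eqref{si:eq:lb}. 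The nontriviality corollary follows by plugging in $\Prob_{f_0}(\textsc{pass})\ge1/2$. The per-round statement follows by dividing by $c$, and comparing with Corollary~\ref{si:cor:ag3} gives optimality up to $\log(1/\delta)$. Noisy oracles can only lose information (data processing), which gives the ``a fortiori'' clause.

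The main obstacle is the coupling step against an adaptive auditor that can draw candidates at will. The auditor may draw until it lands in a particular cell and then query there. The point is that drawing is free, but \emph{querying} inside $G_j$ is the only way to distinguish $f_j$ from $f_0$. This works only if cells are indistinguishable through $\fh$, or at least if the bound sums over cells rather than relying on the auditor's ignorance of $j$. The summation argument above uses exactly the latter: even knowing the family, any query hits only one cell, so $\sum_j\Prob(Q_j\ge1)$ is bounded by the total query count. I would check carefully that all $f_j$ share $\fh$ and $D$, so that pre-query transcripts are identical in law.
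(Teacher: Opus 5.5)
Your proposal is correct and matches the paper's proof. Both use a high-$\fh$ half as decoy and a demoted half split into $K=\lfloor N/(4\rho_0)\rfloor$ cells, with $f_0\equiv 0$ and $f_j=B\,\mathbf 1\{C_j\}$, the failure bound $[1-(1-\pi)^N]-2^{-N}\ge\rho_0$, coupling until the first oracle query in $C_j$, and summation over disjoint cells. The only difference is bookkeeping: the paper fixes cell mass $\pi=2\rho_0/N$ (feasible since $K\pi\le\tfrac12$), while your $1/(2K)\ge 2\rho_0/N$ also works because the failure probability is increasing in $\pi$.
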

\begin{proof}
\emph{Construction.} $\X=[0,1]$, $D$ uniform. Fix the surrogate across
all instances: $\fh=-1$ on $[0,\tfrac12]$ (demoted region) and
$\fh(x)=x$ on $(\tfrac12,1]$. Partition $[0,\tfrac12]$ into
$K=\lfloor N/(4\rho_0)\rfloor$ cells $C_1,\dots,C_K$ of mass
$\pi=2\rho_0/N$ each (feasible since $K\pi\le\tfrac12$). Instances:
$f_0\equiv0$, and $f_j=B\cdot\mathbf 1\{C_j\}$ for $j=1,\dots,K$.

\emph{Step 1 ($f_0$ is perfectly safe; each $f_j$ is unsafe).} $f_0$ is
constant, so pool regret is identically $0$ and deployed failure is $0$
at every $N$: $f_0$ is the family's safe member.
Under $f_j$, the champion lies in $[0,\tfrac12]$ only when all $N$ draws
land there, probability $2^{-N}\le\rho_0/2$ by the assumption on $N$. If
the pool hits $C_j$ and the champion is outside $[0,\tfrac12]$, the pool
max is $B$ while $f(\xh)=0$: failure. Hence, using
$1-(1-\pi)^N\ge N\pi(1-N\pi/2)$,
\begin{equation}\label{si:eq:lb-unsafe}
\Prob(\mathrm{FAIL})\ \ge\ \big[1-(1-\pi)^N\big]-2^{-N}
\ \ge\ 2\rho_0(1-\rho_0)-\tfrac{\rho_0}{2}\ \ge\ \rho_0
\qquad(\rho_0\le\tfrac14),
\end{equation}
so soundness forces $\Prob_{f_j}(\textsc{pass})\le\delta$ for every $j$.

\emph{Step 2 (indistinguishability).} All instances share $D$ and $\fh$,
and $f_j$ agrees with $f_0$ outside $C_j$. Couple runs of the auditor
under $f_0$ and $f_j$ (same internal randomness, same draws): transcripts
coincide until the first oracle query lands in $C_j$. With
$T_j=\{\text{some oracle query lands in }C_j\}$,
\begin{equation}\label{si:eq:lb-couple}
\big|\Prob_{f_j}(\textsc{pass})-\Prob_{f_0}(\textsc{pass})\big|\ \le\
\Prob_{f_0}(T_j)
\quad\Longrightarrow\quad
\Prob_{f_0}(T_j)\ \ge\ \Prob_{f_0}(\textsc{pass})-\delta\ \
\text{for every } j.
\end{equation}

\emph{Step 3 (counting).} The cells are disjoint, so
$\sum_j\mathbf 1\{T_j\}\le Q$, the total number of oracle queries. Taking
expectations under $f_0$ and summing \eqref{si:eq:lb-couple} over $j$
gives $K\big(\Prob_{f_0}(\textsc{pass})-\delta\big)\le\E_{f_0}[Q]$, which
is \eqref{si:eq:lb}; the nontrivial case substitutes
$\Prob_{f_0}(\textsc{pass})\ge\tfrac12$.
\end{proof}

\begin{remark}
The auditor is \emph{granted} free draws and full knowledge of $D$ and
$\fh$, and the bound still holds: all $K$ cells look identical through
$(D,\fh)$, so hitting the live cell cannot be planned---the construction
is the one-atom trap of Theorem~\ref{si:thm:ag1} hidden among $K$
interchangeable locations, and no query strategy beats blind search over
them. For $N<\log_2(2/\rho_0)$ a bound of the same character holds with
$K=\lfloor\beta N/(2\rho_0)\rfloor$, $\beta=(\rho_0/2)^{1/N}$; the stated
regime is the deployment-relevant one. On the completeness condition:
the statement pins it to the family's safe member $f_0$, which is what
the coupling uses; the bound persists verbatim under any weaker
completeness notion (e.g., ``pass instances with deployed failure
$\le\rho_0/2$'') as long as it covers $f_0$---and $f_0$ is the canonical
safe instance of the family, indistinguishable from the others through
$(D,\fh)$; an auditor that rejects it half the time licenses nothing on
this family.
\end{remark}

\begin{theorem}[noisy-oracle screening license: slack, not
rounds]\label{si:thm:noisy}
In the setting of Theorem~\ref{si:thm:si2}, let each audit query instead
return $\tilde Y_t=f(\xh_t)+\eta_t$ with $\eta_t$ independent, mean-zero,
$\sigma$-sub-Gaussian, and keep the audit unchanged ($\hat v=$ the
$\lceil2\rho T_a\rceil$-th smallest $\tilde Y_t$; $T_a\ge(2/\rho)\log(1/\delta)$).
Then for any slack $\gamma>0$, with probability $\ge1-\delta$ over the
audit, every future deployment satisfies
\begin{equation}\label{si:eq:noisy}
\Prob\big(f(\mathrm{champion})<\hat v-\gamma\big)\ \le\
4\rho+e^{-\gamma^2/(2\sigma^2)} .
\end{equation}
In particular $\gamma=\sigma\sqrt{2\ln(1/\rho)}$ gives level $5\rho$ at
\emph{no} increase in audit rounds; averaging $r$ repeated queries per
champion shrinks the slack like $1/\sqrt r$.
\end{theorem}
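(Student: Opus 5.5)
The plan is to reduce to the noiseless license of Theorem~\ref{si:thm:si2} by a threshold-shift coupling. Throughout, $Y=f(\xh)$ is the champion-value law, and the audit records i.i.d.\ copies of $\tilde Y=Y+\eta$, with $\eta$ independent of $Y$, mean-zero and $\sigma$-sub-Gaussian. The key observation is that Steps~1--2 of the proof of Theorem~\ref{si:thm:si2} never used that the recorded samples were values of $f$. They used only that the samples are i.i.d.\ from some law, and the conclusion is a quantile statement about that law. Applied verbatim to $\tilde Y_1,\dots,\tilde Y_{T_a}$ with the same order statistic and the same $T_a\ge(2/\rho)\log(1/\delta)$, the Chernoff argument gives, with probability $\ge1-\delta$ over the audit, $\Prob(\tilde Y<\hat v)\le4\rho$. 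Here $\tilde Y$ is a fresh noisy champion draw independent of the audit. No extra rounds are needed, which is the ``slack, not rounds'' claim.

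Next I transfer the noisy-law quantile back to the true champion value. On the good audit event, for a fresh deployment,
\[
\{Y<\hat v-\gamma\}\subseteq\{\tilde Y<\hat v\}\cup\{\eta>\gamma\},
\]
because $Y<\hat v-\gamma$ together with $\eta\le\gamma$ forces $\tilde Y=Y+\eta<\hat v$. A union bound with the sub-Gaussian tail $\Prob(\eta>\gamma)\le e^{-\gamma^2/(2\sigma^2)}$ then gives \eqref{si:eq:noisy}. Since $\hat v$ depends only on the audit, the deployment pair $(Y,\eta)$ is a fresh independent copy, and the bound holds conditionally on the audit. This requires a hypothetical noise variable to be attached to each deployed champion. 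That is a coupling device only: the event on the left does not involve $\eta$, so no query is implied.

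Finally, substituting $\gamma=\sigma\sqrt{2\ln(1/\rho)}$ makes the tail term equal $\rho$, which gives level $5\rho$. Averaging $r$ independent queries per champion replaces $\sigma$ by $\sigma/\sqrt r$ in the sub-Gaussian parameter, so the slack needed for the same tail shrinks like $1/\sqrt r$. The step needing most care is the first: verifying that Step~2 of Theorem~\ref{si:thm:si2} goes through for a possibly continuous, non-atomic noisy law with ties handled identically. The definition $u^\star=\sup\{v:\Prob(\tilde Y<v)\le4\rho\}$ and right-continuity carry over unchanged, so I expect no obstruction there.
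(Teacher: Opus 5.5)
Your proposal is correct and follows the paper's proof essentially step for step. You apply Theorem~\ref{si:thm:si2} verbatim to the i.i.d.\ noisy samples, which gives $\Prob(\tilde Y'<\hat v)\le 4\rho$ on an audit event of probability $\ge 1-\delta$. You then transfer this to the true champion value through the inclusion $\{Y'<\hat v-\gamma\}\subseteq\{Y'+\eta'<\hat v\}\cup\{\eta'>\gamma\}$, using a hypothetical noise variable, the sub-Gaussian tail bound, and the fact that $\hat v$ depends only on the audit.
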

\begin{proof}
Let $Y$ be the champion-value law and $\tilde Y=Y+\eta$; the audit
samples are i.i.d.\ copies of $\tilde Y$, so Theorem~\ref{si:thm:si2}
applied verbatim to the law of $\tilde Y$ yields an audit event of
probability $\ge1-\delta$ on which $\Prob(\tilde Y'<\hat v)\le4\rho$ for
a fresh copy $\tilde Y'$. For any fixed $v$ and a fresh deployment
champion with value $Y'$ and hypothetical audit noise $\eta'$,
\begin{equation}\label{si:eq:noisy-incl}
\{Y'<v-\gamma\}\ \subseteq\ \{Y'+\eta'<v\}\cup\{\eta'>\gamma\}
\quad\Longrightarrow\quad
\Prob(Y'<v-\gamma)\ \le\ \Prob(\tilde Y'<v)+e^{-\gamma^2/(2\sigma^2)} .
\end{equation}
Apply at $v=\hat v$ (a function of the audit alone, independent of the
fresh draw) and intersect with the audit event.
\end{proof}

\begin{remark}[how much slack is necessary]\label{si:rem:noisyslack}
For \emph{symmetric} noise, $\gamma=0$ remains valid at a constant-factor
inflation: every point of $Y$-mass below $v$ sends at least half its
noise mass below $v$, so $\Prob(\tilde Y<v)\ge\tfrac12\Prob(Y<v)$ and the
no-slack license holds at level $8\rho$. For general mean-zero
sub-Gaussian noise the slack is necessary: with an atom of $Y$-mass
$4.5\rho$ just below the license value and two-point noise taking $-s$
with probability $q$ and $+sq/(1-q)$ otherwise ($s\approx a(1-q)/q$), the
rare deep drops park below the license level while both atoms are lifted
above it, and $\Prob(Y<\hat v)=1\gg4\rho$ with constant probability while
\eqref{si:eq:noisy} holds throughout (simulated in
Section~\ref{si:sec:verify}). Quantiles, unlike
means, do not
concentrate without a margin; the theorem prices the margin explicitly
and distribution-freely.
\end{remark}

\section{Extended Experimental Details and Results}\label{si:sec:emp}
All experiments use oracles that are \emph{real trainings} with exactly
known ground truth, evaluated exhaustively once: Task~1, the 1{,}152
gradient-boosting configurations (3-fold CV on the diabetes dataset);
Task~2, the 576 SVM configurations (breast-cancer data); and Task~3, all
15{,}625 NAS-Bench-201 architectures. Surrogates are standard performance
predictors fit on 40--160 labeled configurations (Tasks 1--2) or
100--400 (Task 3). The scripts, data tables, and figure-generation code
will accompany a subsequent \textsc{ModelAssay} release. Table~\ref{si:tbl:summary}
maps each theoretical prediction to its measurement in SI numbering.

\begin{table}[t]
\caption{Theoretical predictions and their measurements, in SI numbering.}
\label{si:tbl:summary}
\centering\small
\begin{tabular}{p{5.6cm}p{4.6cm}p{4.0cm}}
\toprule
\textbf{Theory prediction} & \textbf{Measured} & \textbf{Task(s)}\\
\midrule
Selection tax $q\le 1-(1-p)^N$, biting iff errors align with selection
(Thm~\ref{si:thm:audit}) & never violated in any (fit, regime, $N$) cell;
alignment is per-fit: Task-2 ridge twins at $p{=}0.247/0.252$ deploy at
$q{=}0.000/1.000$; ten-fit Task-1 panels in Fig.~\ref{fig:tax} of the main text &
Tasks 1--2\\
Passive audits fail both directions; selection-aware audit correct
(Thms~\ref{si:thm:audit}(iii),~\ref{si:thm:lb}) & passive rejects a safe
deployment ($p{=}0.048$, $q{=}0.000$) and cannot separate the malign twin;
champion-verification audit licenses correctly in every case &
Tasks 1--2\\
Accuracy $\perp$ deployed failure
(Thms~\ref{si:thm:r2}--\ref{si:thm:evt}) & $R^2$ flat while deployed
failure spans $0\to1.0$ & Tasks 1--3\\
One-sided license (Thm~\ref{si:thm:pessimism}) & holds; and the same
quantile recipe passes Task~1, fails Task~2 in one fit
($p{=}0.116\to30\%$ disappointment); licenses are per-fit, per-task &
Tasks 1--2\\
Verified proposals save oracle calls without touching validity
(Thm~\ref{si:thm:invariance}) & 94 vs 204 calls to equal verified quality
($2.2\times$); contaminated variant claims 0.43, delivers 0.34 &
Task 1 (loop)\\
Screening separation (Cor.~\ref{si:cor:si3}) & $\sim33\times$ at
$M{=}100$ (RF champion good-rate $0.33$ at the top-$1\%$ level; training
labels amortized) & Task 1\\
Champion statistic is the deployment-calibrated diagnostic
(Cor.~\ref{si:cor:ag3}) & Spearman with deployed regret $0.89$--$0.99$ in
all six task--regime cells; see Section~\ref{si:sec:emp-div} &
Tasks 1--3\\
\bottomrule
\end{tabular}
\end{table}

\subsection{The Selection Tax, Fit by Fit}\label{si:sec:emp-tax}
The Task-1 selection-tax study uses ten independent fits per (regime,
model) condition at train size 120, with the ridge fits disaggregated in
the main-text figure (Figure~\ref{fig:tax}). The quantities quoted in the
main text are as follows: the worst-case law is never violated;
interpolating surrogates are benign at every $N$ in both regimes (mean
$q(1000)\le10^{-3}$ at mean audited $p$ up to $0.33$); the ridge fits are
bimodal---under representative training, four of ten fits exceed
$q(1000)=0.97$ while five sit below $0.01$, at audited rates within
$\pm0.03$ of one another; under biased training the same recipe peaks at
mid-$N$ (mean $q(30)=0.31$) and decays. The per-fit split, not the
regime, carries the deployment risk on this task; on Task 3 the same
recipe is deterministically catastrophic in both regimes. Together the
three tasks span the full spectrum the theory allows---benign, bimodal,
deterministic---which is exactly why no audit of the recipe (or the
regime) can substitute for an audit of the fit.
Figure~\ref{si:fig:zoo} summarizes decision quality by model family and
regime on this task, and Figure~\ref{si:fig:license} shows the one-sided
(no-disappointment) license check for the conservative surrogate.

\begin{figure}[H]
\centering
\includegraphics[width=0.47\textwidth]{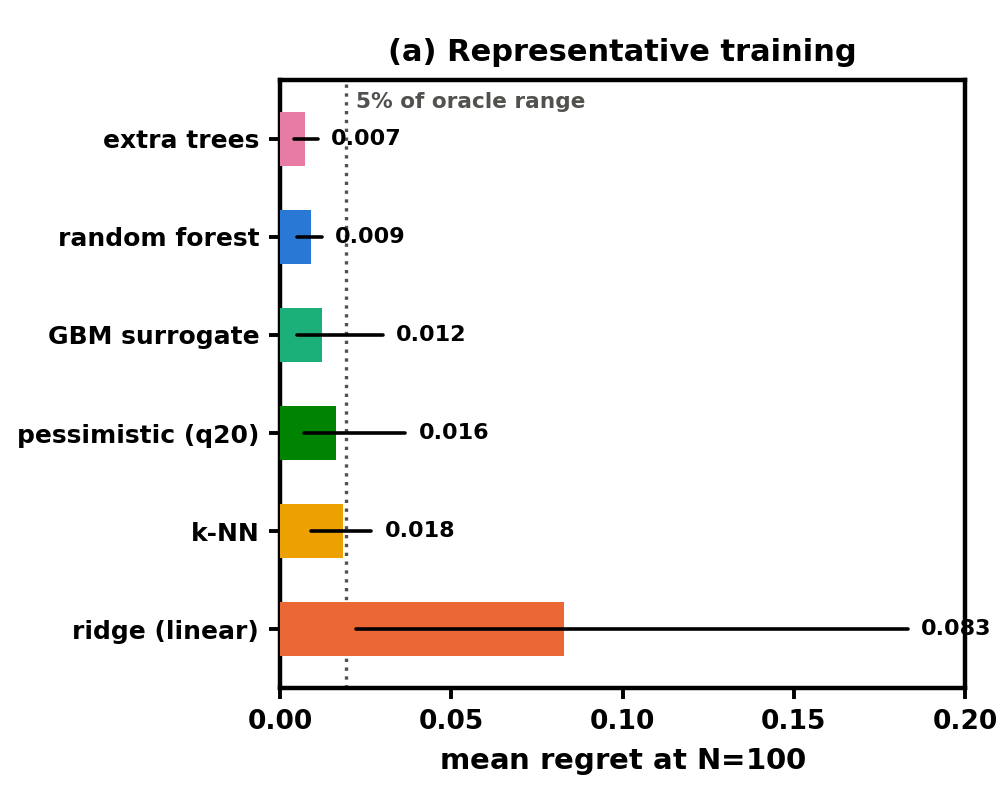}\hfill
\includegraphics[width=0.47\textwidth]{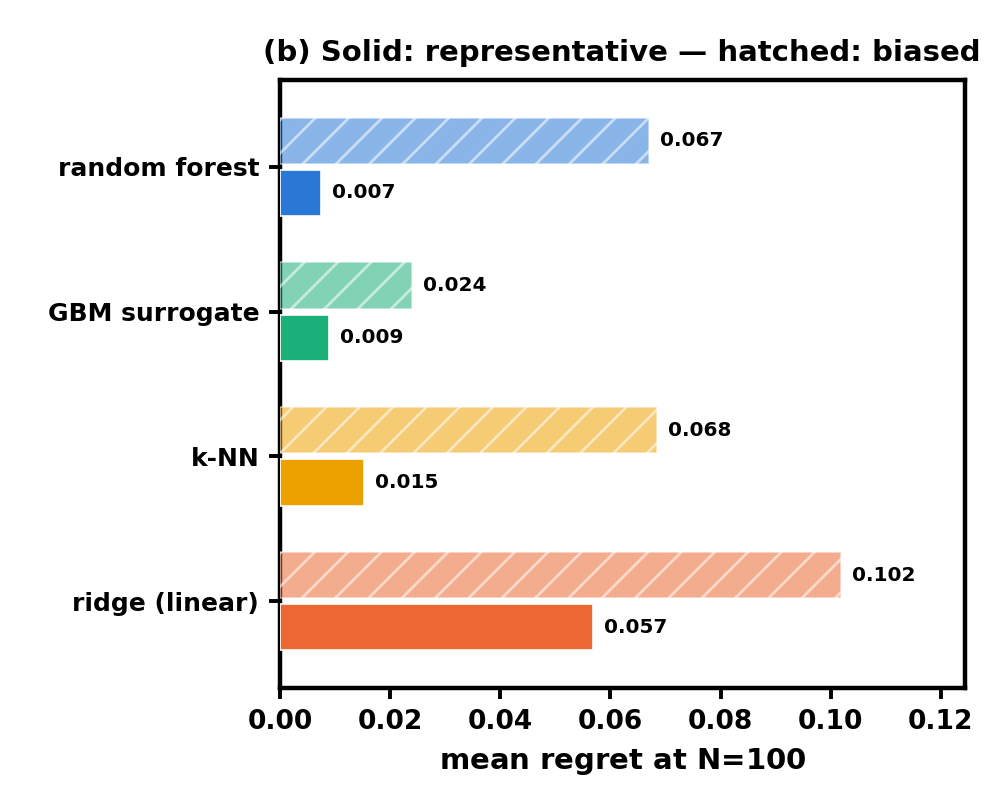}
\caption{Surrogate-zoo decision quality on Task 1: mean regret of the
selected configuration at $N=100$, by model family and training regime.}
\label{si:fig:zoo}
\end{figure}

\begin{figure}[H]
\centering
\includegraphics[width=0.55\textwidth]{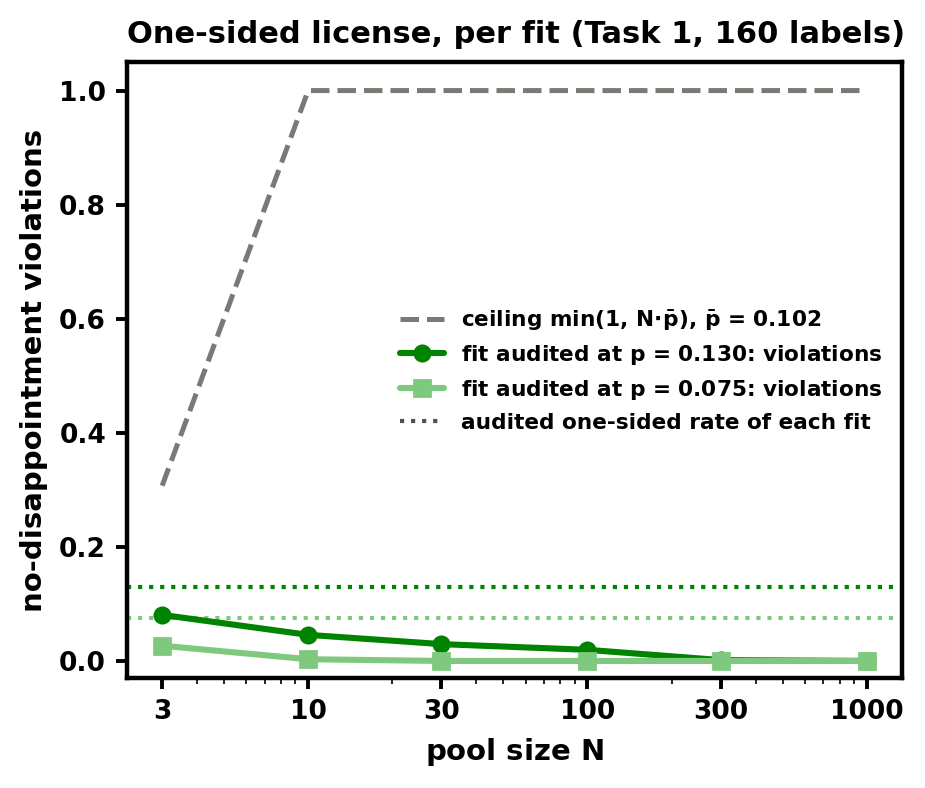}
\caption{The one-sided (no-disappointment) license on Task 1, per fit
(two 160-label quantile fits): measured violation rates stay below each
fit's audited one-sided rate at every $N$ and decay to zero; dashed
curve, the $\min(1,Np)$ ceiling at the mean audited rate.}
\label{si:fig:license}
\end{figure}

\subsection{The Six-Condition Diagnostic Study}\label{si:sec:emp-div}
The expanded diagnostic study covers all three tasks $\times$ both
regimes: six families $\times$ three training sizes $\times$ four seeds
$=72$ fits per condition, 432 total
(bootstrap 95\% CIs over fits). Spearman correlation with deployed regret
at $N=100$:

\begin{center}\small
\begin{tabular}{lccc}
\toprule
Condition & $\qsel$ & $\qinv$ & $R^2$ \\
\midrule
Task 1 / representative & $0.96$ [$0.91$--$0.98$] & $0.86$ [$0.77$--$0.90$] & $0.80$ [$0.68$--$0.87$] \\
Task 1 / biased         & $0.99$ [$0.97$--$1.00$] & $0.78$ [$0.64$--$0.86$] & $0.76$ [$0.62$--$0.85$] \\
Task 2 / representative & $0.89$ [$0.81$--$0.94$] & $0.68$ [$0.50$--$0.80$] & $0.59$ [$0.39$--$0.74$] \\
Task 2 / biased         & $0.89$ [$0.78$--$0.96$] & $0.68$ [$0.50$--$0.81$] & $0.64$ [$0.45$--$0.78$] \\
Task 3 / representative & $0.96$ [$0.92$--$0.98$] & $0.77$ [$0.65$--$0.84$] & $0.38$ [$0.13$--$0.59$] \\
Task 3 / biased         & $0.97$ [$0.95$--$0.98$] & $0.56$ [$0.36$--$0.71$] & $0.33$ [$0.08$--$0.54$] \\
\bottomrule
\end{tabular}
\end{center}

A practical-budget replication
re-estimates $\qsel$ for every fit from exactly $T_a=120$ audit rounds
(two oracle calls each, binomial noise $\pm0.03$) drawn independently of
the evaluation pools: the Spearman correlations become $0.94/0.90$
(Task 1, biased/representative), $0.83/0.81$ (Task 2), and $0.80/0.90$
(Task 3)---attenuated by estimation noise, still the best diagnostic in
all six conditions, and still CI-disjoint from $R^2$ on the
combinatorial benchmark ($[0.70,0.86]$ vs.\ $[0.08,0.54]$ biased;
$[0.84,0.94]$ vs.\ $[0.13,0.59]$ representative). The champion statistic dominates in every
condition at both budgets; its one attenuation in the full-budget table
(Task 2, $0.89$) tracks the near-tie structure of that pool ($24.8\%$ of
configurations within $\varepsilon$ of the optimum, versus $8.4\%$ on
Task 1 and $2.5\%$ on Task 3), which compresses deployed regret toward
zero---the margin-condition regime (Thm~\ref{si:thm:margin}, $\alpha$
large) in which selection matters least. One finding is neutral: on smooth
tasks under representative data, $R^2$ and inversion rates remain
serviceable; the collapse of $R^2$ is specific to the combinatorial
benchmark, and one cannot tell from the diagnostic itself when it will
occur.
Figure~\ref{si:fig:task2} shows the Task-2 replication of the
model-family comparison.

\begin{figure}[H]
\centering
\includegraphics[width=0.55\textwidth]{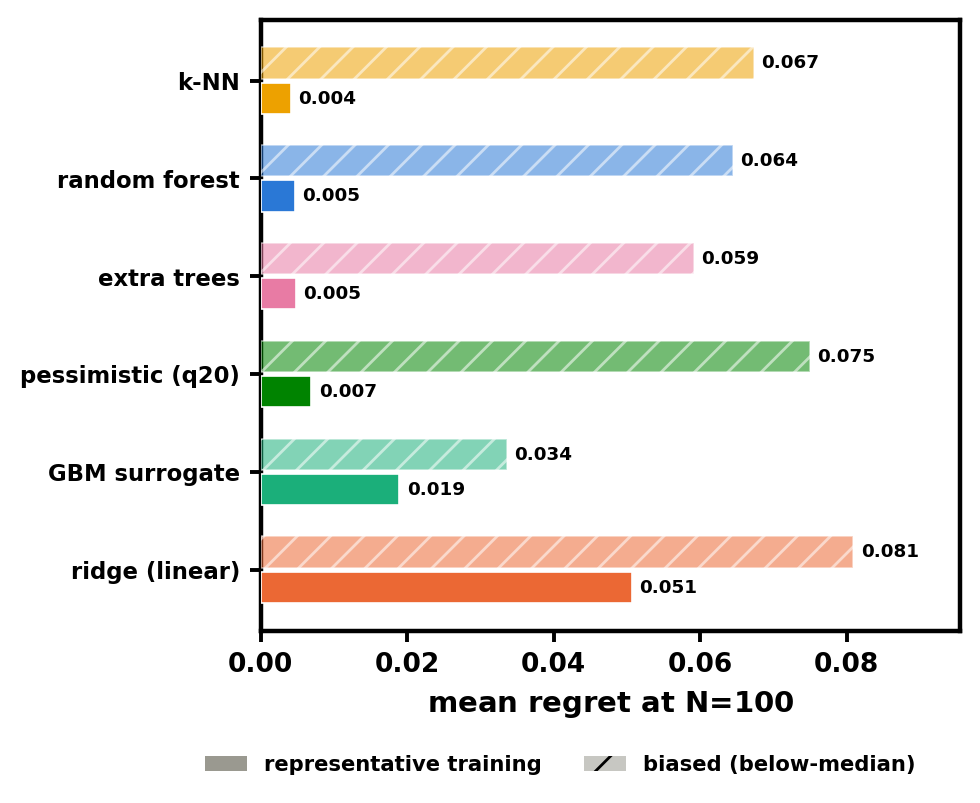}
\caption{Task 2 replication (SVM on breast-cancer data; 576 real
trainings): mean regret of the selected configuration by model family and
training regime.}
\label{si:fig:task2}
\end{figure}

\subsection{NAS-Bench-201}
On the community benchmark (all 15{,}625 architectures with real trained
accuracies; CIFAR-10 validation as the search oracle,
$\varepsilon=\tau=2$ accuracy points) every phenomenon replicates and two
sharpen. (i) The champion statistic $\qsel$ correlates with deployed
regret at $\ge0.96$ in both regimes while $R^2$ collapses in both
(Section~\ref{si:sec:emp-div})---on a recognized benchmark, accuracy is
close to uninformative about deployed performance. (ii) The linear
surrogate's deployed failure reaches $0.95$ at $N=100$ and $1.0$ at
$N=1000$ \emph{even with representative training data}: across our three
tasks, benign-versus-malign error alignment is a (fit, task, data)
property that only an audit can determine. (iii) Audited screening
achieves a $25\times$ measured cost separation at $M=100$, with champions
averaging $87.1\%$ test accuracy against a global best of $89.2\%$ at one
verification call per design.

\subsection{Licenses under Objective Drift}\label{si:sec:emp-drift}
NAS-Bench-201 carries trained validation accuracies for the same 15{,}625
architectures on three datasets, which supplies real \emph{objective}
drift: a surrogate trained and audited against CIFAR-10 validation
accuracy can be deployed against the CIFAR-100 or ImageNet16-120
objective, with the ground truth of both known exhaustively. The protocol uses three surrogate families (random
forest, GBM, ridge) $\times$ four seeds, with 200 CIFAR-10 training
labels each; quantile licenses per Theorem~\ref{si:thm:si2} at $M=100$,
$\rho=0.05$, $\delta=0.05$ ($T_a=120$ champion audits per license);
violation rates measured on 20{,}000 fresh deployment pools. Because the
champion depends only on the surrogate, selections are identical across
objectives; what drifts is the truth the license answers to.

\begin{center}\small
\begin{tabular}{lccc}
\toprule
 & home (CIFAR-10) & CIFAR-100 & ImageNet16-120 \\
\midrule
license violation rate (bound $4\rho=0.20$) & $0.03$--$0.10$ & --- & --- \\
renewed-license violation rate ($T_a=120$ calls) & --- & $0.06$--$0.13$ & $0.08$--$0.13$ \\
champion percentile under that objective & $0.79$--$0.94$ & $0.80$--$0.94$ & $0.78$--$0.93$ \\
global Spearman$(\fh,\ \text{objective})$ & $0.59$--$0.70$ & $0.61$--$0.69$ & $0.61$--$0.66$ \\
$\qsel$ re-measured against that objective & $0.04$--$0.15$ & $0.04$--$0.13$ & $0.03$--$0.14$ \\
\bottomrule
\end{tabular}
\end{center}

Three observations. (i) Every license held: 12 of 12 at home and 24 of 24
after renewal, all below the $4\rho$ ceiling---and a renewal costs the
same $120$ oracle calls as the original audit, amortized over every
decision the renewed license covers.
(ii) The license \emph{value} does not transfer (it is denominated in the
audited objective); the license \emph{procedure} does---which is the
operational content of license expiry: void on drift, re-earn cheaply.
(iii) On this benchmark family, the global diagnostic is blind to the
drift: the surrogate's global Spearman is ${\sim}0.64$ whether scored
against the audited objective ($0.59$--$0.70$) or the drifted ones
($0.61$--$0.69$ and $0.61$--$0.66$; the objectives themselves remain
rank-correlated at $0.93$--$0.97$), while champions remain ${\sim}87$th
percentile and $\qsel$ ${\sim}0.08$ under all three truths: a mid-range
global correlation is again close to uninformative about deployment,
here underselling it, complementing the overselling failure on the
biased-regime study. The audit statistic prices what matters---the top
of the ranking under selection---in both directions, at two oracle
calls per round.

\subsection{Candidate-Distribution Drift and License Renewal}
\label{si:sec:emp-ddrift}
Remark~\ref{si:rem:expiry} bounds license degradation under
proposal-distribution drift additively by $M\beta$,
$\beta=\TV(D',D)$---a worst-case bound that is vacuous at deployment
pool sizes for any non-trivial drift. This experiment measures what actually happens, with
$\beta$ known exactly: $D'=(1-w)D+w\,\mathrm{Unif}(A)$, where $A$ is the
adverse (weak) third of NB201 ($34.5\%$ of cells with ${\ge}2$
\emph{none} operations; mean accuracy $0.647$ vs.\ $0.689$ global), so
$\TV(D',D)=w(1-|A|/n)$ exactly. Twelve licensed fits (the protocol of
Section~\ref{si:sec:emp-drift}) and four drift weights are used; we test
the adverse direction only, since drift
toward stronger candidates can only raise champion values and therefore
only slackens this one-sided license:

\begin{center}\small
\begin{tabular}{lcccc}
\toprule
drift weight $w$ & $0.05$ & $0.15$ & $0.30$ & $0.60$ \\
\midrule
$\beta=\TV(D',D)$ & $0.033$ & $0.098$ & $0.197$ & $0.393$ \\
transfer bound $\min(1,4\rho+M\beta)$ & $1.0$ & $1.0$ & $1.0$ & $1.0$ \\
stale-license violation (12 fits) & $0.07$--$0.18$ & $0.06$--$0.19$ & $0.07$--$0.22$ & $0.08$--$0.27$ \\
renewed-license violation & $0.06$--$0.18$ & $0.06$--$0.15$ & $0.06$--$0.16$ & $0.06$--$0.12$ \\
\bottomrule
\end{tabular}
\end{center}

Violation rates at the home distribution were $0.06$--$0.17$, all within
the $4\rho=0.20$ ceiling.
Three observations. (i) The worst-case $M\beta$ bound is uninformative
at every tested drift (it saturates at $1$ already for $\beta=0.033$),
yet measured stale-license degradation is gradual: the ceiling is first
breached only at $\beta\approx0.2$ (max $0.218$), reaching $0.27$ at
$\beta=0.39$. The gap has a structural reason: the champion is the max
of $M$ draws, so mixing weak candidates into the pool rarely changes
the argmax until the mixture dominates. (ii) Renewal under $D'$ (the
same $T_a=120$-call audit, pools drawn from $D'$) restored validity at
every weight: 48 of 48 renewed licenses within the ceiling. (iii) Since
$\beta$ is unobservable in practice while renewal is cheap, the
operational content of Remark~\ref{si:rem:expiry} is not the bound but
the doctrine: treat licenses as void on drift and re-earn them.

\subsection{Comparison with a Marginal Conformal Floor}
\label{si:sec:emp-conformal}
The simplest deployed alternative to the audit license is a one-sided
\emph{marginal} split-conformal floor:\cite{jin2023selection} label
$n_{\mathrm{cal}}$ i.i.d.\ candidates, take the
$\lceil(n_{\mathrm{cal}}{+}1)(1{-}\alpha)\rceil$-th smallest
over-prediction score $q$, and attach the floor $L(x)=\fh(x)-q$ to any
candidate. We compare at matched oracle cost and matched nominal level
($n_{\mathrm{cal}}=T_a=120$; $\alpha=4\rho=0.2$; 16 configurations:
Tasks 1 and 3 $\times$ RF/ridge $\times$ both regimes $\times$ 2 seeds).
On \emph{fresh}
candidates from the calibration distribution the conformal floor behaves
exactly as advertised (miscoverage $0.12$--$0.23$ around the nominal
$0.20$ across the 16 configurations). At the \emph{champion} of an
$M=100$ screen---the only candidate a screening campaign acts on---its
violation rate ranged from $0.00$ to $0.99$ (Task 1 mean $0.35$; Task 3
mean $0.48$; worst configuration: biased-trained ridge on NB201,
$0.99$), while the champion-audit license held at $0.04$--$0.15$ in
every configuration, against the same $0.20$ guarantee. Tightness does
not rescue the marginal floor either: in the 6 of 16 configurations
where it happened to remain valid at champions, its mean floor sat at
the $0.68$ quantile of $f$ against the audit floor's $0.84$---the audit
was both the valid and the tighter certificate. This is
Theorem~\ref{si:thm:audit}'s audit--deployment gap acting on a
\emph{certificate}: marginal calibration, like marginal auditing,
prices errors where selection does not look. Selection-conditional
conformal methods\cite{jinren2025focal} repair exactly this and
remain the natural machinery to import where their exchangeability
requirements hold (Section~\ref{si:sec:open}); the audit license
achieves selection-validity with marginal-conformal simplicity.

\subsection{The Measured Price of Certificates}
\label{si:sec:emp-certcost}
To complement the recommendation loop of Figure~\ref{fig:loop} with the
\emph{certified} protocol it deliberately is not, we ran \textsc{Gated-SE}
(Theorem~\ref{si:thm:se}) on the full 1{,}152-config Task-1 table with
$\sigma=0.05$ noisy queries, $\delta=0.1$
(10 seeds per tolerance). At
$\varepsilon=5\%$ of the objective range, every run returned a correct
$\varepsilon$-certificate (10/10; mean certified regret $0.000$) at a
mean cost of $480{,}555$ oracle calls; at $\varepsilon=10\%$, 10/10
correct at $274{,}144$ calls. Against the 340-call uncertified
recommendation loop, the certificate premium is three orders of
magnitude ($1{,}413\times$ at $\varepsilon=5\%$)---the tabular floor and near-tie core
(Theorem~\ref{si:thm:margin}; $8.4\%$ of this pool lies within
$\varepsilon$ of the optimum) made concrete, and the quantitative case
for the licensed generative route of Theorem~\ref{si:thm:si2}, which
buys its guarantees per \emph{campaign} rather than per \emph{menu}.

\subsection{Which Fits Go Malign? Label-Free Predictors}
\label{si:sec:emp-align}
Section~\ref{si:sec:emp-tax} documented that malignancy is a per-fit
property; open problem (5) asks which fit-level quantities predict it.
As a first step, we computed four \emph{deployment-label-free} predictors for
120 Task-1 fits (ridge and GBM, train size 120, 30 seeds $\times$ both
regimes; 27 malign fits with $q(1000)>0.5$): the mean feature-space
distance
of the fit's top-10 ranked candidates to the nearest training point; the
champion prediction's overshoot above the largest training label; the
top-10's Mahalanobis distance from the training distribution; and the
top-10 predicted-value spread.

\begin{center}\small
\begin{tabular}{lcccc}
\toprule
AUC for malign ($q(1000)>0.5$) & top-dist & overshoot & Mahalanobis & top-spread \\
\midrule
ridge fits ($n=60$, 20 malign) & $0.97$ & $0.996$ & $0.93$ & $0.62$ \\
GBM fits ($n=60$, 7 malign) & $0.35$ & $0.37$ & $0.44$ & $0.53$ \\
pooled ($n=120$) & $0.64$ & $0.65$ & $0.70$ & $0.56$ \\
\bottomrule
\end{tabular}
\end{center}

The result is sharply two-sided. For the linear family, extrapolation
geometry is almost sufficient: a ridge fit whose top-ranked candidates
sit far from its training data, or whose champion prediction overshoots
everything it has seen, is almost surely malign (AUC $0.97$--$0.996$);
conversely, benign ridge fits are the ones whose top-ranked candidates
sit \emph{at} training points.
For tree ensembles the same geometry carries no usable signal in this
sample (AUC $0.35$--$0.53$ with only $7$ malign GBM fits;
Mann--Whitney $p\ge0.19$): their malignancy arises from over-fit
optimism at the very top of the ranking rather than geometric
extrapolation, and none of the four quantities detects it here. We report this exploratory finding
(one task, one training size) as evidence that per-fit alignment admits
model-\emph{specific} precursors but, so far, no model-agnostic
one---which is precisely the gap the model-agnostic selection-aware
audit fills at two oracle calls per round.

\subsection{The Label Budget: Landscapes, Policies, and Damage
Protocols}\label{si:sec:emp-budget}

This section gives the complete protocols behind
Section~\ref{sec:res-budget}, and records the one formal statement that
study rests on.

\begin{proposition}[the audit term of the budget is a floor]
\label{si:prop:floor}
Fix $N\ge2$, $\rho\in(0,1)$, $\delta\in(0,1)$, and let a license be
issued iff $N\,\bar q_{\delta}(K_T,T)\le\rho$, where $K_T$ is the
number of regret events in $T$ champion-audit rounds and
$\bar q_{\delta}$ the one-sided binomial upper confidence limit. Then
(i) no outcome with $T<T_{\min}:=\min\{T:N\bar q_{\delta}(0,T)\le\rho\}$
issues the license, for any surrogate, so every campaign pays at least
$2T_{\min}$ audit calls; (ii) $T_{\min}\le
\lceil \ln(1/\delta)/\ln(1/(1-\rho/N))\rceil = (N/\rho)\ln(1/\delta)\,
(1+O(\rho/N))$; and (iii) no alternative audit design certifies the
same deployed-failure guarantee below order $N/\rho$ calls.
\end{proposition}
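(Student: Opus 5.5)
The plan takes the three parts in order; (i) and (ii) are properties of the Clopper--Pearson limit and (iii) is inherited from Theorem~\ref{si:thm:lb}.

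For (i), the key fact is that $\bar q_\delta(k,T)$ is nondecreasing in the event count $k$ for fixed $T$. This follows because the one-sided upper limit is defined as $\sup\{q:\Prob(\mathrm{Bin}(T,q)\le k)\ge\delta\}$, and the binomial CDF at $k$ is nondecreasing in $k$. Consequently, for any outcome $K_T\ge0$ we have $N\bar q_\delta(K_T,T)\ge N\bar q_\delta(0,T)$. I would also record that $\bar q_\delta(0,T)$ is nonincreasing in $T$, so the set $\{T:N\bar q_\delta(0,T)\le\rho\}$ is an up-set and $T_{\min}$ is a genuine threshold. If $T<T_{\min}$, then $N\bar q_\delta(0,T)>\rho$, hence $N\bar q_\delta(K_T,T)>\rho$ for every outcome and every surrogate, so no license is issued. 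Since each round of the champion audit costs two oracle calls (Corollary~\ref{si:cor:ag3}), every licensed campaign pays at least $2T_{\min}$ calls.

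For (ii), I would use the closed form at zero events. The condition $\Prob(\mathrm{Bin}(T,q)=0)=(1-q)^T\ge\delta$ gives $\bar q_\delta(0,T)=1-\delta^{1/T}$. The requirement $N(1-\delta^{1/T})\le\rho$ is therefore equivalent to $T\ln(1/(1-\rho/N))\ge\ln(1/\delta)$, and the smallest integer satisfying it is the stated ceiling. This is in fact an equality, which in particular gives the claimed upper bound. For the asymptotic form, expand $\ln(1/(1-x))=x(1+O(x))$ at $x=\rho/N$. This yields $(N/\rho)\ln(1/\delta)(1+O(\rho/N))$; the ceiling adds at most $1$, which the displayed form either absorbs or states as a separate $+1$.

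For (iii), I would invoke Theorem~\ref{si:thm:lb} with $\rho_0=\rho$. Any audit design that certifies $\Prob(\mathrm{FAIL})<\rho$ soundly, and passes the safe instance $f_0$ with probability at least $1/2$, must spend at least $(\tfrac12-\delta)\lfloor N/(4\rho)\rfloor=\Omega(N/\rho)$ expected oracle queries. This requires the parameter conditions of that theorem ($\rho\le1/4$, $\delta<1/2$, $N\ge\log_2(2/\rho)$), or the small-$N$ variant in the remark following it.

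The only delicate point is in (iii): the proposition as stated does not restate these side conditions. I would therefore state (iii) under the hypotheses of Theorem~\ref{si:thm:lb}, including the nontriviality requirement on $f_0$. I would also note that the lower bound there matches the $(N/\rho)\ln(1/\delta)$ floor from (ii) up to the $\log(1/\delta)$ factor, which is exactly the sense in which "floor" is meant.
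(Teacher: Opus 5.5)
Your proposal is correct and follows the paper's proof step for step. For (i) you use monotonicity of $\bar q_\delta(k,T)$ in $k$; for (ii) you solve the closed form $\bar q_\delta(0,T)=1-\delta^{1/T}$ against $\rho/N$; for (iii) you appeal directly to Theorem~\ref{si:thm:lb}. Your added observations are accurate refinements that the paper leaves implicit: the ceiling in (ii) holds with equality, and (iii) inherits that theorem's hypotheses ($\rho\le 1/4$, $\delta<1/2$, $N\ge\log_2(2/\rho)$, and nontriviality on $f_0$), which the proposition does not restate.
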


\begin{proof}
(i) is monotonicity of $\bar q_{\delta}(k,T)$ in $k$: the bound at $k=0$
is the best achievable at depth $T$, so if even it exceeds $\rho/N$ no
event count can pass. (ii) At $k=0$ the Clopper--Pearson limit solves
$(1-\bar q)^T=\delta$, giving $\bar q_{\delta}(0,T)=1-\delta^{1/T}$;
solving $N(1-\delta^{1/T})\le\rho$ yields the stated $T$. (iii) is
Theorem~\ref{si:thm:lb}, the universal audit lower bound of
Section~\ref{si:sec:lb}, which applies to arbitrary audit designs.
\end{proof}

\paragraph{Landscapes.} All studies use tables of $12{,}000$ candidates
with descriptors drawn uniformly from $[0,1]^d$ and a noiseless oracle.
The \emph{feasible} landscape places a flat-topped optimum (a plateau of
height $120$ and radius $0.16$ in the first two descriptors, Gaussian
falloff of scale $0.004$ outside it) against three deceptive mid-value
decoys (Gaussian peaks of heights $95$, $90$, $85$ and scales
$0.012$, $0.010$, $0.015$) plus a low-amplitude sinusoidal texture
(amplitude $3$); its top-$\varepsilon$ set has mass ${\approx}0.10$, and the
champion-audit statistic returns $0.037$ for a surrogate scoring at the
truth, which is the floor the maps in
Figure~\ref{fig:budgetmaps} approach from above. The \emph{easy} variant
uses $d=2$; the \emph{hard} variant appends six independent nuisance
descriptors ($d=8$) that leave the oracle unchanged. The
\emph{crowded-race} landscape replaces the plateau with near-tied narrow
peaks (heights $100/80/70/60$, sinusoidal amplitude $25$), for which
the champion-audit statistic returns ${\approx}0.73$ even for a
surrogate scoring at the truth, at the same
$(\varepsilon,N,\rho)$. Per Remark~\ref{rem:conservative} that number
bounds what this statistic can certify here; it does not bound what a
selector can achieve, since regret for a perfect selector is zero on
this landscape as on every other. Throughout, $N=100$, $\rho=0.10$,
$\delta=0.05$, and $\varepsilon=\tau$ at $5\%$ of the oracle range.

\paragraph{Policies and evaluation.} Policies label in batches of $100$
(first batch uniform), to $3{,}200$ labels, three seeds: \emph{uniform}
(random batches); \emph{exploit-top} (label the highest-predicted
unlabeled candidates under a $120$-tree random forest refit each
batch); \emph{uncertainty} (largest tree-ensemble spread);
\emph{hybrid} (half exploit, half uniform). At each checkpoint a
$200$-tree random forest is fit to the labeled set (predictions on
labeled candidates replaced by their labels), and its selector is
audited by the champion audit for $8{,}000$ rounds. Verdicts at
smaller audit depths are evaluated on the audit's prefix, which is
distributionally identical to a shorter audit because rounds are
i.i.d.; this makes the full budget plane cost one audit per
checkpoint. Certified bounds reported are
$N\bar q_{\delta}(\text{events},T_a)$; budget-plane maps average three
seeds and smooth for display only (log-domain Gaussian,
$\sigma=0.8$ grid cells); all quoted numbers are unsmoothed.

\paragraph{Frontier measurements.} On the hard landscape,
labels-to-earnable at $\rho\in\{0.30,0.15,0.10,0.05\}$ were
$200/200/400/800$ (uniform), $200/200/400/800$ (uncertainty),
$400/800/1{,}600/3{,}200$ (hybrid), and
$800/1{,}600/1{,}600/3{,}200$ (exploit-top), against audit floors
$2T_{\min}=1{,}996/3{,}992/5{,}990/11{,}980$; the cheapest
$\rho=0.10$ license on the dense grid is $600+6{,}742=7{,}342$ calls
(uniform) versus $1{,}800+6{,}742=8{,}542$ (exploit-top). On the easy
landscape every policy is within $2\%$ of the audit-only floor. On the
crowded-race landscape no policy improves the bound at any budget: by
Proposition~\ref{si:prop:floor}(i) no outcome below $T_{\min}$ issues
a license, and above it the statistic is already saturated at its
idealized-surrogate value, so labeling cannot help.

\paragraph{Damage and repair protocols.} Starting from perfect
predictions on the easy landscape, a fraction
$f\in\{0.005,\dots,0.20\}$ of candidates is overestimated by
$s\in\{0.5,1,2,4,8\}\times\varepsilon$ (three seeds). The certified
bound stays at the $k{=}0$ Clopper--Pearson floor for $s\le\varepsilon$ at every $f$
tested; the $\rho=0.10$ boundary lies at $s\approx1.2$--$2.3\,
\varepsilon$ across two decades of $f$; the case
$(f,s)=(0.05,\varepsilon)$ certifies at bound $0.051$ after
${\approx}3{,}800$ rounds. For failing models ($s=2\varepsilon$),
predictions are corrected (set to oracle values, one call each) in
doubling batches under two orders---decreasing predicted value
(``verify the champions'') versus uniformly at random---until the
license is restored: $50$--$530$ corrections suffice under the
targeted order (rising with $f$), while random correction restored no
tested case within $6{,}400$.

\section{Relation to Prior Work}\label{si:sec:novelty}
This section separates what is classical from what we claim, result by
result, and states explicitly where an adjacent literature requires
differentiation. We take the most conservative reading available in every
case: where a result is standard in another setting, we say so and cite
it, and we claim only the assembly.

\subsection{Classical or Known Results, Cited and Not Claimed}
Certificates and elimination rules (Prop.~\ref{si:prop:cert}) are LUCB/UGapE/action-elimination;\cite{si-kalyanakrishnan2012pac,si-gabillon2012best,si-evendar2006action} the $\delta/t^2$ schedule and its LIL sharpenings;\cite{si-evendar2006action,si-jamieson2014best,si-jamieson2014lil,si-howard2021time} the regret decomposition;\cite{si-jin2021pessimism,smith2006optimizer} simulation lemma and action-gap stability;\cite{si-kearns2002near,si-singh1994upper,si-farahmand2011action} sub-Gaussian maxima and winner's-curse lineage;\cite{si-boucheron2013concentration,smith2006optimizer,capen1971competitive,si-berk2013valid} the change-of-measure technique of Theorem~\ref{si:thm:noprofit};\cite{kaufmann2016complexity,si-mannor2004sample} the upper bound of Theorem~\ref{si:thm:audit}(i) is an elementary union bound.

\subsection{Adjacent Work Requiring Explicit Differentiation}
\begin{itemize}[leftmargin=1.4em,itemsep=2pt]
\item \emph{Theorem~\ref{si:thm:ordinal}.} Sufficiency of exact order preservation for whole-space argmax is Theorem~1 of Tan et al.;\cite{lyu2025rank} ranking-error upper bounds on regret appear in Lyu et al.\cite{si-lyu2026learnability} The pool-uniform, $\varepsilon$-margin \emph{if-and-only-if} was not found; we position it as a clarifying lemma rather than a principal contribution.
\item \emph{Theorem~\ref{si:thm:noprofit}.} Verification-versus-identification equivalence is known for structured bandits;\cite{karnin2016verification} sequential tests for ``is this arm best'' with matching lower bounds exist.\cite{kaufmann2016complexity,si-kaufmann2021mixture} Our framing (auditing an \emph{external surrogate's} decision, tolerance inflating the effective gap) was not found stated; the mathematical content is standard.
\item \emph{Theorem~\ref{si:thm:pessimism}.} This is the result with the greatest overlap with prior work. Conformal selection controls FDR among prediction-selected units;\cite{jin2023selection} selection-conditional conformal coverage---including top-$K$ and hence argmax selection---delivers one-sided post-selection bounds with exchangeable calibration and no union-bound $N$-factor;\cite{jinren2025focal} coverage for designed/selected candidates under feedback covariate shift exists for MBO.\cite{fannjiang2022conformal} What remains ours is only the assumption-light packaging (abstract audited rate $p$, explicit $Np$ deployed-error accounting, license framing); where exchangeability holds, the machinery of Jin and Ren\cite{jinren2025focal} is the sharper instrument and should be preferred.
\item \emph{Theorem~\ref{si:thm:audit}.} The qualitative phenomenon---selection amplifies rare over-estimates---is the optimizer's curse\cite{smith2006optimizer} and appears empirically in verifier-based resampling\cite{stroebl2024flaws} and reward-overoptimization studies;\cite{gao2023scaling,si-beirami2024theoretical} deployment-simulating evaluation has been advocated empirically;\cite{si-williams2026simulating} audit-cost lower bounds of a different origin appear in Srivastava.\cite{si-srivastava2026limits} We found no prior statement of the matching worst-case construction making $\Theta(\min(1,Np))$ tight \emph{as a function of an audited failure rate}, nor of the $\Omega(N/\rho)$-versus-$O(\rho^{-1}\log(1/\delta))$ oracle-query separation between passive and selection-aware audit designs; these two elements are the defensible core of Theorem~\ref{si:thm:audit}.
\end{itemize}

\subsection{The Claimed Contributions}
\begin{enumerate}[leftmargin=1.4em,itemsep=2pt]
\item The \textbf{separation-principle pair} (Thms~\ref{si:thm:invariance},~\ref{si:thm:contamination}): a certification architecture provably invariant to arbitrary pseudo-label bias, with a probability-one failure construction showing the architecture is necessary, in an adaptive inverse-design loop where the surrogate's own outputs re-enter as labels. Checked against four components (adaptive search; surrogate labels inserted as observations; decision-stability abstention; a regret--cost guarantee under adaptive selection), no paper we located has all four; the nearest, each with three of the four, are the selective-sampling framework of Sekhari et al.\cite{si-sekhari2023selective} (no labels entering the learner), active-query RLHF\cite{si-ji2024rlhf} (uncertainty gate, not decision stability), MF-GP-UCB\cite{kandasamy2019mfbo} (given physical fidelities, not the loop's own surrogate), and CARE\cite{si-wang2025care} (no guarantee).
\item The \textbf{margin-law oracle-complexity bound} for certified search over a finite pool in three regimes (Thm~\ref{si:thm:margin}); the condition and technique are known in adjacent settings,\cite{si-perchet2013bandits,si-farahmand2011action,si-castro2008minimax,si-simchowitz2019gap} but we did not find this statement assembled.
\item The \textbf{audit--deploy separation} of Theorem~\ref{si:thm:audit}(ii)(iii), as above.
\item Sharp two-directional statements of the accuracy/decision separations (Thm~\ref{si:thm:r2}; known in spirit\cite{si-cameron2022perils,si-elmachtoub2022smart,si-lyu2026learnability}).
\end{enumerate}
No literature search can establish the absence of prior work, and these
claims should be read as the outcome of a systematic but finite one.

\subsection{Positioning of the Strict-Improvement Results}
Theorem~\ref{si:thm:si1} is known: the $\Omega(1/p_v)$ lower bound
is classical quantile-bandit
territory;\cite{chaudhuri2017quantile,aziz2018pure} the verifiable-versus-unverifiable distinction of Katz-Samuels and Jamieson\cite{si-katzsamuels2020good} is the closest conceptual neighbor for audit-conditioning. Theorem~\ref{si:thm:si2} is a recombination: calibration rates and abstention are Learn-then-Test and selective-guarantee territory,\cite{si-angelopoulos2025ltt,si-geifman2017selective} selection-conditional validity is that of Jin and Ren;\cite{jinren2025focal} the nearest single relatives are best-arm identification with LLM judges\cite{ao2026llmjudges} (cheap LLM judge + limited human audits; no quantile license, no separation) and MF-GP-UCB\cite{kandasamy2019mfbo} (\emph{assumes} the cheap-signal error bound that we \emph{audit}). For Corollary~\ref{si:cor:si3} we located no counterpart: no screened-versus-unscreened certified-oracle-cost separation with a matching baseline lower bound appears in cascades, virtual screening (enrichment factors are never converted to budget theorems), conformal selection, multi-fidelity BO, or best-of-$n$; the nearest neighbor requiring differentiation is the coverage-audit bound of Anthony and Salehzadeh Nobari\cite{si-anthony2026audit} (audit cost of a screen's missed recall---same $1/p$ family, different certified object). The ingredients of Theorem~\ref{si:thm:ag1} are classical---record values and exchangeability---but we did not find the audit statement itself. For the tightness half of Theorem~\ref{si:thm:ag2} we located no $\Theta(N^2)$ matching-gap statement in learning-to-rank regret transfer (pairwise-to-listwise bounds are one-directional\cite{si-chen2009ranking}), noisy max-finding, or selective inference. The prescription of Corollary~\ref{si:cor:ag3} appears to be new, with risk-limiting election audits\cite{stark2012rla} as precedent. One further neighbor deserves explicit treatment: conformal candidate certification for offline
MBO\cite{si-choi2026ccc} attaches calibrated one-sided lower bounds to proposed designs, but its guarantee is, by that paper's own statement, \emph{marginal over a fresh candidate} and does not imply post-selection validity conditional on selection---i.e., it is the non-selection-aware ancestor of our Theorem~\ref{si:thm:si2} licenses, and its explicit disclaimer marks precisely the gap the selection-aware audit fills.

\subsection{Proxy-Assisted Bandits and Winner's-Curse Corrections}
Five further neighbors require differentiation.
(1)~PROBE:\cite{ma2026probe} fixed-confidence best-arm
identification where every costly pull is paired with a cheap correlated
proxy of \emph{known} marginal mean; control-variate analysis attains
near-oracle sample complexity. The distinction is fixed-menu
identification with an assumed proxy model versus our generative-goal
certification with a \emph{measured} license: their guarantee consumes
proxy correlation as an input, whereas our audit outputs it. (2)~MLA-UCB:\cite{si-ji2025mlaucb}
cumulative-regret bandits with ML-generated surrogate rewards, provably
robust to arbitrary surrogate-mean misalignment. The distinction is
regret minimization versus certification, with no license object.
(3)~Analytic\cite{iyengar2023oic} and order-statistic\cite{mclatchie2024order}
corrections for the optimizer's-curse bias of a selected
maximum---the estimation-side complement of
Theorem~\ref{si:thm:evt}'s worst-case bounds (they de-bias the reported
value; we bound and audit the deployed failure). (4)~Instance-optimal multi-fidelity best-arm
identification\cite{poiani2024mfbai}---the sharpest
form of the assumed-fidelity paradigm that
Theorem~\ref{si:thm:si2} replaces with a measured one.
(5)~Conditional inference for winners\cite{si-bakshi2026winners} and the
inflated-argmax line:\cite{si-adrian2025inflated} post-selection inference and stability for
selected winners---inference on the winner's \emph{value} after
selection, complementary to our pre-deployment licensing of the
\emph{selector}. For Theorem~\ref{si:thm:lb}, no lower bound over
arbitrary audit designs for a deployed-selection guarantee was located in
the property-testing, active-testing,\cite{si-kossen2021active} or
safety-evaluation\cite{si-srivastava2026limits} literatures; the nearest results bound
passive-sample audits only. Theorem~\ref{si:thm:noisy}'s slack-not-rounds
form was not found; noisy-quantile analyses assume margins rather than
price them.

\section{Limitations and Open Problems}\label{si:sec:open}
The theory presented here settles five questions: strict improvement for
value targets (the dichotomy of Section~\ref{si:sec:si}); the ordinal
audit gap, $\Theta(N)$ against $\Theta(N^2)$, tight; the audit-design
floor, in the strong total-query form of Theorem~\ref{si:thm:lb} rather
than merely over two-call designs; the noisy-oracle license
(Theorem~\ref{si:thm:noisy}); and the policy-selection half of the
sequential extension, with the replay-buffer contamination result made
formal (Section~\ref{si:sec:seq}). Five questions remain open.
(1) \textbf{Regret-type strict improvement} in structured classes
(linear/RKHS)---the tabular floor of Remark~\ref{si:rem:floor} forbids
it in pools, and Theorem~\ref{si:thm:si2} licenses values, not regret;
this is the principal
open problem. (2) \textbf{Credit assignment}: certifying a policy
\emph{improvement} step (not a selection among $K$ fixed policies) from
real transitions only, in the compounding-error regime
(Remark~\ref{si:rem:seq-open}); the selection half is settled by
Corollary~\ref{si:cor:seq}. (3) Sharper selection-conditional licenses by importing the
selection-conditional conformal machinery of Jin and Ren\cite{jinren2025focal}
into the loop, where exchangeability survives
adaptivity. (4) Constants: the gap between the
$(\tfrac12-\delta)\lfloor N/4\rho_0\rfloor$ floor of
Theorem~\ref{si:thm:lb} and the champion audit's
$O((N/\rho_0)\log(1/\delta))$ upper bound, and noise-optimal audit
designs interpolating between Theorem~\ref{si:thm:noisy}'s slack and
repeated-query averaging. (5) A distributional theory of the per-fit
benign/malign bimodality documented in Section~\ref{si:sec:emp-tax}:
which fit-level quantities of a trained surrogate predict its alignment?
Section~\ref{si:sec:emp-align} gives a first empirical
answer---extrapolation geometry is nearly sufficient for linear fits (AUC up to
$0.996$) yet carries no signal for tree ensembles---so the open problem
sharpens to: is there a model-agnostic, deployment-label-free precursor
of malignancy, or is the audit's model-agnosticism essential?

Our guarantees are for the single-decision (pool/generative) setting; results are distribution-free but finite-sample constants are not optimized; the strict-improvement license certifies \emph{values}, not regret, and all cost gains are conditional on audits passing---we view the conditionality as a feature (trust is measured, never assumed), but it means no unconditional speedup is claimed.

\section{Conclusion}
Train on anything; certify only with clean oracle statistics; audit where the search actually points; re-audit when anything moves. Each clause of that sentence is now a theorem with an experiment attached. The framework converts ``is this surrogate good enough to trust?'' from a judgment call about accuracy into a measurable, license-shaped question---one we intend to ask, next, of surrogate models proposed across the design sciences.

\section{Numerical Verification and Reproducibility}\label{si:sec:verify}
Every checkable claim is exercised by scripted numerical tests (four
suites comprising more than 50 checks, all of which pass; to be released
with a subsequent \textsc{ModelAssay} release): exact algebra of
Theorems~\ref{si:thm:r2}--\ref{si:thm:evt}; empirical anytime coverage of
Lemma~\ref{si:lem:cs} under an adversarial adaptive query rule; the
deterministic failure trace of Theorem~\ref{si:thm:contamination};
exhaustive verification of the Theorem~\ref{si:thm:ordinal} equivalence
over all pools of small ground sets; Monte Carlo agreement of
Theorem~\ref{si:thm:audit}(ii) to three decimals; the cost-regime
behavior of Theorem~\ref{si:thm:margin}; the unsafe-instance parameters
and hitting-probability identity of Theorem~\ref{si:thm:lb}; and the
noisy-license slack of Theorem~\ref{si:thm:noisy}, for which the
asymmetric mean-zero noise counterexample of
Remark~\ref{si:rem:noisyslack} is simulated directly: the $\gamma=0$
license fails in $100\%$ of simulated audits while the slack license
holds in all of them (symmetric noise cannot break the no-slack license;
it only inflates the rate to $8\rho$). All experiments and verification
suites were re-run from scratch in a fresh computational environment,
reproducing every reported statistic to the third decimal.

\end{document}